\def\eqref#1{equation~\ref{#1}}
\def\1{\bm{1}}
\def\rmC{{\mathbf{C}}}
\def\rmI{{\mathbf{I}}}
\def\rmM{{\mathbf{M}}}
\def\rmN{{\mathbf{N}}}
\def\rmR{{\mathbf{R}}}
\DeclareMathAlphabet{\mathsfit}{\encodingdefault}{\sfdefault}{m}{sl}
\SetMathAlphabet{\mathsfit}{bold}{\encodingdefault}{\sfdefault}{bx}{n}
\DeclareMathOperator*{\argmax}{arg\,max}
\newcommand\crule[3][black]{\textcolor{#1}{\rule{#2}{#3}}}
\setlist[itemize]{leftmargin=*}
\newtheorem{theorem}{Theorem}
\newtheorem{corollary}{Corollary}
\newtheorem{lemma}{Lemma}
\newtheorem{definition}{Definition}
\newif\ifsubmit
\newcommand{\bo}[1]{}
\newcommand{\yang}[1]{}
\newcommand{\jingkang}[1]{}
\newcommand{\bo}[1]{{\color{blue}[Bo: #1]}}
\newcommand{\yang}[1]{{\color{magenta}[Yang: #1]}}
\newcommand{\jingkang}[1]{{\color{brown}[Jingkang: #1]}}
\title{Reinforcement Learning with Perturbed Rewards}
\author{
Jingkang Wang \\
University of Toronto \& Vector Institute \\
Toronto, Canada \\
\texttt{wangjk@cs.toronto.edu}
\And
Yang Liu \\
University of California, Santa Cruz \\
California, USA \\
\texttt{yangliu@ucsc.edu}
\And
Bo Li \\
University of Illinois, Urbana–Champaign \\
Illinois, USA \\
\texttt{lbo@illinois.edu}
}
\begin{document}

\maketitle

\begin{abstract}
Recent studies have shown that reinforcement learning (RL) models are vulnerable in various noisy scenarios.
For instance, the observed reward channel is often subject to noise in practice (e.g., when rewards are collected through sensors), and is therefore not credible. In addition, for applications such as robotics, a deep reinforcement learning (DRL) algorithm can be manipulated to produce arbitrary errors by receiving corrupted rewards.
%
%
In this paper, we consider noisy RL problems with \textit{perturbed rewards}, which can be approximated with a confusion matrix.
%
We develop
a robust RL framework that enables agents to learn in noisy environments where only perturbed rewards are observed. Our solution framework builds on existing RL/DRL algorithms and firstly addresses the \textbf{biased} noisy reward setting without any assumptions on the true distribution
(e.g., zero-mean Gaussian noise as made in previous works).
The core ideas of our solution include estimating a reward confusion matrix and defining a set of unbiased surrogate rewards. We prove the convergence and sample complexity of our approach. 
Extensive experiments on different DRL platforms show that trained policies based on our estimated surrogate reward can achieve higher expected rewards, and converge faster than existing baselines. For instance, the state-of-the-art PPO algorithm is able to obtain 84.6\% and 80.8\% improvements on \emph{average score} for five Atari games, with error rates as 10\% and 30\% respectively. 
\end{abstract}

\section{Introduction}


%
Designing a suitable reward function plays a critical role in building reinforcement learning models for real-world applications. 
Ideally, one would want to customize reward functions to achieve application-specific goals~\cite{hadfield2017inverse}.
In practice, however, it is difficult to design a reward function that produces credible rewards in the presence of noise. 
This is because the output from any reward function is subject to multiple kinds of randomness:
\begin{itemize}
    \item \textit{Inherent Noise}. For instance, sensors on a robot will be affected by physical conditions such as temperature and lighting, and therefore will report back noisy observed rewards.
    \item \textit{Application-Specific Noise}. In machine teaching tasks~\cite{Loftin2014}, when an RL agent receives feedback/instructions, different human instructors might provide drastically different feedback that leads to biased rewards for machine.
    \item \textit{Adversarial Noise}. 
    ~\citeauthor{huang2017adversarial} have shown that by adding adversarial perturbation to each frame of the game, they can mislead pre-trained RL policies arbitrarily. 
\end{itemize}

Assuming an arbitrary noise model makes solving this noisy RL problem extremely challenging. Instead, we focus on a specific noisy reward model which we call \emph{perturbed rewards}, where the observed rewards by RL agents are learnable. The perturbed rewards are generated via a confusion matrix that flips the true reward to another one according to a certain distribution. This is not a very restrictive setting~\cite{DBLP:conf/ijcai/EverittKOL17} to start with, even considering that the noise could be adversarial: 
For instance, adversaries can manipulate sensors via reversing the reward value. 

In this paper, we develop an unbiased reward estimator aided robust framework that enables an RL agent to learn in a noisy environment with observing only perturbed rewards. 
The main challenge is that the observed rewards are likely to be biased, and in RL or DRL the accumulated errors could amplify the reward estimation error over time. To the best of our knowledge, this is the first work addressing robust RL in the biased rewards setting (existing work need to assume the unbiased noise distribution). We do not require any assumption on the knowledge of true reward distribution or adversarial strategies, other than the fact that the generation of noises follows a reward confusion matrix. 
We address the issue of estimating the reward confusion matrices by proposing an efficient and flexible estimation module for settings with deterministic rewards. 

\citeauthor{DBLP:conf/ijcai/EverittKOL17} provided preliminary studies for this noisy reward problem and gave some general negative results. The authors proved a \emph{No Free Lunch} theorem, which is, without any assumption about what the reward corruption is, all agents can be misled.
Our results do not contradict with the results therein, as we consider a stochastic noise generation model (that leads to a set of perturbed rewards).

We analyze the convergence and sample complexity for the policy trained using our proposed method based on surrogate rewards, using $Q$-Learning as an example. We then conduct extensive experiments on OpenAI Gym~\cite{gym} 
and show that the proposed reward robust RL method achieves comparable performance with the policy trained using the true rewards. In some cases, our method even achieves higher cumulative reward - this is surprising to us at first, but we conjecture that the inserted noise together with our noise-removal unbiased estimator add another layer of exploration, which proves to be beneficial in some settings. 



Our contributions are summarized as follows: (1) 
We formulate and generalize the idea of defining a simple but effective unbiased estimator for true rewards 
under reinforcement learning setting. The proposed estimator helps guarantee the convergence to the optimal policy even when the RL agents only have noisy observations of the rewards. 
(2) We analyze the convergence to the optimal policy and the finite sample complexity of our reward-robust RL methods, using $Q$-Learning as the example. (3) Extensive experiments on OpenAI Gym show that our proposed algorithms perform robustly even at high noise rates. Code is online available: \url{https://github.com/wangjksjtu/rl-perturbed-reward}.

\subsection{Related Work}
\paragraph{Robust Reinforcement Learning} 
It is known that RL algorithms are vulnerable in noisy environments~\cite{rlblogpost}. Recent studies ~\cite{huang2017adversarial,DBLP:journals/corr/KosS17,DBLP:conf/ijcai/LinHLSLS17} show that learned RL policies can be easily misled with small perturbations in observations. The presence of noise is very common in real-world environments, especially in robotics-relevant applications~\cite{deisenroth2011learning,Loftin2014}. Consequently, robust RL algorithms have been widely studied, aiming to train a robust policy that is capable of withstanding perturbed observations~\cite{DBLP:conf/nips/TehBCQKHHP17,DBLP:conf/icml/PintoDSG17,gu2018adversary} or transferring to unseen environments~\cite{DBLP:journals/corr/RajeswaranGLR16,DBLP:journals/corr/abs-1710-11248}. However, these algorithms mainly focus on noisy vision observations, instead of observed rewards. Some early works~\cite{pendrith1997estimator,moreno2006noisy,DBLP:conf/icml/Strens00,Romoff2018} on noisy reward RL rely on the knowledge of \underline{unbiased} noise distribution, which limits their applicability to more general \underline{biased} rewards settings. A couple of recent works \cite{DBLP:journals/mor/LimXM16,DBLP:journals/corr/RoyXP17} have looked into a parallel question of training robust RL algorithms with uncertainty in models.





\paragraph{Learning with Noisy Data} Learning appropriately with biased data has received quite a bit of attention in recent machine learning studies \cite{natarajan2013learning,scott2013classification,scott2015rate,sukhbaatar2014learning,van2015learning,menon2015learning}. The idea of this line of works is to define unbiased surrogate loss functions to recover the true loss using the knowledge of the noise. Our work is the first to formally establish this extension both theoretically and empirically. Our quantitative understandings will provide practical insights when implementing reinforcement learning algorithms in noisy environments.

\section{Problem Formulation and Preliminaries}

In this section, we define our problem of learning from perturbed rewards in reinforcement learning. Throughout this paper, we will use \emph{perturbed reward} and \emph{noisy reward} interchangeably, considering that the noise could come from both intentional perturbation and natural randomness.
In what follows, we formulate our Markov Decision Process (MDP) and reinforcement learning (RL) problem with perturbed rewards.

\subsection{Reinforcement Learning: The Noise-Free Setting}

Our RL agent interacts with an unknown environment and attempts to maximize the total of its collected reward. The environment is formalized as a Markov Decision Process (MDP), denoting as $\mathcal{M} = \langle \mathcal{S}, \mathcal{A}, \mathcal{R}, \mathcal{P}, \gamma\rangle$. At each time $t$, the agent in state $s_t \in \mathcal{S}$ takes an action $a_t \in \mathcal{A}$, which returns a reward $r(s_t,a_t,s_{t+1}) \in \mathcal R$ (which we will also shorthand as $r_t$) \footnote{We do not restrict the reward to deterministic in general, except for when we need to estimate the noises in the perturbed reward (Section 3.3).}, and leads to the next state $s_{t+1} \in \mathcal{S}$ according to a transition probability kernel $\mathcal{P}$. $\mathcal{P}$ encodes the probability $\mathbb{P}_a(s_{t}, s_{t+1})$, and commonly is unknown to the agent. The agent's goal is to learn the optimal policy, a conditional distribution $\pi(a | s)$ that  maximizes the state's value function. The value function calculates the cumulative reward the agent is expected to receive given it would follow the current policy $\pi$ after observing the current state $s_t$:
$
V^\pi(s) = \mathbb{E}_\pi \left[ \sum_{k=0}^\infty \gamma^k r_{t+k+1} \mid s_t = s   \right], 
$
where $0\leq \gamma\leq 1$ is a discount factor ($\gamma = 1$ indicates an undiscounted MDP setting~\cite{DBLP:conf/icml/Schwartz93,DBLP:journals/ior/Sobel94,Kakade2003OnTS}). Intuitively, the agent evaluates how preferable each state is, given the current policy. From the Bellman Equation, the optimal value function is given by
$
V^\ast(s) = \max_{a \in \mathcal{A}} \sum_{s_{t+1} \in \mathcal{S}} \mathbb P_a(s_t, s_{t+1})\left[ r_t + \gamma V^\ast(s_{t+1}) \right].
$
It is a standard practice for RL algorithms to learn a state-action value function, also called the $Q$-function. $Q$-function denotes the expected cumulative reward if agent chooses $a$ in the current state and follows $\pi$ thereafter:
$
Q^\pi(s,a) =\mathbb{E}_\pi\left[ r(s_t,a_t,s_{t+1}) + \gamma V^\pi(s_{t+1}) \mid s_t = s, a_t = a \right].
$

\subsection{Perturbed Reward in RL}
In many practical settings, the RL agent does not observe the reward feedback perfectly. 
We consider the following MDP with perturbed reward, denoting as $\mathcal{\tilde{M}} = \langle \mathcal{S}, \mathcal{A}, \mathcal{R}, C, \mathcal{P}, \gamma\rangle$\footnote{The MDP with perturbed reward can equivalently be defined as a tuple $\mathcal{\tilde{M}} = \langle \mathcal{S}, \mathcal{A}, \mathcal{R}, \mathcal{\tilde{R}}, \mathcal{P}, \gamma\rangle$, with the perturbation function $C$ implicitly defined as the difference
between $\mathcal{R}$ and $\mathcal{\tilde{R}}$.}: instead of observing $r_t \in \mathcal{R}$ at each time $t$ directly (following his action), our RL agent only observes a perturbed version of $r_t$, denoting as $\tilde{r}_t \in \mathcal{\tilde{R}}$. For most of our presentations, we focus on the cases where $\mathcal{R}$, $\mathcal{\tilde{R}}$ are finite sets; but our results generalize to the continuous reward settings with discretization techinques. 

The generation of $\tilde{r}$ follows a certain function $C: \mathcal{S} \times \mathcal{R} \rightarrow \mathcal{\tilde{R}}$. To let our presentation stay focused, we consider the following state-independent
flipping error rates model: if the rewards are binary (consider $r_+$ and $r_-$), $\tilde{r}(s_t,a_t,s_{t+1})$ ($\tilde{r}_t$) can be characterized by the following noise rate parameters $e_+,e_-$: 
$
e_+ = \mathbb{P}(\tilde{r}(s_t,a_t,s_{t+1}) = r_-| r(s_t,a_t,s_{t+1}) = r_+),
e_- = \mathbb P(\tilde{r}(s_t,a_t,s_{t+1}) = r_+| r(s_t,a_t,s_{t+1}) = r_-)
$
. When the signal levels are beyond binary, suppose there are $M$ outcomes in total, denoting as $[R_0, R_1, \cdots, R_{M - 1}]$. $\tilde{r}_t$ will be generated according to the following confusion matrix $\rmC_{M\times M}$
where each entry $c_{j,k}$ indicates the flipping probability for generating a perturbed outcome:
$
c_{j,k} = \mathbb P(\tilde{r}_t = R_k|r_t=R_j).
$
Again we'd like to note that we focus on settings with finite reward levels for most of our paper, but we provide discussions later 
on how to handle continuous rewards. 

In the paper, we also generalize our solution to the case without knowing the noise rates (i.e., the reward confusion matrices) for settings in which the rewards for each (state, action) pair is deterministic, which is different from the assumption of knowing them as adopted in many supervised learning works \cite{natarajan2013learning}. Instead we will estimate the confusion matrices in our framework. 

\section{Learning with Perturbed Rewards}
In this section, we first introduce an unbiased estimator for binary rewards in our reinforcement learning setting when the error rates are known. This idea is inspired by~\cite{natarajan2013learning}, but we will extend the method to the multi-outcome, as well as the continuous reward settings. 

\subsection{Unbiased Estimator for True Reward}
\label{sec:method}
With the knowledge of noise rates (reward confusion matrices), we are able to establish an unbiased approximation of the true reward in a similar way as done in \cite{natarajan2013learning}. We will call such a constructed unbiased reward as a \emph{surrogate reward}. To give an intuition, we start with replicating the results for binary reward $\mathcal R = \{r_{-},r_{+}\}$ in our RL setting:
\begin{lemma}
\label{lemma:1}
Let $r$ be bounded. Then, if we define,
\begin{align}
\begin{array}{l}
\label{eq:binary}
\hat{r}(s_t,a_t,s_{t+1}) := 
\begin{cases}
\frac{(1-e_{-})\cdot r_{+}-e_{+}\cdot r_{-}}{1-e_+-e_-} & (\tilde{r}(s_t,a_t,s_{t+1}) = r_+) \\
\frac{(1-e_{+})\cdot r_{-}-e_{-}\cdot r_{+}}{1-e_+-e_-} & (\tilde{r}(s_t,a_t,s_{t+1}) = r_-)
\end{cases}
\end{array}
\end{align}
we have for any $r(s_t,a_t,s_{t+1})$,
$\mathbb E_{\tilde{r}|r}[\hat{r}(s_t,a_t,s_{t+1})] = r(s_t,a_t,s_{t+1}).$
\end{lemma}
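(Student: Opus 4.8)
The plan is to prove the claim by a direct case analysis on the two possible values of the true reward, exploiting the fact that the surrogate $\hat{r}$ is a deterministic function of the observed perturbed reward $\tilde{r}$, and that $\tilde{r}$ takes only the two values $r_+$ and $r_-$. Consequently $\mathbb{E}_{\tilde{r}|r}[\hat{r}]$ reduces to a two-term weighted average whose weights are precisely the flipping probabilities determined by $e_+$ and $e_-$, and the whole argument becomes a short algebraic verification.

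First I would fix $r(s_t,a_t,s_{t+1}) = r_+$. By the definition of the noise rates, $\tilde{r} = r_+$ with probability $1-e_+$ and $\tilde{r} = r_-$ with probability $e_+$. Substituting the two branches of the definition of $\hat{r}$ and collecting terms yields an expression of the form $\alpha\, r_+ + \beta\, r_-$ divided by $1-e_+-e_-$, where $\alpha$ and $\beta$ are explicit polynomials in $e_+,e_-$. I expect $\beta$ to cancel identically, while $\alpha$ collapses to $(1-e_+)(1-e_-) - e_+ e_- = 1 - e_+ - e_-$, which cancels the denominator and leaves exactly $r_+$.

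Next I would repeat the computation for $r = r_-$. By the symmetry of the construction (interchanging the roles of $e_+ \leftrightarrow e_-$ and $r_+ \leftrightarrow r_-$), the coefficient of $r_+$ now vanishes and the coefficient of $r_-$ again reduces to $1 - e_+ - e_-$, so the conditional expectation equals $r_-$. Combining the two cases establishes $\mathbb{E}_{\tilde{r}|r}[\hat{r}] = r$ for both admissible values of the true reward, which is precisely the assertion.

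The computation itself is routine, so there is no deep obstacle; the only genuine point to watch is that the construction is well-defined, i.e.\ that the denominator $1-e_+-e_-$ is nonzero (equivalently, that the confusion matrix is non-degenerate so the noisy channel is invertible). This is the ``hard part'' only in the sense that it is a standing assumption underpinning the whole approach: when $e_+ + e_- = 1$ the observed reward carries no information about the true one and no unbiased estimator can exist. The boundedness of $r$ plays no role in the equality itself, but it guarantees that the surrogate values, and hence the expectation, are finite and well-defined.
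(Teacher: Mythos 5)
Your proposal is correct and takes essentially the same route as the paper's own proof: both condition on the true reward value, write $\mathbb{E}_{\tilde{r}|r}[\hat{r}]$ as the two-term average weighted by $e_+$ (resp.\ $e_-$) and $1-e_+$ (resp.\ $1-e_-$), and verify algebraically that the $r_-$ (resp.\ $r_+$) coefficient cancels while the remaining coefficient $(1-e_+)(1-e_-)-e_+e_-=1-e_+-e_-$ cancels the denominator, with the second case handled by symmetry. Your extra remark that well-definedness requires $e_++e_-\neq 1$ is a sound observation that the paper states only implicitly (in the discussion of the variance bound, where $e_-+e_+\to 1$ makes the estimator degenerate).
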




In the standard supervised learning setting, the above property guarantees convergence - as more training data are collected, the empirical surrogate risk converges to its expectation, which is the same as the expectation of the true risk (due to unbiased estimators). This is also the intuition why we would like to replace the reward terms with surrogate rewards in our RL algorithms.


The above idea can be generalized to the multi-outcome setting in a fairly straight-forward way. Define $\hat{\rmR}:= [\hat{r}(\tilde{r} = R_0), \hat{r}(\tilde{r} = R_{1}), ..., \hat{r}(\tilde{r} = R_{M-1})]$, where $\hat{r}(\tilde{r} = R_{k})$ denotes the value of the surrogate reward when the observed reward is $R_{k}$. Let $\rmR= \left[R_0; R_1;\cdots; R_{M-1}\right]$ be the bounded reward matrix with $M$ values. We have the following results:
\begin{lemma}
\label{lemma:2}
Suppose $\rmC_{M \times M}$ is invertible. With defining:
\begin{align}
\hat{\rmR} = \rmC^{-1} \cdot \rmR \label{eq:multivariate}
\end{align}
we have for any $r(s_t,a_t,s_{t+1})$,
$
\mathbb E_{\tilde{r}|r}[\hat{r}(s_t,a_t,s_{t+1})] = r(s_t,a_t,s_{t+1}).
$
\end{lemma}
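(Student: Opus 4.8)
The plan is to reduce the claim to the single matrix identity $\rmC\,\hat{\rmR} = \rmR$, which follows immediately from the definition $\hat{\rmR} = \rmC^{-1}\rmR$ together with the invertibility hypothesis on $\rmC$. First I would fix an arbitrary true reward value $r(s_t,a_t,s_{t+1}) = R_j$ and expand the conditional expectation over the perturbed reward by summing over the $M$ possible observed values $R_k$, each weighted by its flip probability $c_{j,k} = \mathbb{P}(\tilde{r}_t = R_k \mid r_t = R_j)$. This rewrites the left-hand side as $\mathbb E_{\tilde{r}|r}[\hat{r}] = \sum_{k=0}^{M-1} c_{j,k}\,\hat{r}(\tilde{r} = R_k)$.

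The key observation is that this sum is exactly the $j$-th coordinate of the matrix--vector product $\rmC\,\hat{\rmR}$, since $c_{j,k}$ is the $(j,k)$ entry of $\rmC$ and $\hat{r}(\tilde{r} = R_k)$ is the $k$-th entry of $\hat{\rmR}$. Substituting the definition $\hat{\rmR} = \rmC^{-1}\rmR$ gives $\rmC\,\hat{\rmR} = \rmC\,\rmC^{-1}\rmR = \rmR$, whose $j$-th coordinate is $R_j$. Hence $\mathbb E_{\tilde{r}|r}[\hat{r}(s_t,a_t,s_{t+1})] = R_j = r(s_t,a_t,s_{t+1})$, and since $j$ was arbitrary the identity holds for every true reward value, as claimed.

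There is no serious analytic obstacle here, as the argument is entirely linear algebra; the only points needing care are bookkeeping. I would double-check the orientation of the confusion matrix, namely that rows are indexed by the true reward and columns by the observed reward to match the convention $c_{j,k} = \mathbb{P}(\tilde{r}_t = R_k \mid r_t = R_j)$, because an inadvertent transpose would replace $\rmC^{-1}$ by $(\rmC^{\top})^{-1}$ and destroy the cancellation. It is also worth emphasizing that invertibility of $\rmC$ is precisely the hypothesis guaranteeing that $\hat{\rmR}$ is well-defined, so the surrogate rewards need not exist without it. As a consistency check I would verify that for $M = 2$ this construction reproduces Lemma~\ref{lemma:1}: inverting the $2\times 2$ matrix with rows $(1-e_+,\,e_+)$ and $(e_-,\,1-e_-)$, whose determinant is $1-e_+-e_-$, and multiplying by $[r_+;\,r_-]$ should recover the two case expressions of the binary estimator.
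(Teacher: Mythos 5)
Your proof is correct and is essentially the paper's own argument: the paper writes the unbiasedness conditions $\mathbb{E}_{\tilde{r}|r=R_j}[\hat{r}] = \sum_k c_{j,k}\hat{R}_k = R_j$ as the linear system $\rmR = \rmC\cdot\hat{\rmR}$ and inverts, which is exactly your observation that the conditional expectation is the $j$-th coordinate of $\rmC\,\hat{\rmR} = \rmC\,\rmC^{-1}\rmR = \rmR$. Your bookkeeping checks (rows indexed by true reward, and the $M=2$ case with determinant $1-e_+-e_-$ recovering Lemma~\ref{lemma:1}) match the paper's conventions.
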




\paragraph{Continuous reward} When the reward signal is continuous, 
we discretize it into $M$ intervals, and view each interval as a reward level, with its value approximated by its middle point.
With increasing $M$, this quantization error can be made arbitrarily small. Our method is then the same as the solution for the multi-outcome setting, except for replacing rewards with discretized ones. Note that the finer-degree quantization we take, the smaller the quantization error - but we would suffer from learning a bigger reward confusion matrix. This is a trade-off question that can be addressed empirically. 

So far we have assumed knowing the confusion matrices and haven't restricted our solution to any specific setting, but we will address this additional estimation issue 
focusing on determinisitc reward settings, and present our complete algorithm therein. 



\subsection{Convergence and Sample Complexity: $Q$-Learning}

\label{sec:theoretical_analysis}
We now analyze the convergence and sample complexity of our surrogate reward based RL algorithms (with assuming knowing $\rmC$), taking $Q$-Learning as an example. 

\paragraph{Convergence guarantee} First, the convergence guarantee is stated in the following theorem:
\begin{theorem}
\label{thm:convergence}
Given a finite MDP, denoting as $\mathcal{\hat{M}} = \langle \mathcal{S}, \mathcal{A}, \mathcal{\hat{R}}, \mathcal{P}, \gamma\rangle$, the $Q$-learning algorithm with surrogate rewards, given by the update rule,
\begin{small}
\begin{align}
Q_{t+1}(s_{t}, a_{t}) = (1 - \alpha_{t})Q(s_{t}, a_{t}) + \alpha_{t} \left[\hat{r}_{t} + \gamma \max_{b \in \mathcal{A}} Q(s_{t+1}, b) \right],
\label{eq:rule}
\end{align}
\end{small}
converges w.p.1 to the optimal $Q$-function as long as $\sum_{t}\alpha_{t} = \infty$ and $\sum_{t}\alpha^2_{t} < \infty$.
\end{theorem}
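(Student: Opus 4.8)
The plan is to recognize the update rule~\eqref{eq:rule} as a stochastic approximation iteration and to invoke the classical convergence result for such processes (Jaakkola, Jordan, and Singh 1994; Singh et al.\ 2000): an iteration of the form $\Delta_{t+1}(x) = (1-\alpha_t(x))\Delta_t(x) + \alpha_t(x)F_t(x)$ converges to zero with probability one provided (i) $\sum_t \alpha_t = \infty$ and $\sum_t \alpha_t^2 < \infty$, (ii) $\|\mathbb{E}[F_t \mid \mathcal{F}_t]\|_\infty \le \gamma \|\Delta_t\|_\infty$ for some $\gamma < 1$, and (iii) $\Var[F_t \mid \mathcal{F}_t] \le C(1 + \|\Delta_t\|_\infty^2)$ for a constant $C$, where $\mathcal{F}_t$ denotes the history up to time $t$. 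The crucial ingredient that makes this carry over to the perturbed-reward setting is the unbiasedness established in Lemmas~\ref{lemma:1} and~\ref{lemma:2}.

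First I would define the iterate error $\Delta_t(s,a) := Q_t(s,a) - Q^\ast(s,a)$, where $Q^\ast$ is the optimal $Q$-function of the \emph{true} MDP $\mathcal{M}$ (with rewards $\mathcal{R}$, not the surrogate rewards). Subtracting $Q^\ast(s_t,a_t)$ from both sides of~\eqref{eq:rule} and setting $F_t(s_t,a_t) := \hat{r}_t + \gamma \max_{b\in\mathcal{A}} Q_t(s_{t+1},b) - Q^\ast(s_t,a_t)$ casts the update in exactly the stochastic-approximation form above, with $\alpha_t(s,a)$ nonzero only at the visited pair $(s_t,a_t)$. Next I would compute the conditional expectation of $F_t$. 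Conditioning on the transition $(s_t,a_t,s')$, the surrogate reward contributes $\mathbb{E}_{\tilde{r}\mid r}[\hat{r}_t] = r(s_t,a_t,s')$ by Lemmas~\ref{lemma:1}/\ref{lemma:2}, so the perturbation averages out and the expected increment reduces to the ordinary Bellman optimality operator $\mathcal{H}$ of the true MDP applied to $Q_t$:
\begin{small}
\begin{align*}
\mathbb{E}[F_t \mid \mathcal{F}_t] &= \sum_{s'\in\mathcal{S}} \mathbb{P}_{a_t}(s_t,s')\big[ r(s_t,a_t,s') + \gamma \max_{b} Q_t(s',b) \big] - Q^\ast(s_t,a_t) \\
&= (\mathcal{H}Q_t)(s_t,a_t) - Q^\ast(s_t,a_t).
\end{align*}
\end{small}
Since $\mathcal{H}$ is a $\gamma$-contraction in the sup-norm and $Q^\ast = \mathcal{H}Q^\ast$, condition (ii) follows immediately: $\|\mathbb{E}[F_t\mid\mathcal{F}_t]\|_\infty = \|\mathcal{H}Q_t - \mathcal{H}Q^\ast\|_\infty \le \gamma\|\Delta_t\|_\infty$.

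The main obstacle, and the step I expect to require the most care, is the variance condition (iii). Replacing $r_t$ with the surrogate $\hat{r}_t$ inflates the effective per-step reward by a factor controlled by $\rmC^{-1}$ (in the binary case, by $1/(1-e_+-e_-)$), so the noise driving the iteration is genuinely larger than in noise-free $Q$-learning. The key point to verify is that this amplification is nonetheless a fixed, finite constant: because the true rewards are bounded and $\rmC$ is a fixed invertible matrix, every entry of $\hat{\rmR} = \rmC^{-1}\rmR$ is a bounded linear combination of bounded reward values, hence $|\hat{r}_t|$ is uniformly bounded. Consequently $\Var[F_t\mid\mathcal{F}_t]$ is dominated by a term of the form $C(1+\|\Delta_t\|_\infty^2)$, exactly as required.

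With (i) supplied by the hypotheses on $\alpha_t$ and (ii), (iii) established, the stochastic-approximation lemma yields $\Delta_t \to 0$ w.p.1, i.e.\ $Q_t \to Q^\ast$, completing the proof. I would close with the remark that although the iteration is run on the perturbed MDP $\mathcal{\hat{M}}$ using only surrogate rewards, it is precisely the unbiasedness of $\hat{r}_t$ that forces the limit to be the optimal $Q$-function of the \emph{true} MDP $\mathcal{M}$, so the learned policy is optimal for the underlying uncorrupted rewards.
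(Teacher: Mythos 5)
Your proposal is correct and follows essentially the same route as the paper's own proof: the identical error process $\Delta_t = Q_t - Q^\ast$ and noise term $F_t$, the same reduction of $\mathbb{E}[F_t \mid \mathcal{F}_t]$ to the true Bellman operator via the unbiasedness of $\hat{r}$ (Lemmas~\ref{lemma:1} and~\ref{lemma:2}, which the paper packages through Corollary~\ref{corollary:1}), the same boundedness argument for $\hat{r} = $ entries of $\rmC^{-1}\rmR$ to verify the variance condition, and the same stochastic-approximation convergence result (Lemma~\ref{lemma:3}, Jaakkola--Jordan--Singh/Tsitsiklis). Your phrasing of the contraction step via the operator $\mathcal{H}$ is marginally cleaner than the paper's inline elementwise bound, but it is the same argument.
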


Note that the term on the right hand of Eqn.~(\ref{eq:rule}) includes surrogate reward $\hat{r}$ estimated using Eqn.~(\ref{eq:binary}) and Eqn.~(\ref{eq:multivariate}). Theorem~\ref{thm:convergence} states that agents will converge to the optimal policy \textit{w.p.1} when replacing the rewards with surrogate rewards, despite of the noises in the observed rewards. This result is not surprising - though the surrogate rewards introduce larger variance, we are grateful of their unbiasedness, which grants us the convergence. In other words, the addition of the perturbed reward does not affect the convergence guarantees of $Q$-Learning with surrogate rewards.

\paragraph{Sample complexity} To establish our sample complexity results, we first introduce a \textit{generative model} following previous literature~\cite{DBLP:conf/nips/KearnsS98a,DBLP:conf/colt/KearnsS00,DBLP:conf/ijcai/KearnsMN99}. This is a practical MDP setting to simplify the analysis. 
\begin{definition}
A generative model $G(\mathcal{M})$ for an MDP $\mathcal{M}$ is a sampling model which takes a state-action pair $(s_t, a_t)$ as input, and outputs the corresponding reward $r(s_t, a_t)$ and the next state $s_{t+1}$ randomly with the probability of $\mathbb P_{a}(s_t, s_{t+1})$, i.e., $s_{t+1} \sim \mathbb P(\cdot|s, a)$.
\end{definition}

Exact value iteration is impractical if the agents follow the generative models above exactly~\cite{Kakade2003OnTS}. Consequently, we introduce a \textit{phased Q-Learning} which is similar to the ones presented in \cite{Kakade2003OnTS,DBLP:conf/nips/KearnsS98a} for the convenience of proving our sample complexity results. We briefly outline \textit{phased Q-Learning} as follows - the complete description (Algorithm~\ref{alg:phased_qlearn}) can be found in Appendix~\ref{appendix:proofs}.

\begin{definition}
Phased Q-Learning algorithm takes $m$ samples per phase by calling generative model $G(\mathcal{M})$. It uses the collected $m$ samples to estimate the transition probability $\mathcal{P}$ and then update the estimated value function per phase. Calling generative model $G(\hat{\mathcal{M}})$ means that surrogate rewards $\hat{r}$ are returned and used to update the value function.
\end{definition}


The sample complexity of \textit{Phased $Q$-Learning} is given as follows:
\begin{theorem}
\label{thm:upper_bound}
(Upper Bound) Let $r \in [0, R_{\max}]$ be bounded reward, $\rmC$ be an invertible reward confusion matrix with $\mathrm{det}(\rmC)$ denoting its determinant. For an appropriate choice of $m$, the Phased $Q$-Learning algorithm calls the generative model $G(\mathcal{\hat{M}})$
$
O\left(\frac{|\mathcal{S}||\mathcal{A}|T}{\epsilon^2(1 - \gamma)^2\mathrm{det}(\rmC)^2}\log\frac{|\mathcal{S}||\mathcal{A}|T}{\delta}\right)
$
times in $T$ epochs, and returns a policy such that for all state $s \in \mathcal{S}$, 
$
\left|V_{\pi}(s) - V^{\ast}(s) \right| \leq \epsilon,~\epsilon>0,$ w.p. $\geq 1 - \delta,~0<\delta<1$.
\end{theorem}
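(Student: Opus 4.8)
The plan is to reduce the claim to the standard finite-sample analysis of phased value iteration / phased $Q$-Learning under a generative model~\cite{DBLP:conf/nips/KearnsS98a,Kakade2003OnTS}, and then isolate the \emph{single} place where surrogate rewards change the argument. In the noise-free analysis the $n$-th phase uses the $m$ fresh samples drawn at each $(s,a)$ to form an empirical Bellman backup, and three ingredients combine: (i) a per-phase, per-$(s,a)$ concentration bound showing the empirical backup is within $\beta$ of the exact backup; (ii) the $\gamma$-contraction of the Bellman operator, which propagates a per-phase error of size $\beta$ into a cumulative error of order $\beta/(1-\gamma)$ while the residual bias of running only $T$ phases is $\gamma^{T}V_{\max}$ with $V_{\max}=R_{\max}/(1-\gamma)$; and (iii) a union bound over all $|\mathcal{S}||\mathcal{A}|$ pairs and $T$ phases, which is the source of the $\log\frac{|\mathcal{S}||\mathcal{A}|T}{\delta}$ factor. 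Setting $\beta=\Theta(\epsilon(1-\gamma))$ and $T=\Theta\!\big(\tfrac{1}{1-\gamma}\log\tfrac{V_{\max}}{\epsilon}\big)$ then yields $|V_{\pi}(s)-V^{\ast}(s)|\le\epsilon$ with probability $\ge 1-\delta$, and solving the concentration bound for $m$ gives the stated count once multiplied by $|\mathcal{S}||\mathcal{A}|T$.

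First I would establish that replacing $r_{t}$ with the surrogate reward $\hat r_{t}$ does not disturb the \emph{mean} of the empirical backup: by Lemma~\ref{lemma:1} (binary) and Lemma~\ref{lemma:2} (multi-outcome), $\mathbb{E}_{\tilde r\mid r}[\hat r_{t}]=r_{t}$, so the expected empirical backup built from surrogate rewards equals the exact backup built from true rewards. Consequently the contraction and error-propagation bookkeeping of step (ii) is \emph{verbatim} the noise-free argument, and only the concentration step (i) must be redone, since the surrogate reward has a strictly larger range than the true reward.

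The crux is therefore to bound the range (equivalently the Hoeffding variance proxy) of $\hat r_{t}$ in terms of $\mathrm{det}(\rmC)$. Since $\hat{\rmR}=\rmC^{-1}\rmR$, I would apply Cramer's rule to write each entry of $\rmC^{-1}$ as a cofactor of $\rmC$ divided by $\mathrm{det}(\rmC)$; because $\rmC$ is row-stochastic with entries in $[0,1]$, every cofactor is bounded by a constant depending only on $M$ (e.g.\ via Hadamard's inequality), so $|\hat r_{t}|\le \frac{c\,R_{\max}}{|\mathrm{det}(\rmC)|}$. Feeding a range of this size into Hoeffding's inequality, the per-phase sample size needed to reach accuracy $\beta$ with failure probability $\delta'=\delta/(|\mathcal{S}||\mathcal{A}|T)$ is $m=O\!\big(\frac{R_{\max}^{2}}{\beta^{2}\,\mathrm{det}(\rmC)^{2}}\log\frac{|\mathcal{S}||\mathcal{A}|T}{\delta}\big)$. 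Substituting $\beta=\Theta(\epsilon(1-\gamma))$ reproduces exactly the $\frac{1}{\epsilon^{2}(1-\gamma)^{2}\,\mathrm{det}(\rmC)^{2}}$ dependence, and the total number of generative-model calls is $m\,|\mathcal{S}||\mathcal{A}|T$, matching the claim.

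I expect the main obstacle to be this range bound together with controlling the value estimates, which are themselves built from inflated surrogate rewards: although each $\hat r_{t}$ is unbiased, the running estimates $\hat V_{n}$ can in principle fluctuate on the scale $R_{\max}/((1-\gamma)|\mathrm{det}(\rmC)|)$, so naively applying Hoeffding to the discounted-value term threatens an extra $(1-\gamma)^{-1}$ factor. I would remove it by clipping $\hat V_{n}$ to $[0,V_{\max}]$ (justified since the target $V^{\ast}$ lies in this interval) or by carrying the inflated range through a Bernstein-type bound. Verifying that this clipping/Bernstein step preserves the advertised $(1-\gamma)^{-2}$ rate, and that the cofactor constant depends only on $M$ rather than on finer conditioning of $\rmC$ beyond its determinant, is the delicate part of the argument.
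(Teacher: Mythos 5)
Your proposal follows essentially the same route as the paper's proof: unbiasedness (Lemmas~\ref{lemma:1} and \ref{lemma:2}) reduces the argument to the standard phased $Q$-Learning analysis, the range of the surrogate reward is bounded through the adjugate of $\rmC$ as $|\hat{r}| \leq \frac{M}{\mathrm{det}(\rmC)}R_{\max}$ (the paper's Lemma~\ref{lemma:upper_bound_reward} bounds each cofactor by $1$ using row-stochasticity, slightly tighter than your Hadamard-style constant but equivalent inside the $O(\cdot)$), and Hoeffding plus a union bound over $|\mathcal{S}||\mathcal{A}|T$ transitions combined with the $\gamma$-contraction recursion at accuracy $\epsilon_1 = \epsilon_2 = \Theta(\epsilon(1-\gamma))$ yields the stated $m$. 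The one obstacle you flag --- the inflated scale of the running estimates $\hat{V}$ --- does not require clipping or Bernstein: the paper's decomposition applies concentration separately to the reward term (range $\frac{M}{\mathrm{det}(\rmC)}R_{\max}$) and to $\sum_{s_{t+1}}\hat{\mathbb{P}}_a(s_t,s_{t+1})V^{\ast}_{t+1}(s_{t+1})$, where the \emph{fixed} function $V^{\ast}_{t+1}$ is bounded by $\frac{R_{\max}}{1-\gamma}$, and then recurses only the difference $\max_s|V^{\ast}_{t+1}(s)-\hat{V}_{t+1}(s)|$, so the inflated range never enters the value-term concentration.
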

Theorem~\ref{thm:upper_bound} states that, to guarantee the convergence to the optimal policy, the number of samples needed is no more than $O(1 / \mathrm{det}(\rmC)^2)$ times of the one needed when the RL agent observes true rewards perfectly. This additional constant is the price we pay for the noise presented in our learning environment. When the noise level is high, we expect to see a much higher $1 / \mathrm{det}(\rmC)^2$; otherwise when we are in a low-noise regime , $Q$-Learning can be very efficient with surrogate reward~\cite{DBLP:conf/colt/KearnsS00}. Note that Theorem~\ref{thm:upper_bound} gives the upper bound in discounted MDP setting; for undiscounted setting ($\gamma = 1$), the upper bound is at the order of $O\left(\frac{|\mathcal{S}||\mathcal{A}|T^3}{\epsilon^2\mathrm{det}(\rmC)^2}\log\frac{|\mathcal{S}||\mathcal{A}|T}{\delta}\right)$.
This result is not surprising, as the phased $Q$-Learning helps smooth out the noise in rewards in consecutive steps. We will experimentally test how the bias removal step performs without explicit phases.


While the surrogate reward guarantees the unbiasedness, we sacrifice the variance at each of our learning steps, and this in turn delays the convergence (as also evidenced in the sample complexity bound). It can be verified that the variance of surrogate reward is bounded when $\rmC$ is invertible, and it is always higher than the variance of true reward. This is summarized in the following theorem:

\begin{theorem}
\label{thm:variance}
Let $r \in [0, R_{\max}]$ be bounded reward and confusion matrix $\rmC$ is invertible. Then, the variance of surrogate reward $\hat{r}$ is bounded as follows:
$
    \mathbf{Var}(r) \leq \mathbf{Var}(\hat{r}) \leq \frac{M^2}{\mathrm{det}(\rmC)^2} \cdot R_{\max}^2 .
$
\end{theorem}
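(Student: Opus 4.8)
The plan is to prove the two inequalities separately: the lower bound follows from unbiasedness together with the law of total variance, while the upper bound reduces to controlling the largest possible magnitude among the $M$ surrogate reward values, which I would bound via Cramer's rule combined with the stochasticity of $\rmC$.

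For the lower bound, I would treat $r$ as a random variable and $\hat{r}$ as a deterministic function of the perturbed reward $\tilde{r}$, whose conditional law given $r$ is fixed by $\rmC$. Conditioning on $r$ and applying the law of total variance gives
\[
\mathbf{Var}(\hat{r}) = \mathbb{E}_r\!\big[\mathbf{Var}(\hat{r}\mid r)\big] + \mathbf{Var}_r\!\big(\mathbb{E}[\hat{r}\mid r]\big).
\]
By Lemma~\ref{lemma:2}, the inner conditional expectation satisfies $\mathbb{E}_{\tilde{r}\mid r}[\hat{r}] = r$, so the second term is exactly $\mathbf{Var}(r)$; since the first term is an expectation of nonnegative quantities, it is $\ge 0$, yielding $\mathbf{Var}(\hat{r}) \ge \mathbf{Var}(r)$.

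For the upper bound, I would first note that $\hat{r}$ takes only the $M$ values $\hat{\rmR}_0,\dots,\hat{\rmR}_{M-1}$, so $\mathbf{Var}(\hat{r}) \le \mathbb{E}[\hat{r}^{\,2}] \le \max_k \hat{\rmR}_k^{\,2}$, and it remains to show $|\hat{\rmR}_k| \le M R_{\max}/|\mathrm{det}(\rmC)|$ for every $k$. Writing $\hat{\rmR} = \rmC^{-1}\rmR$ and applying Cramer's rule, $\hat{\rmR}_k = \mathrm{det}(\rmC^{(k)})/\mathrm{det}(\rmC)$, where $\rmC^{(k)}$ is $\rmC$ with its $k$-th column replaced by $\rmR$. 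Expanding the numerator along that column gives $|\mathrm{det}(\rmC^{(k)})| \le R_{\max}\sum_{j} |M_{jk}|$, where each $M_{jk}$ is the determinant of an $(M-1)\times(M-1)$ submatrix of $\rmC$ (equivalently, a cofactor).

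The crux is bounding these minors. Here I would exploit that $\rmC$ is row-stochastic: deleting one column leaves each row with nonnegative entries summing to at most $1$, so each row of the submatrix has Euclidean norm at most $1$. Hadamard's inequality then gives $|M_{jk}| \le 1$, and summing the $M$ terms yields $|\mathrm{det}(\rmC^{(k)})| \le M R_{\max}$, hence $\hat{\rmR}_k^{\,2} \le M^2 R_{\max}^2 / \mathrm{det}(\rmC)^2$. Combining this with the reduction above completes the upper bound. I expect the minor estimate to be the main obstacle: the naive bound on the determinant of a matrix with $[0,1]$ entries grows factorially in $M$, and it is precisely the stochasticity of $\rmC$ (rows summing to one) that collapses each cofactor to at most $1$ and produces the clean $M^2/\mathrm{det}(\rmC)^2$ factor.
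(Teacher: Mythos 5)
Your proof is correct and essentially the paper's own argument: your law-of-total-variance step is the paper's computation in disguise, since the paper reduces $\mathbf{Var}(\hat{r})-\mathbf{Var}(r)$ to $\sum_j p_j\left(\sum_i c_{j,i}\hat{R}_i^2-\left(\sum_i c_{j,i}\hat{R}_i\right)^2\right)$, which is exactly your $\mathbb{E}_r\left[\mathbf{Var}(\hat{r}\mid r)\right]$, and then invokes Cauchy--Schwarz precisely where you cite nonnegativity of the conditional variance. Your upper bound likewise reproduces Lemma~\ref{lemma:upper_bound_reward}: Cramer's rule with the cofactor expansion is the paper's adjugate computation $\hat{\rmR}=\mathrm{adj}(\rmC)\rmR/\mathrm{det}(\rmC)$, and your Hadamard estimate $|M_{jk}|\leq 1$ is interchangeable with the paper's permutation-expansion bound $\sum_{\sigma}\prod_m c'_{m,\sigma(m)}\leq \prod_m\left(\sum_n c_{m,n}\right)=1$, both resting on the row-stochasticity of $\rmC$.
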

To give an intuition of the bound, when we have binary reward, the variance for surrogate reward bounds as follows:
$
     \mathbf{Var}(r) \leq \mathbf{Var}(\hat{r}) \leq \frac{4 R_{\max}^2}{(1 - e_{+} - e_{-})^2} .
$
As $e_{-} + e_{+} \rightarrow 1$, the variance becomes unbounded and the proposed estimator is no longer effective, nor will it be well-defined.

\paragraph{Variance reduction} In practice, there is a trade-off question between bias and variance by tuning a linear combination of $\rmR$ and $\hat{\rmR}$, \textit{i.e.}, $\rmR_{proxy} = \eta \rmR + (1 - \eta) \hat{\rmR}$, via choosing an appropriate $\eta \in [0, 1]$. 
Other variance reduction techniques in RL with noisy environment, for instance \cite{Romoff2018}, can be combined with our proposed bias removal technique too. We test them in the experiment section.


\subsection{Estimation of Confusion Matrices}

\label{sec:estimate_c}
In previous solutions, we have assumed the knowledge of reward confusion matrices, in order to compute the surrogate reward. This knowledge is often not available in practice. Estimating these confusion matrices is challenging without knowing any ground truth reward information; but we would like to remark that efficient algorithms have been developed to estimate the confusion matrices in supervised learning settings~\cite{DBLP:conf/icassp/BekkerG16,sig15,DBLP:journals/corr/abs-1712-04577,DBLP:journals/corr/abs-1802-05300}. The idea in these algorithms is to dynamically refine the error rates based on aggregated rewards. Note this approach is not different from the inference methods in aggregating crowdsourcing labels, as referred in the literature~\cite{dawid1979maximum,karger2011iterative,liu2012variational}. We adapt this idea to our reinforcement learning setting, which is detailed as follows. 

The estimation procedure is only for the case with deterministic reward, but not for stochastic rewards. The reason is that we will use repeated observations to refine an estimated ground truth reward, which will be leveraged to estimate the confusion matrix. With uncertainty in the true reward, it is not possible to distinguish a clean case with true reward $\rmC \cdot \rmR$ from the perturbed reward case with true reward $\rmR$ and added noise by confusion matrix $\rmC$.

\begin{algorithm}[t]
\begin{small}
\caption{Reward Robust RL (sketch)}\label{alg:robust_qlearn_short}
\begin{algorithmic}[1]
\STATE {\bfseries Input:} $\mathcal{\tilde{M}}$, $\tilde{R}(s, a)$, $\eta$
\STATE {\bfseries Output:} $Q(s, a)$, $\pi (s)$
\STATE Initialize value function $Q(s, a)$ arbitrarily.
\WHILE{$Q$ is not converged}
\STATE Initialize state $s \in \mathcal{S}$, observed reward set $\tilde{R}(s, a)$ \STATE Set confusion matrix $\tilde{\rmC}$ as identity matrix $\rmI$
\WHILE{$s$ is not terminal}
\STATE Choose $a$ from $s$ using policy derived from $Q$ 
\STATE Take action $a$, observe $s'$ and noisy reward $\tilde{r}$ 
\IF{collecting enough $\tilde{r}$ for all $\mathcal{S} \times \mathcal{A}$ pairs}
\STATE Get predicted true reward $\bar{r}$ using majority voting
\STATE Re-estimate $\tilde{\rmC}$ based on $\tilde{r}$ and $\bar{r}$ (using Eqn.~\ref{eq:estimate_c})
\ENDIF
\STATE Obtain surrogate reward $\dot{r}$ ($\hat{\rmR} = (1 - \eta) \cdot {\rmR} + \eta \cdot \tilde{\rmC}^{-1} {\rmR}$)
\STATE Update $Q$ using surrogate reward 
\STATE  $s \leftarrow s'$
\ENDWHILE
\ENDWHILE
\STATE  {\bfseries return} $Q(s, a)$ and $\pi(s)$
\end{algorithmic}
\end{small}
\end{algorithm}

At each training step, the RL agent collects the noisy reward and the current \textit{state-action} pair. Then, for each pair in $\mathcal{S} \times \mathcal{A}$, the agent predicts the true reward based on accumulated historical observations of reward for the corresponding \textit{state-action} pair via, e.g., averaging (majority voting). Finally, with the predicted true reward and the accuracy (error rate) for each state-action pair, the estimated reward confusion matrices $\tilde{\rmC}$ are given by 
{\small
\begin{align}
    &\bar{r}(s, a) = \argmax_{R_i \in \mathcal{R}}~\#[\tilde{r}(s, a) = R_i], \label{eq:majority_voting} \\
    &\tilde{c}_{i,j} = \frac{\sum_{(s, a) \in \mathcal{S} \times \mathcal{A}} \#\left[\tilde{r}(s, a) = R_j | \bar{r}(s, a) = R_i\right]}{\sum_{(s, a) \in \mathcal{S} \times \mathcal{A}} \#[\bar{r}(s, a) = R_i]},
\label{eq:estimate_c}
\end{align}
}
where in above $\#\left[\cdot \right]$ denotes the number of state-action pair that satisfies the condition $[\cdot]$ in the set of observed rewards $\tilde{R}(s, a)$ (see Algorithm~\ref{alg:robust_qlearn_short} and~\ref{alg:robust_qlearn}); $\bar{r}(s, a)$ and $\tilde{r}(s, a)$ denote predicted true rewards (using majority voting) and observed rewards when the state-action pair is $(s, a)$. We break potential ties in Eqn. (\ref{eq:majority_voting}) equally likely. The above procedure of updating $\tilde{c}_{i,j}$ continues indefinitely as more observation arrives. 
Our final definition of surrogate reward 
replaces a known reward confusion $\rmC$ in Eqn.~(\ref{eq:multivariate}) with our estimated one $\tilde{\rmC}$. 
We denote this estimated surrogate reward as $\dot{r}$. 

\begin{figure*}[t]

\makebox[7pt]{\raisebox{39pt}{\rotatebox[origin=c]{90}{\scriptsize{$\omega=0.3$}}}}%
\begin{subfigure}[b]{.19\textwidth}
    \centering
    \includegraphics[width=\textwidth]{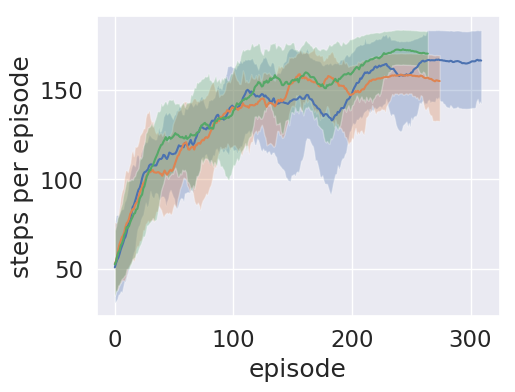}
\end{subfigure}
\begin{subfigure}[b]{.19\textwidth}
    \centering
    \includegraphics[width=\textwidth]{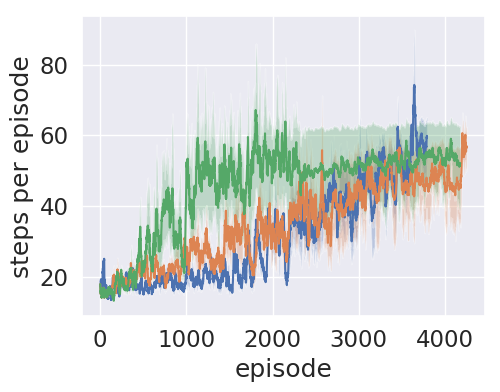}
\end{subfigure}
\begin{subfigure}[b]{.19\textwidth}
    \centering
    \includegraphics[width=\textwidth]{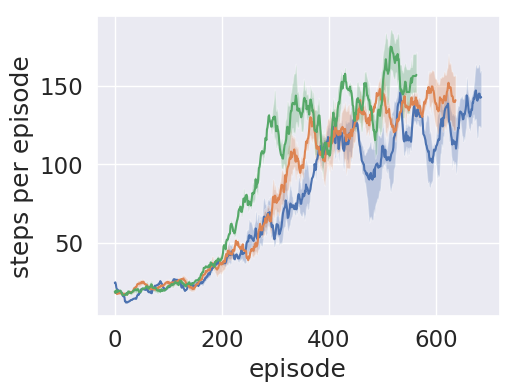}
\end{subfigure}
\begin{subfigure}[b]{.19\textwidth}
    \centering
    \includegraphics[width=\textwidth]{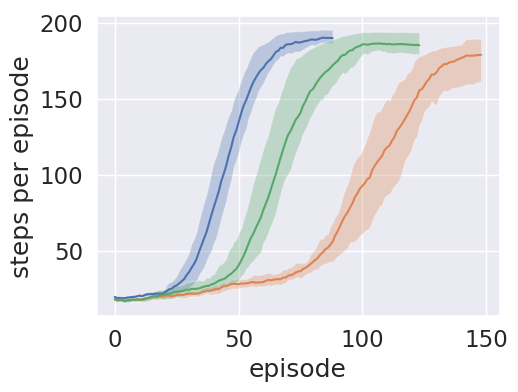}
\end{subfigure}
\begin{subfigure}[b]{.19\textwidth}
    \centering
    \includegraphics[width=\textwidth]{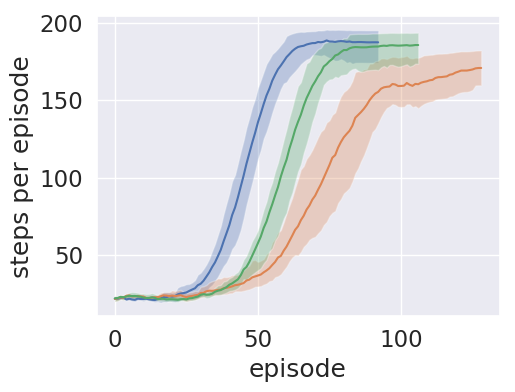}
\end{subfigure}

\makebox[7pt]{\raisebox{55pt}{\rotatebox[origin=c]{90}{\scriptsize{$\omega=0.7$}}}}%
\begin{subfigure}[b]{.19\textwidth}
    \centering
    \includegraphics[width=\textwidth]{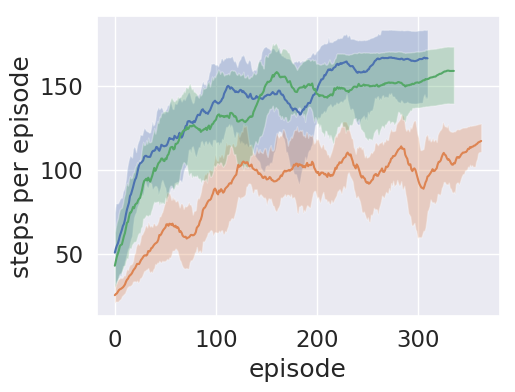}
    \caption{$Q$-Learning}
    \label{fig:cartpole_qlearn}
\end{subfigure}
\begin{subfigure}[b]{.19\textwidth}
    \centering
    \includegraphics[width=\textwidth]{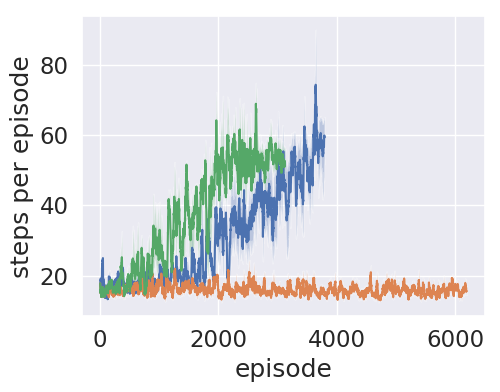}
    \caption{CEM}
    \label{fig:cartpole_cem}
\end{subfigure}
\begin{subfigure}[b]{.19\textwidth}
    \centering
    \includegraphics[width=\textwidth]{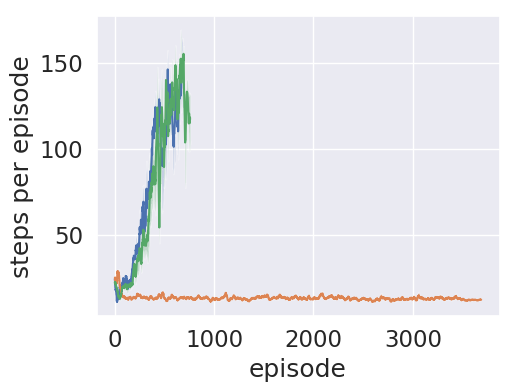}
    \caption{SARSA}
    \label{fig:cartpole_sarsa}
\end{subfigure}
\begin{subfigure}[b]{.19\textwidth}
    \centering
    \includegraphics[width=\textwidth]{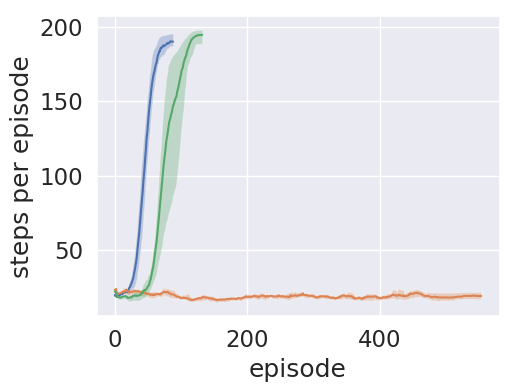}
    \caption{DQN}    
\end{subfigure}
\begin{subfigure}[b]{.19\textwidth}
    \centering
    \includegraphics[width=\textwidth]{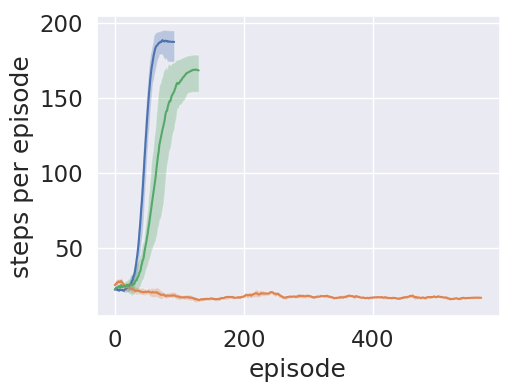}
    \caption{DDQN}
\end{subfigure}
\caption{Learning curves from five RL algorithms on CartPole game with true rewards ($r$)~\crule[blue]{0.30cm}{0.30cm}, noisy rewards ($\tilde{r}$)~\crule[orange]{0.30cm}{0.30cm} and estimated surrogate rewards ($\dot{r}$)~ ($\eta = 1$)~\crule[green]{0.30cm}{0.30cm}. Note that $\rmC$ are unknown to the agents and each experiment is repeated 10 times with different random seeds. We plotted 10\% to 90\% percentile area with its mean highlighted. Full results are in Appendix~\ref{appendix:control_figures} (Figure~\ref{fig:cartpole_est}).}
\label{fig:cartpole}
\end{figure*}

We present (\textit{Reward Robust RL}) in Algorithm~\ref{alg:robust_qlearn_short}\footnote{One complete $Q$-Learning implementation (Algorithm~\ref{alg:robust_qlearn}) is provided in Appendix~\ref{sec:robust}.}. Note that the algorithm is rather generic, and we can plug in any exisitng RL algorithm into our reward robust one, with only changes in replacing the rewards with our estimated surrogate rewards. 



\section{Experimental Results}

In this section, we conduct extensive experiments to evaluate the noisy reward robust RL mechanism with different games, under various noise settings. 
Due to the space limit, more experimental results can be found in Appendix~\ref{appendix:exp_results}. 

\subsection{Experimental Setup}

\paragraph{Environments and RL Algorithms}
To fully test the performance under different environments, we evaluate the proposed robust reward RL method on two classic control games (CartPole, Pendulum) and seven Atari 2600 games (AirRaid, Alien, Carnival, MsPacman, Pong, Phoenix, Seaquest), which encompass a large variety of environments, as well as rewards. Specifically, the rewards could be unary (CartPole), binary (most of Atari games), multivariate (Pong) and even continuous (Pendulum). A set of state-of-the-art RL algorithms are experimented with, while training under different amounts of noise (See Table~\ref{tab:alg})\footnote{The detailed settings are accessible in Appendix~\ref{appendix:exp_setup}.}. For each game and algorithm, unless otherwise stated, three policies are trained with different random initialization to decrease the variance.



\begin{figure*}[t]
\centering
\makebox[7pt]{\raisebox{45pt}{\rotatebox[origin=c]{90}{\scriptsize{$\omega=0.3$}}}}%
\begin{subfigure}[b]{.20\textwidth}
    \centering
    \includegraphics[width=\textwidth]{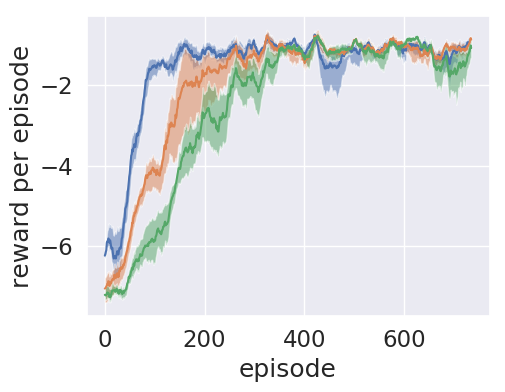}
\end{subfigure}
\begin{subfigure}[b]{.20\textwidth}
    \centering
    \includegraphics[width=\textwidth]{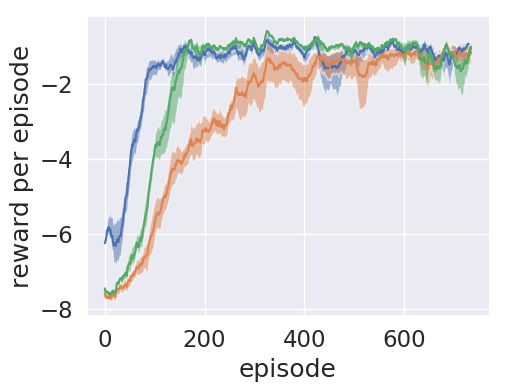}
\end{subfigure}
\begin{subfigure}[b]{.20\textwidth}
    \centering
    \includegraphics[width=\textwidth]{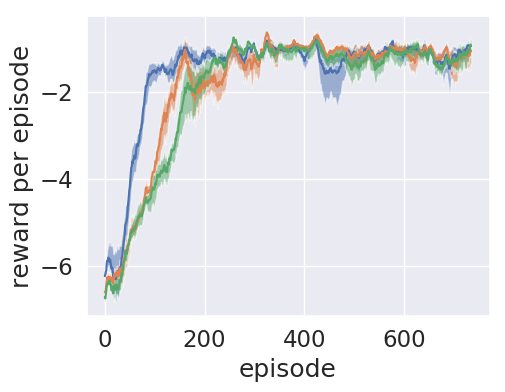}
\end{subfigure}
\begin{subfigure}[b]{.20\textwidth}
    \centering
    \includegraphics[width=\textwidth]{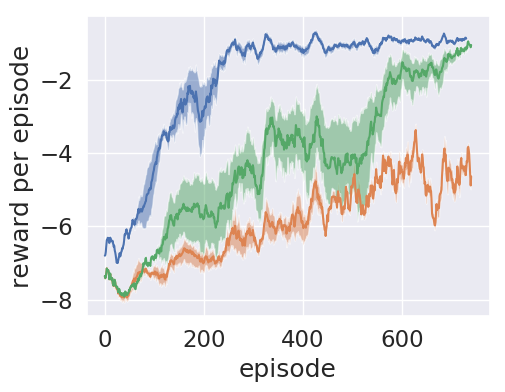}
\end{subfigure}
\makebox[7pt]{\raisebox{58pt}{\rotatebox[origin=c]{90}{\scriptsize{$\omega=0.7$}}}}%
\begin{subfigure}[b]{.20\textwidth}
    \centering
    \includegraphics[width=\textwidth]{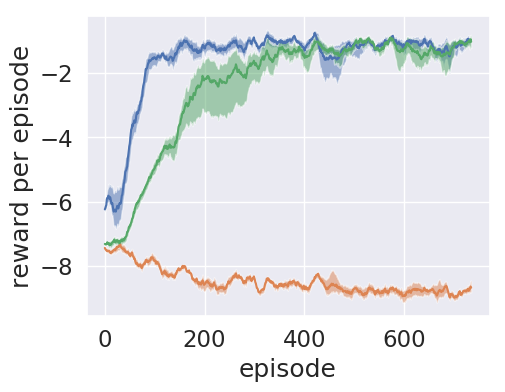}
    \caption{DDPG (symmetric)}
    \label{fig:pendulum_a}
\end{subfigure}
\begin{subfigure}[b]{.20\textwidth}
    \centering
    \includegraphics[width=\textwidth]{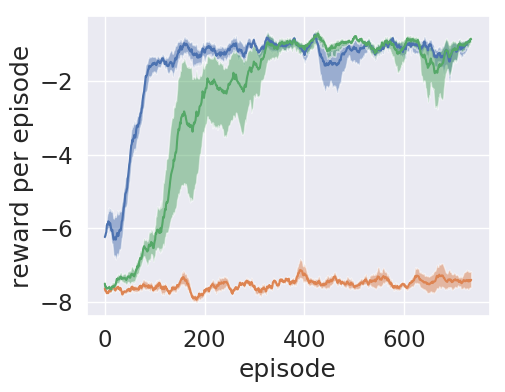}
    \caption{DDPG (rand-one)}
    \label{fig:pendulum_b}
\end{subfigure}
\begin{subfigure}[b]{.20\textwidth}
    \centering
    \includegraphics[width=\textwidth]{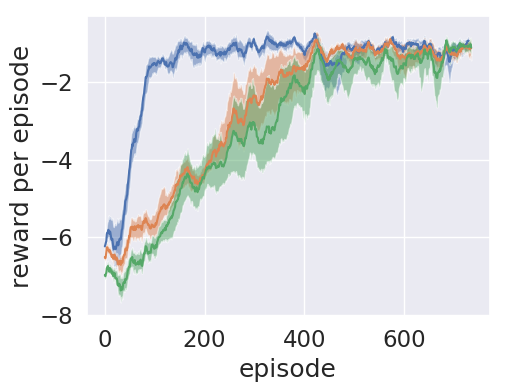}
    \caption{DDPG (rand-all)}
    \label{fig:pendulum_c}
\end{subfigure}
\begin{subfigure}[b]{.20\textwidth}
    \centering
    \includegraphics[width=\textwidth]{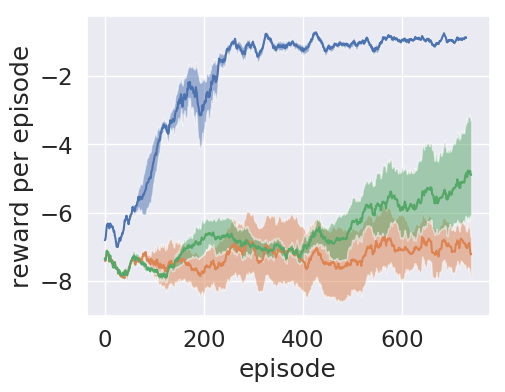}
    \caption{NAF (rand-all)}
    \label{fig:pendulum_d}
\end{subfigure}
\caption{Learning curves from DDPG and NAF on Pendulum game with true rewards ($r$)~\crule[blue]{0.30cm}{0.30cm}, noisy rewards ($\tilde{r}$)~\crule[orange]{0.30cm}{0.30cm} and surrogate rewards ($\hat{r}$)~($\eta = 1$)~\crule[green]{0.30cm}{0.30cm}. Both symmetric and asymmetric noise are conduced in the experiments and each experiment is repeated 3 times with different random seeds. Full results are in Appendix~\ref{appendix:control_figures} (Figure~\ref{fig:pendulum_full}).}
\label{fig:pendulum}
\end{figure*}

\paragraph{Reward Post-Processing}
For each game and RL algorithm, we test the performance for learning with true rewards, noisy rewards and surrogate rewards. Both symmetric and asymmetric noise settings with different noise levels are tested. For symmetric noise, the confusion matrices are symmetric. As for asymmetric noise, two types of random noise are tested: 1) \textit{rand-one}, each reward level can only be perturbed into another reward; 2) \textit{rand-all}, each reward could be perturbed to any other reward, via adding a random noise matrix. To measure the amount of noise \textit{w.r.t} confusion matrices, we define the weight of noise $\omega$ in Appendix~\ref{sec:post-processing}. The larger $\omega$ is, the higher the noise rates are.

\subsection{Robustness Evaluation}

\label{sec:exp_evaluation}

\paragraph{CartPole} The goal in \textit{CartPole} is to prevent the pole from falling by controlling the cart's direction and velocity. The reward is $+1$ for every step taken, including the termination step. When the cart or pole deviates too much or the episode length is longer than 200, the episode terminates. Due to the unary reward $\{+1\}$ in CartPole, a corrupted reward $-1$ is added as the unexpected error ($e_{-} = 0$). As a result, the reward space $\mathcal{R}$ is extended to $\{+1, -1\}$. Five algorithms $Q$-Learning~\cite{Watkins92q-learning}, CEM~\cite{cem}, SARSA~\cite{sarsa}, DQN~\cite{double-dqn} and DDQN~\cite{dueling-dqn} are evaluated.

\definecolor{Gray}{gray}{0.92}
\begin{table}[tb]
\centering
\caption{Average scores of various RL algorithms on CartPole and Pendulum with noisy rewards ($\tilde{r}$) and surrogate rewards under known ($\hat{r}$) or estimated ($\dot{r}$) noise rates. Note that the results for last two algorithms DDPG (rand-one) $\&$ NAF (rand-all) are on Pendulum, but the others are on CartPole.}
\label{tab:scores-control}
\resizebox{0.48\textwidth}{!}{
\begin{tabular}{@{}c|c|ccccc|cc@{}}
\toprule
\ Noise Rate\   & Reward & $Q$-Learn & CEM & SARSA & DQN & DDQN & DDPG & \ NAF $\ $ \\ \midrule
\multicolumn{1}{c|}{\multirow{3}{*}{$\omega = 0.1$}} & $\tilde{r}$ & 170.0 & 98.1  & 165.2  & 187.2 & \textbf{187.8}  & -1.03  & -4.48          \\ 
\multicolumn{1}{c|}{}                     &  $\hat{r}$   & 165.8 & \textbf{108.9}  & \textbf{173.6} & \textbf{200.0} &  181.4 & \textbf{-0.87} & \textbf{-0.89}            \\ 
\multicolumn{1}{c|}{}                     &  \cellcolor{Gray}$\dot{r}$   & \cellcolor{Gray}\textbf{181.9} & \cellcolor{Gray}99.3  & \cellcolor{Gray}171.5 & \cellcolor{Gray}\textbf{200.0} &  \cellcolor{Gray}185.6 & \cellcolor{Gray}-0.90 & \cellcolor{Gray}-1.13           \\ \midrule
\multicolumn{1}{c|}{\multirow{3}{*}{$\omega = 0.3$}} & $\tilde{r}$ & 134.9 & 28.8 & 144.4 & 173.4 & 168.6  & -1.23 & -4.52  \\ 
\multicolumn{1}{c|}{}                     & $\hat{r}$ & 149.3 & \textbf{85.9} &  152.4  & 175.3 & 198.7 & \textbf{-1.03} & \textbf{-1.15}  \\ 
\multicolumn{1}{c|}{}                     &  \cellcolor{Gray}$\dot{r}$   & \cellcolor{Gray}\textbf{161.1} & \cellcolor{Gray}\textbf{82.2} & \cellcolor{Gray}\textbf{159.6} & \cellcolor{Gray}\textbf{186.7} & \cellcolor{Gray}\textbf{200.0} & \cellcolor{Gray}-1.05  & \cellcolor{Gray}-1.36              \\ \midrule
\multicolumn{1}{c|}{\multirow{3}{*}{$\omega = 0.7$}} & $\tilde{r}$ & 56.6 & 19.2 & 12.6 & 17.2 & 11.8  & -8.76 & -7.35     \\ 
\multicolumn{1}{c|}{}                     & $\hat{r}$ & \textbf{177.6} & \textbf{87.1} & 151.4  & 185.8 & \textbf{195.2} & \textbf{-1.09}  & \textbf{-2.26}  \\
\multicolumn{1}{c|}{}                     &  \cellcolor{Gray}$\dot{r}$  & \cellcolor{Gray}172.1 & \cellcolor{Gray}83.0  & \cellcolor{Gray}\textbf{174.4} & \cellcolor{Gray}\textbf{189.3} & \cellcolor{Gray}191.3 &  \cellcolor{Gray}--  & \cellcolor{Gray}--             \\
 \bottomrule
\end{tabular}
}
\end{table}

\begin{figure*}[t]
\centering
\includegraphics[width=0.85\textwidth, trim={0 0 0 6.3cm},clip]{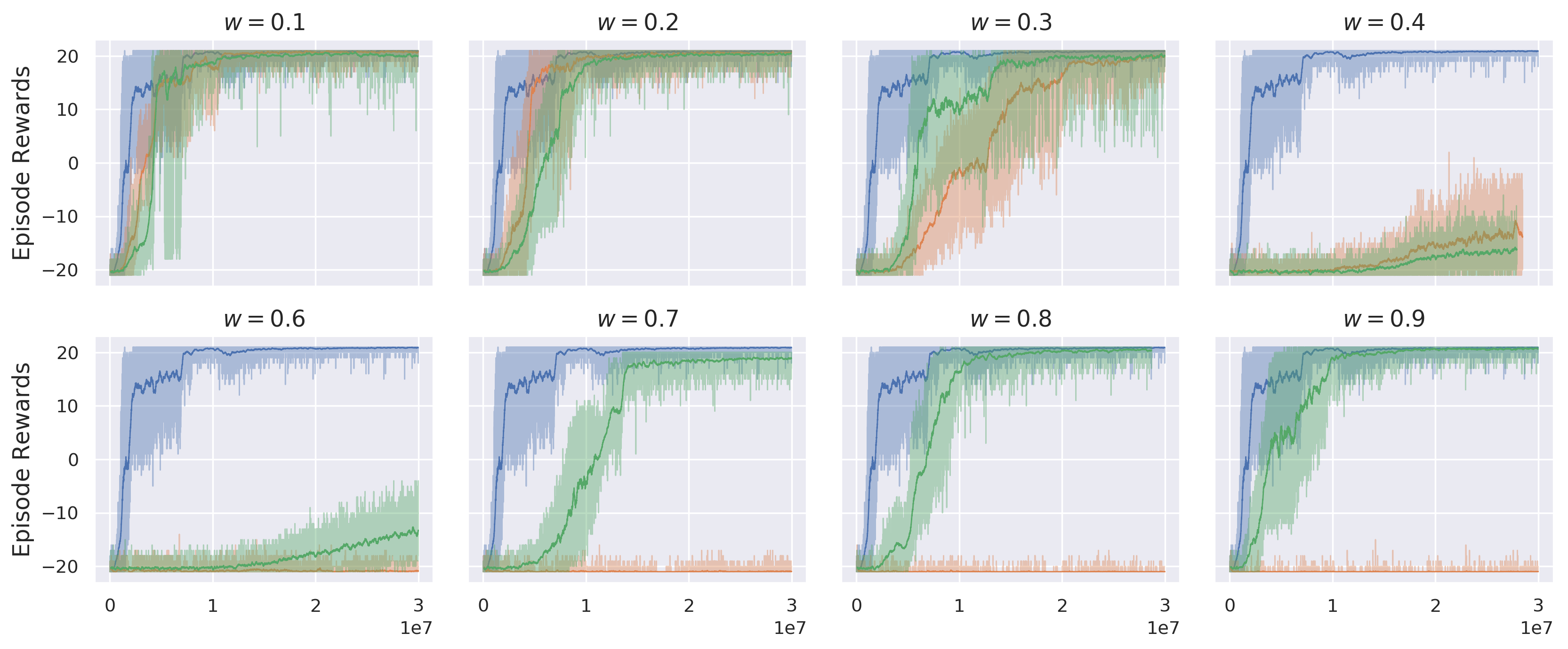}
\caption{Learning curves from PPO on Pong-v4 game with true rewards ($r$)~\crule[blue]{0.30cm}{0.30cm}, noisy rewards ($\tilde{r}$)~\crule[orange]{0.30cm}{0.30cm} and surrogate rewards~($\eta = 1$) ($\hat{r}$)~\crule[green]{0.30cm}{0.30cm}. The noise rate $\omega$ increases from 0.6 to 0.9, with a step of 0.1. Full results are in Appendix~\ref{appendix:atari_results} (Figure~\ref{fig:atari_full}).
}
\label{fig:pong}
\end{figure*}

\begin{table}[t]
\centering
\caption{Average scores of PPO on five selected games with noisy rewards ($\tilde{r}$) and surrogate rewards under known ($\hat{r}$) or estimated ($\dot{r}$) noise rates.}
\label{tab:scores}
\resizebox{0.48\textwidth}{!}{	
\begin{tabular}{@{}cccccccc@{}}
\toprule
\ \ Noise Rate  & Reward & Lift ($\uparrow$) & Alien & Carnival & Phoenix & MsPacman & Seaquest \\ \midrule
\multicolumn{1}{c|}{\multirow{3}{*}{$\omega = 0.1$}} & $\tilde{r}$ & --  & 1835.1 & 1239.3 & 4609.0 & 1709.1 & 849.2 \\ 
\multicolumn{1}{c|}{} & $\hat{r}$ & \textbf{70.4\%$\uparrow$}  & 1737.0 & 3966.8 & \textbf{7586.4} & \textbf{2547.3} & 1610.6 \\ 
\multicolumn{1}{c|}{} & \cellcolor{Gray}$\dot{r}$ & \cellcolor{Gray}\textbf{84.6\%$\uparrow$} & \cellcolor{Gray}\textbf{2844.1} & \cellcolor{Gray}\textbf{5515.0} & \cellcolor{Gray}5668.8 & \cellcolor{Gray}2294.5 & \cellcolor{Gray}\textbf{2333.9} \\ \midrule
\multicolumn{1}{c|}{\multirow{3}{*}{$\omega = 0.3$}} & $\tilde{r}$ & -- & 538.2 & 919.9 & 2600.3 & 1109.6 & 408.7 \\
\multicolumn{1}{c|}{} & $\hat{r}$ & \textbf{119.8\%$\uparrow$}  & \textbf{1668.6} & \textbf{4220.1} & \textbf{4171.6} & 1470.3 & \textbf{727.8} \\ 
\multicolumn{1}{c|}{} & \cellcolor{Gray}$\dot{r}$ & \cellcolor{Gray}\textbf{80.8\%$\uparrow$}  & \cellcolor{Gray}1542.9 & \cellcolor{Gray}4094.3 & \cellcolor{Gray}2589.1 & \cellcolor{Gray}\textbf{1591.2} & \cellcolor{Gray}262.4 \\ \midrule
\multicolumn{1}{c|}{\multirow{3}{*}{$\omega = 0.7$}} & $\tilde{r}$ & -- & 495.2 & 380.3 & 126.5 & 491.6 & 0.0 \\
\multicolumn{1}{c|}{} & $\hat{r}$ & \textbf{757.4\%$\uparrow$}  & \textbf{1805.9} & 4088.9 & \textbf{4970.4} & 1447.8 & \textbf{492.5} \\
\multicolumn{1}{c|}{} & \cellcolor{Gray}$\dot{r}$ & \cellcolor{Gray}\textbf{648.9\%$\uparrow$} & \cellcolor{Gray}1618.0 & \cellcolor{Gray}\textbf{4529.2} & \cellcolor{Gray}2792.1 & \cellcolor{Gray}\textbf{1916.7} & \cellcolor{Gray}328.5 \\

\bottomrule
\end{tabular}
}
\end{table}

\begin{figure*}[t]
\makebox[7pt]{\raisebox{55pt}{\rotatebox[origin=c]{90}{\scriptsize{$\omega=0.9$}}}}%
\begin{subfigure}[b]{.19\textwidth}
    \centering
    \includegraphics[width=\textwidth]{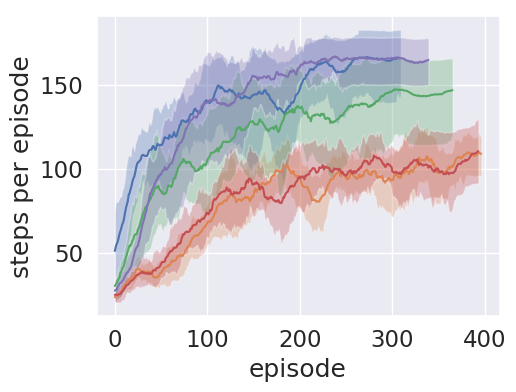}
    \caption{$Q$-Learning}
    \label{fig:cartpole_qlearn_var_part}
\end{subfigure}
\begin{subfigure}[b]{.19\textwidth}
    \centering
    \includegraphics[width=\textwidth]{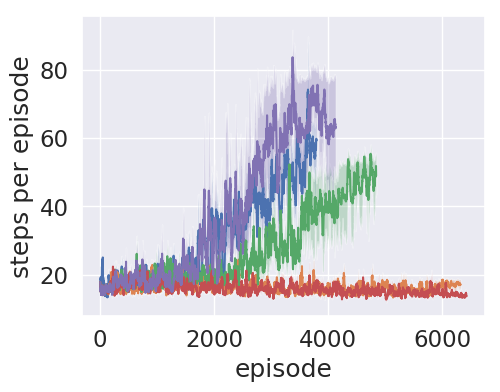}
    \caption{CEM}
    \label{fig:cartpole_cem_var_part}
\end{subfigure}
\begin{subfigure}[b]{.19\textwidth}
    \centering
    \includegraphics[width=\textwidth]{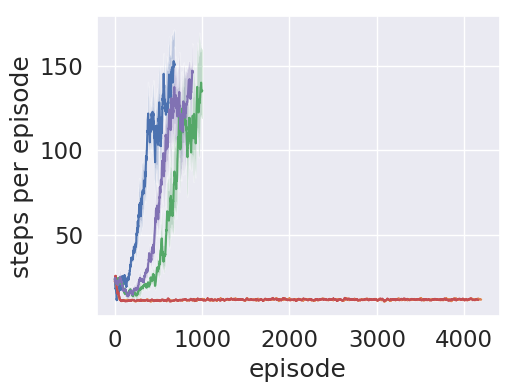}
    \caption{SARSA}
    \label{fig:cartpole_sarsa_var_part}
\end{subfigure}
\begin{subfigure}[b]{.19\textwidth}
    \centering
    \includegraphics[width=\textwidth]{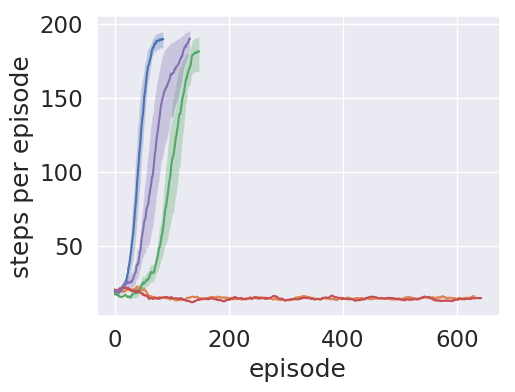}
    \caption{DQN}    
\end{subfigure}
\begin{subfigure}[b]{.19\textwidth}
    \centering
    \includegraphics[width=\textwidth]{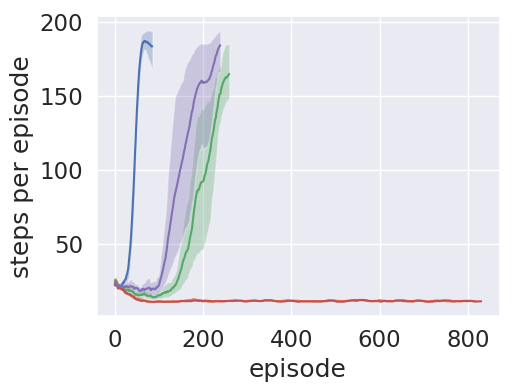}
    \caption{DDQN}
\end{subfigure}
\caption{Learning curves from five \textit{reward robust} RL algorithms on CartPole game with true rewards ($r$)~\crule[blue]{0.30cm}{0.30cm}, noisy rewards ($\tilde{r}$)~\crule[orange]{0.30cm}{0.30cm}, sample-mean noisy rewards~\crule[red]{0.30cm}{0.30cm}, estimated surrogate rewards ($\dot{r}$)~\crule[green]{0.30cm}{0.30cm} and sample-mean estimated surrogate rewards~\crule[blue!40!white]{0.30cm}{0.30cm}. Full results are in Appendix~\ref{appendix:exp_results} (Figure~\ref{fig:cartpole_var}). 
}
\label{fig:cartpole_var_part}
\end{figure*}




In Figure~\ref{fig:cartpole}, we show that our estimator successfully produces meaningful surrogate rewards that adapt the underlying RL algorithms to the noisy settings, without any assumption of the true distribution of rewards. With the noise rate increasing (from 0.1 to 0.9), the models with noisy rewards converge slower due to larger biases. However, we observe that the models (DQN and DDQN) always converge to the best score 200 with the help of surrogate rewards.

In some circumstances (slight noise - see Figure~\ref{fig:cartpole_cem},~\ref{fig:cartpole_sarsa}), the surrogate rewards even lead to faster convergence. This points out an interesting observation: learning with surrogate reward sometimes even outperforms the case with observing the true reward. We conjecture that the way of adding noise and then removing the bias (or moderate noise) introduces implicit exploration. 
This may also imply why some algorithms with estimated confusion matrices $\tilde{\rmC}$ leads to better results than with known $\rmC$ in some cases (Table~\ref{tab:scores-control}).

\paragraph{Pendulum} The goal in \textit{Pendulum} is to keep a frictionless pendulum standing up. Different from the CartPole setting, the rewards in pendulum are continuous: $r \in (-16.28, 0.0]$. The closer the reward is to zero, the better performance the model achieves. For simplicity, we firstly discretized $(-17, 0]$
into 17 intervals: $(-17, -16], (-16, -15], \cdots, (-1, 0]$, with its value approximated using its maximum point. After the quantization step, the surrogate rewards can be estimated using multi-outcome extensions. 

We experiment two popular algorithms, DDPG~\cite{ddpg} and NAF~\cite{naf} in this game. In Figure~\ref{fig:pendulum}, both algorithms perform well with surrogate rewards under different amounts of noise. In most cases, the biases were corrected in the long-term, even when the amount of noise is extensive (e.g., $\omega = 0.7$). The quantitative scores on CartPole and Pendulum are given in Table~\ref{tab:scores-control}, where the scores are averaged based on the last 30 episodes. 
Our reward robust method is able to achieve good scores consistently. 

\paragraph{Atari} We validate our algorithm on seven Atari 2600 games using the state-of-the-art algorithm PPO~\cite{ppo}. The games are chosen to cover a variety of environments. The rewards in the Atari games are clipped into $\{-1, 0, 1\}$. We leave the detailed settings to Appendix~\ref{appendix:exp_setup}.



Results for PPO on Pong-v4 in symmetric noise setting are presented in Figure \ref{fig:pong}. More results on other Atari games and noise settings are given in Appendix~\ref{appendix:atari_results}. Similar to previous results, our surrogate estimator performs consistently well and helps PPO converge to the optimal policy. 
Table~\ref{tab:scores} shows the average scores of PPO on five selected Atari games with different amounts of noise (symmetric $\&$ asymmetric). In particular, when the noise rates $e_{+} = e_{-} > 0.3$, agents with surrogate rewards obtain significant amounts of improvements in average scores. For the cases with unknown $\rmC$ ($\dot{r}$ in Table~\ref{tab:scores}), due to the large state-space (image-input) in confusion matrix estimation, we embed and consider the adjacent frames within a batch as the same state and set the memory size for states as 1,000. Please refer to Appendix~\ref{sec:discretization}
for details.


\subsection{Compatible with Variance Reduction Techniques}
As illustrated in Theorem~\ref{thm:variance}, our surrogate rewards introduce larger variance while conducting unbiased estimation, which are likely to decrease the stability of RL algorithms. Apart from the linear combination idea (a linear trade-off), some variance reduction techniques in statistics (e.g., correlated sampling) can also be applied to our method. Specially, \citeauthor{Romoff2018} proposed to use a reward estimator to compensate for stochastic corrupted-reward signals. It is worthy to notice that their method is designed for variance reduction under zero-mean noises, which is no longer efficacious in more general \textit{perturbed-reward} setting. However, it is potential to integrate their method with our \textit{robust-reward} RL framework because surrogate rewards provide unbiasedness guarantee.

To verify this idea, we repeated the experiments of \textit{Cartpole} 
but included variance reduction step for estimated surrogate rewards. Following \citeauthor{Romoff2018}, we adopted sample mean as a simple approximator during the training and set sequence length as $100$. As shown in Figure~\ref{fig:cartpole_var_part}, the models with only variance reduction technique (red lines) suffer from huge regrets, and in general do not converge to the optimal policies. Nevertheless, the variance reduction step helps surrogate rewards (purple lines) to achieve faster convergence or better performance in multiple cases. Similarly, Table~\ref{tab:scores-control-var} in Appendix~\ref{appendix:est_c}
provides quantitative results which show that our surrogate reward benefits from variance reduction techniques (``ours + VRT''), especially when the noise rate is high.

\section{Conclusions}
Improving the robustness of RL in the settings with perturbed and noisy rewards is important given the fact that such noises are common when exploring a real-world scenario, such as sensor errors. 
In addition, in adversarial environments, perturbed reward could be leveraged 
Different robust RL algorithms have been proposed but they either only focus on the noisy observations or need strong assumption on the unbiased noise distribution for observed rewards.
In this paper, we propose the first simple yet effective RL framework for dealing with biased noisy rewards. The convergence guarantee and finite sample complexity of $Q$-Learning (or its variant) with estimated surrogate rewards are provided. To validate the effectiveness of our approach, extensive experiments are conducted on OpenAI Gym, showing that surrogate rewards successfully rescue models from misleading rewards even at high noise rates.
We believe this work will further shed light on exploring robust RL approaches under different noisy rewards observations in real-world environments.

\section*{Acknowledgement}
This work was supported by National Science Foundation award CCF-1910100 and DARPA award ASED-00009970.

\bibliography{noisy_reward,noise_learning}

\begin{thebibliography}{}

\bibitem[\protect\citeauthoryear{Bekker and
  Goldberger}{2016}]{DBLP:conf/icassp/BekkerG16}
Bekker, A.~J., and Goldberger, J.
\newblock 2016.
\newblock Training deep neural-networks based on unreliable labels.
\newblock In {\em {ICASSP}},  2682--2686.

\bibitem[\protect\citeauthoryear{Brockman \bgroup et al\mbox.\egroup
  }{2016}]{gym}
Brockman, G.; Cheung, V.; Pettersson, L.; Schneider, J.; Schulman, J.; Tang,
  J.; and Zaremba, W.
\newblock 2016.
\newblock Openai gym.

\bibitem[\protect\citeauthoryear{Dawid and Skene}{1979}]{dawid1979maximum}
Dawid, A.~P., and Skene, A.~M.
\newblock 1979.
\newblock Maximum likelihood estimation of observer error-rates using the em
  algorithm.
\newblock {\em Applied statistics}  20--28.

\bibitem[\protect\citeauthoryear{Deisenroth, Rasmussen, and
  Fox}{2011}]{deisenroth2011learning}
Deisenroth, M.~P.; Rasmussen, C.~E.; and Fox, D.
\newblock 2011.
\newblock Learning to control a low-cost manipulator using data-efficient
  reinforcement learning.
\newblock In {\em Robotics: Science and Systems}.

\bibitem[\protect\citeauthoryear{Dhariwal \bgroup et al\mbox.\egroup
  }{2017}]{baselines}
Dhariwal, P.; Hesse, C.; Klimov, O.; Nichol, A.; Plappert, M.; Radford, A.;
  Schulman, J.; Sidor, S.; and Wu, Y.
\newblock 2017.
\newblock Openai baselines.
\newblock \url{https://github.com/openai/baselines}.

\bibitem[\protect\citeauthoryear{Everitt \bgroup et al\mbox.\egroup
  }{2017}]{DBLP:conf/ijcai/EverittKOL17}
Everitt, T.; Krakovna, V.; Orseau, L.; and Legg, S.
\newblock 2017.
\newblock Reinforcement learning with a corrupted reward channel.
\newblock In {\em {IJCAI}},  4705--4713.

\bibitem[\protect\citeauthoryear{Fu, Luo, and
  Levine}{2017}]{DBLP:journals/corr/abs-1710-11248}
Fu, J.; Luo, K.; and Levine, S.
\newblock 2017.
\newblock Learning robust rewards with adversarial inverse reinforcement
  learning.
\newblock {\em CoRR} abs/1710.11248.

\bibitem[\protect\citeauthoryear{Gu \bgroup et al\mbox.\egroup }{2016}]{naf}
Gu, S.; Lillicrap, T.~P.; Sutskever, I.; and Levine, S.
\newblock 2016.
\newblock Continuous deep q-learning with model-based acceleration.
\newblock In {\em {ICML}}, volume~48,  2829--2838.

\bibitem[\protect\citeauthoryear{Gu, Jia, and Choset}{2018}]{gu2018adversary}
Gu, Z.; Jia, Z.; and Choset, H.
\newblock 2018.
\newblock Adversary a3c for robust reinforcement learning.

\bibitem[\protect\citeauthoryear{Hadfield-Menell \bgroup et al\mbox.\egroup
  }{2017}]{hadfield2017inverse}
Hadfield-Menell, D.; Milli, S.; Abbeel, P.; Russell, S.~J.; and Dragan, A.
\newblock 2017.
\newblock Inverse reward design.
\newblock In {\em NIPS},  6765--6774.

\bibitem[\protect\citeauthoryear{Hendrycks \bgroup et al\mbox.\egroup
  }{2018}]{DBLP:journals/corr/abs-1802-05300}
Hendrycks, D.; Mazeika, M.; Wilson, D.; and Gimpel, K.
\newblock 2018.
\newblock Using trusted data to train deep networks on labels corrupted by
  severe noise.
\newblock {\em CoRR} abs/1802.05300.

\bibitem[\protect\citeauthoryear{Huang \bgroup et al\mbox.\egroup
  }{2017}]{huang2017adversarial}
Huang, S.; Papernot, N.; Goodfellow, I.; Duan, Y.; and Abbeel, P.
\newblock 2017.
\newblock Adversarial attacks on neural network policies.
\newblock {\em arXiv preprint arXiv:1702.02284}.

\bibitem[\protect\citeauthoryear{Irpan}{2018}]{rlblogpost}
Irpan, A.
\newblock 2018.
\newblock Deep reinforcement learning doesn't work yet.
\newblock \url{https://www.alexirpan.com/2018/02/14/rl-hard.html}.

\bibitem[\protect\citeauthoryear{Jaakkola, Jordan, and
  Singh}{1993}]{DBLP:conf/nips/JaakkolaJS93}
Jaakkola, T.~S.; Jordan, M.~I.; and Singh, S.~P.
\newblock 1993.
\newblock Convergence of stochastic iterative dynamic programming algorithms.
\newblock In {\em {NIPS}},  703--710.

\bibitem[\protect\citeauthoryear{Kakade}{2003}]{Kakade2003OnTS}
Kakade, S.~M.
\newblock 2003.
\newblock {\em On the Sample Complexity of Reinforcement Learning}.
\newblock Ph.D. Dissertation, University of London.

\bibitem[\protect\citeauthoryear{Karger, Oh, and
  Shah}{2011}]{karger2011iterative}
Karger, D.~R.; Oh, S.; and Shah, D.
\newblock 2011.
\newblock Iterative learning for reliable crowdsourcing systems.
\newblock In {\em NIPS},  1953--1961.

\bibitem[\protect\citeauthoryear{Kearns and
  Singh}{1998}]{DBLP:conf/nips/KearnsS98a}
Kearns, M.~J., and Singh, S.~P.
\newblock 1998.
\newblock Finite-sample convergence rates for q-learning and indirect
  algorithms.
\newblock In {\em {NIPS}},  996--1002.

\bibitem[\protect\citeauthoryear{Kearns and
  Singh}{2000}]{DBLP:conf/colt/KearnsS00}
Kearns, M.~J., and Singh, S.~P.
\newblock 2000.
\newblock Bias-variance error bounds for temporal difference updates.
\newblock In {\em {COLT}},  142--147.

\bibitem[\protect\citeauthoryear{Kearns, Mansour, and
  Ng}{1999}]{DBLP:conf/ijcai/KearnsMN99}
Kearns, M.~J.; Mansour, Y.; and Ng, A.~Y.
\newblock 1999.
\newblock A sparse sampling algorithm for near-optimal planning in large markov
  decision processes.
\newblock In {\em {IJCAI}},  1324--1231.

\bibitem[\protect\citeauthoryear{Khetan, Lipton, and
  Anandkumar}{2017}]{DBLP:journals/corr/abs-1712-04577}
Khetan, A.; Lipton, Z.~C.; and Anandkumar, A.
\newblock 2017.
\newblock Learning from noisy singly-labeled data.
\newblock {\em CoRR} abs/1712.04577.

\bibitem[\protect\citeauthoryear{Kos and
  Song}{2017}]{DBLP:journals/corr/KosS17}
Kos, J., and Song, D.
\newblock 2017.
\newblock Delving into adversarial attacks on deep policies.
\newblock {\em CoRR} abs/1705.06452.

\bibitem[\protect\citeauthoryear{Lillicrap \bgroup et al\mbox.\egroup
  }{2015}]{ddpg}
Lillicrap, T.~P.; Hunt, J.~J.; Pritzel, A.; Heess, N.; Erez, T.; Tassa, Y.;
  Silver, D.; and Wierstra, D.
\newblock 2015.
\newblock Continuous control with deep reinforcement learning.
\newblock {\em CoRR} abs/1509.02971.

\bibitem[\protect\citeauthoryear{Lim, Xu, and
  Mannor}{2016}]{DBLP:journals/mor/LimXM16}
Lim, S.~H.; Xu, H.; and Mannor, S.
\newblock 2016.
\newblock Reinforcement learning in robust markov decision processes.
\newblock {\em Math. Oper. Res.} 41(4):1325--1353.

\bibitem[\protect\citeauthoryear{Lin \bgroup et al\mbox.\egroup
  }{2017}]{DBLP:conf/ijcai/LinHLSLS17}
Lin, Y.; Hong, Z.; Liao, Y.; Shih, M.; Liu, M.; and Sun, M.
\newblock 2017.
\newblock Tactics of adversarial attack on deep reinforcement learning agents.
\newblock In {\em {IJCAI}},  3756--3762.

\bibitem[\protect\citeauthoryear{Liu and Liu}{2017}]{sig15}
Liu, Y., and Liu, M.
\newblock 2017.
\newblock An online learning approach to improving the quality of
  crowd-sourcing.
\newblock {\em IEEE/ACM Transactions on Networking} 25(4):2166--2179.

\bibitem[\protect\citeauthoryear{Liu, Peng, and
  Ihler}{2012}]{liu2012variational}
Liu, Q.; Peng, J.; and Ihler, A.~T.
\newblock 2012.
\newblock Variational inference for crowdsourcing.
\newblock In {\em {NIPS}},  701--709.

\bibitem[\protect\citeauthoryear{Loftin \bgroup et al\mbox.\egroup
  }{2014}]{Loftin2014}
Loftin, R.~T.; Peng, B.; MacGlashan, J.; Littman, M.~L.; Taylor, M.~E.; Huang,
  J.; and Roberts, D.~L.
\newblock 2014.
\newblock Learning something from nothing: Leveraging implicit human feedback
  strategies.
\newblock In {\em {RO-MAN}},  607--612.
\newblock {IEEE}.

\bibitem[\protect\citeauthoryear{Menon \bgroup et al\mbox.\egroup
  }{2015}]{menon2015learning}
Menon, A.; Van~Rooyen, B.; Ong, C.~S.; and Williamson, B.
\newblock 2015.
\newblock Learning from corrupted binary labels via class-probability
  estimation.
\newblock In {\em ICML},  125--134.

\bibitem[\protect\citeauthoryear{Mnih \bgroup et al\mbox.\egroup }{2013}]{dqn1}
Mnih, V.; Kavukcuoglu, K.; Silver, D.; Graves, A.; Antonoglou, I.; Wierstra,
  D.; and Riedmiller, M.~A.
\newblock 2013.
\newblock Playing atari with deep reinforcement learning.
\newblock {\em CoRR} abs/1312.5602.

\bibitem[\protect\citeauthoryear{Mnih \bgroup et al\mbox.\egroup
  }{2015}]{mnih2015human}
Mnih, V.; Kavukcuoglu, K.; Silver, D.; Rusu, A.~A.; Veness, J.; Bellemare,
  M.~G.; Graves, A.; Riedmiller, M.; Fidjeland, A.~K.; Ostrovski, G.; et~al.
\newblock 2015.
\newblock Human-level control through deep reinforcement learning.
\newblock {\em Nature} 518(7540):529.

\bibitem[\protect\citeauthoryear{Moreno \bgroup et al\mbox.\egroup
  }{2006}]{moreno2006noisy}
Moreno, A.; Mart{\'\i}n, J.~D.; Soria, E.; Magdalena, R.; and Mart{\'\i}nez, M.
\newblock 2006.
\newblock Noisy reinforcements in reinforcement learning: some case studies
  based on gridworlds.
\newblock In {\em WSEAS},  296--300.

\bibitem[\protect\citeauthoryear{Natarajan \bgroup et al\mbox.\egroup
  }{2013}]{natarajan2013learning}
Natarajan, N.; Dhillon, I.~S.; Ravikumar, P.~K.; and Tewari, A.
\newblock 2013.
\newblock Learning with noisy labels.
\newblock In {\em Advances in neural information processing systems},
  1196--1204.

\bibitem[\protect\citeauthoryear{Pendrith, Ryan, and
  others}{1997}]{pendrith1997estimator}
Pendrith, M.~D.; Ryan, M.~R.; et~al.
\newblock 1997.
\newblock {\em Estimator variance in reinforcement learning: Theoretical
  problems and practical solutions}.

\bibitem[\protect\citeauthoryear{Pinto \bgroup et al\mbox.\egroup
  }{2017}]{DBLP:conf/icml/PintoDSG17}
Pinto, L.; Davidson, J.; Sukthankar, R.; and Gupta, A.
\newblock 2017.
\newblock Robust adversarial reinforcement learning.
\newblock In {\em {ICML}}, volume~70,  2817--2826.

\bibitem[\protect\citeauthoryear{Plappert}{2016}]{plappert2016kerasrl}
Plappert, M.
\newblock 2016.
\newblock keras-rl.
\newblock \url{https://github.com/keras-rl/keras-rl}.

\bibitem[\protect\citeauthoryear{Rajeswaran \bgroup et al\mbox.\egroup
  }{2016}]{DBLP:journals/corr/RajeswaranGLR16}
Rajeswaran, A.; Ghotra, S.; Levine, S.; and Ravindran, B.
\newblock 2016.
\newblock Epopt: Learning robust neural network policies using model ensembles.
\newblock {\em CoRR} abs/1610.01283.

\bibitem[\protect\citeauthoryear{Romoff \bgroup et al\mbox.\egroup
  }{2018}]{Romoff2018}
Romoff, J.; Pich{\'{e}}, A.; Henderson, P.; Fran{\c{c}}ois{-}Lavet, V.; and
  Pineau, J.
\newblock 2018.
\newblock Reward estimation for variance reduction in deep reinforcement
  learning.
\newblock {\em CoRR} abs/1805.03359.

\bibitem[\protect\citeauthoryear{Roy, Xu, and
  Pokutta}{2017}]{DBLP:journals/corr/RoyXP17}
Roy, A.; Xu, H.; and Pokutta, S.
\newblock 2017.
\newblock Reinforcement learning under model mismatch.
\newblock {\em CoRR} abs/1706.04711.

\bibitem[\protect\citeauthoryear{Schulman \bgroup et al\mbox.\egroup
  }{2017}]{ppo}
Schulman, J.; Wolski, F.; Dhariwal, P.; Radford, A.; and Klimov, O.
\newblock 2017.
\newblock Proximal policy optimization algorithms.
\newblock {\em CoRR} abs/1707.06347.

\bibitem[\protect\citeauthoryear{Schwartz}{1993}]{DBLP:conf/icml/Schwartz93}
Schwartz, A.
\newblock 1993.
\newblock A reinforcement learning method for maximizing undiscounted rewards.
\newblock In {\em {ICML}},  298--305.

\bibitem[\protect\citeauthoryear{Scott \bgroup et al\mbox.\egroup
  }{2013}]{scott2013classification}
Scott, C.; Blanchard, G.; Handy, G.; Pozzi, S.; and Flaska, M.
\newblock 2013.
\newblock Classification with asymmetric label noise: Consistency and maximal
  denoising.
\newblock In {\em COLT},  489--511.

\bibitem[\protect\citeauthoryear{Scott}{2015}]{scott2015rate}
Scott, C.
\newblock 2015.
\newblock A rate of convergence for mixture proportion estimation, with
  application to learning from noisy labels.
\newblock In {\em AISTATS}.

\bibitem[\protect\citeauthoryear{Sobel}{1994}]{DBLP:journals/ior/Sobel94}
Sobel, M.~J.
\newblock 1994.
\newblock Mean-variance tradeoffs in an undiscounted {MDP}.
\newblock {\em Operations Research} 42(1):175--183.

\bibitem[\protect\citeauthoryear{Strens}{2000}]{DBLP:conf/icml/Strens00}
Strens, M. J.~A.
\newblock 2000.
\newblock A bayesian framework for reinforcement learning.
\newblock In {\em {ICML}},  943--950.

\bibitem[\protect\citeauthoryear{Sukhbaatar and
  Fergus}{2014}]{sukhbaatar2014learning}
Sukhbaatar, S., and Fergus, R.
\newblock 2014.
\newblock Learning from noisy labels with deep neural networks.
\newblock {\em arXiv preprint arXiv:1406.2080} 2(3):4.

\bibitem[\protect\citeauthoryear{Sutton and Barto}{1998}]{sarsa}
Sutton, R.~S., and Barto, A.~G.
\newblock 1998.
\newblock {\em Reinforcement learning - an introduction}.
\newblock Adaptive computation and machine learning.

\bibitem[\protect\citeauthoryear{Szita and L{\"{o}}rincz}{2006}]{cem}
Szita, I., and L{\"{o}}rincz, A.
\newblock 2006.
\newblock Learning tetris using the noisy cross-entropy method.
\newblock {\em Neural Computation} 18(12):2936--2941.

\bibitem[\protect\citeauthoryear{Teh \bgroup et al\mbox.\egroup
  }{2017}]{DBLP:conf/nips/TehBCQKHHP17}
Teh, Y.~W.; Bapst, V.; Czarnecki, W.~M.; Quan, J.; Kirkpatrick, J.; Hadsell,
  R.; Heess, N.; and Pascanu, R.
\newblock 2017.
\newblock Distral: Robust multitask reinforcement learning.
\newblock In {\em {NIPS}},  4499--4509.

\bibitem[\protect\citeauthoryear{Tsitsiklis}{1994}]{DBLP:journals/ml/Tsitsiklis94}
Tsitsiklis, J.~N.
\newblock 1994.
\newblock Asynchronous stochastic approximation and q-learning.
\newblock {\em Machine Learning} 16(3):185--202.

\bibitem[\protect\citeauthoryear{van Hasselt, Guez, and
  Silver}{2016}]{double-dqn}
van Hasselt, H.; Guez, A.; and Silver, D.
\newblock 2016.
\newblock Deep reinforcement learning with double q-learning.
\newblock In {\em {AAAI}},  2094--2100.

\bibitem[\protect\citeauthoryear{van Rooyen and
  Williamson}{2015}]{van2015learning}
van Rooyen, B., and Williamson, R.~C.
\newblock 2015.
\newblock Learning in the presence of corruption.
\newblock {\em arXiv preprint arXiv:1504.00091}.

\bibitem[\protect\citeauthoryear{Wang \bgroup et al\mbox.\egroup
  }{2016}]{dueling-dqn}
Wang, Z.; Schaul, T.; Hessel, M.; van Hasselt, H.; Lanctot, M.; and de~Freitas,
  N.
\newblock 2016.
\newblock Dueling network architectures for deep reinforcement learning.
\newblock In {\em {ICML}}, volume~48,  1995--2003.

\bibitem[\protect\citeauthoryear{Watkins and Dayan}{1992}]{Watkins92q-learning}
Watkins, C. J. C.~H., and Dayan, P.
\newblock 1992.
\newblock Q-learning.
\newblock In {\em Machine Learning},  279--292.

\bibitem[\protect\citeauthoryear{Watkins}{1989}]{Watkins:1989}
Watkins, C. J. C.~H.
\newblock 1989.
\newblock {\em Learning from Delayed Rewards}.
\newblock Ph.D. Dissertation, King's College, Cambridge, UK.

\end{thebibliography}
\bibliographystyle{aaai}

\clearpage
\appendix
\setcounter{secnumdepth}{1} 
\onecolumn
\addcontentsline{toc}{section}{Appendices}
\section{Proofs}
\label{appendix:proofs}
\begin{proof}[Proof of Lemma \ref{lemma:1}]

For simplicity, we shorthand $\hat{r}(s_t,a_t,s_{t+1}), \tilde{r}(s_t,a_t,s_{t+1}), r(s_t,a_t,s_{t+1})$ as $\hat{r}, \tilde{r}, r$, and let $r_+,r_-, \hat{r}_+,\hat{r}_-$ denote the general reward levels and corresponding surrogate ones: 
\begin{align}
\mathbb{E}_{\tilde{r}|r}(\hat{r})  = \mathbb{P}_{\tilde{r}|r}(\hat{r} = \hat{r}_{-}) \hat{r}_{-} + \mathbb{P}_{\tilde{r}|r}(\hat{r} = \hat{r}_{+}) \hat{r}_{+}. \label{unbias}
\end{align}
When $r=r_+$, from the definition in Lemma~\ref{lemma:1}:
\begin{align*}
\mathbb{P}_{\tilde{r}|r}(\hat{r} = \hat{r}_{-}) = e_+,
~\mathbb{P}_{\tilde{r}|r}(\hat{r} = \hat{r}_{+}) = 1-e_+.
\end{align*}
Taking the definition of surrogate rewards Eqn.~(\ref{eq:binary}) into Eqn.~(\ref{unbias}), we have
\begin{align*}
\mathbb{E}_{\tilde{r}|r}(\hat{r})  &= e_+ \cdot \hat{r}_{-} + (1-e_+)\cdot \hat{r}_{+}\\
&= e_+ \cdot \frac{(1 - e_{+})r_{-7} - e_{-}r_{+}}{1 - e_{-} - e_{+}} + (1-e_+) \cdot \frac{(1 - e_{-})r_{+} - e_{+}r_{-}}{1 - e_{-} - e_{+}} = r_{+}.
\end{align*}
Similarly, when $r = r_{-}$, it also verifies $\mathbb E_{\tilde{r}|r}[\hat{r}(s_t,a_t,s_{t+1})] = r(s_t,a_t,s_{t+1}).$
\end{proof}



\begin{proof}[Proof of Lemma \ref{lemma:2}]
The idea of constructing unbiased estimator is easily adapted to multi-outcome reward settings via writing out the conditions for the unbiasedness property (s.t. $\mathbb E_{\tilde{r}|r}[\hat{r}] = r.
$). For simplicity, we shorthand $\hat{r}(\tilde{r} = R_i)$ as $\hat{R}_i$ in the following proofs. Similar to Lemma~\ref{lemma:1}, we need to solve the following set of functions to obtain $\hat{r}$: 

$$
\begin{cases}
\begin{aligned}
R_0 &= c_{0,0} \cdot \hat{R}_0 + c_{0,1}  \cdot \hat{R}_1 + \dots +c_{0,M-1}  \cdot \hat{R}_{M-1}\\
R_1 &= c_{1,0}  \cdot \hat{R}_0 + c_{1,1}  \cdot \hat{R}_1+ \dots +c_{1,M-1}  \cdot \hat{R}_{M-1}\\
& \cdots \\
R_{M-1} &= c_{M-1,0}  \cdot \hat{R}_0 + c_{M-1,1}  \cdot \hat{R}_1 + \dots + c_{M-1,M-1}  \cdot \hat{R}_{M-1}
\end{aligned}
\end{cases}
$$

where $\hat{R}_i$ denotes the value of the surrogate reward when the observed reward is $R_{i}$. Define $\rmR := \left[R_0; R_1; \cdots; R_{M-1}\right]$, and $\hat{\rmR}:= [\hat{R}_0, \hat{R}_1, ..., \hat{R}_{M-1}]$, then the above equations are equivalent to:
$\rmR  = \rmC \cdot \hat{\rmR}. 
$ If the confusion matrix $\rmC$ is invertible, we obtain the surrogate reward:
\begin{align*}
\hat{\rmR} = \rmC^{-1} \cdot \rmR. 
\end{align*}
According to above definition, for any true reward level $R_i, i=0, 1, \cdots, M-1$, we have
\begin{align*}
\mathbb E_{\tilde{r}|r=R_i}[\hat{r}] = c_{i,0}  \cdot \hat{R}_0 + c_{i,1}  \cdot \hat{R}_1 + \dots + c_{i,M-1}  \cdot \hat{R}_{M-1}=R_{i}.    
\end{align*}

\end{proof}


Furthermore, the probabilities for observing surrogate rewards can be written as follows:
\begin{align*}
\hat{\mathbf{P}} &= \left[\hat{p}_1, \hat{p}_2, \cdots, \hat{p}_M \right] = \left[\sum_{j} p_j c_{j,1}, \sum_{j} p_j c_{j,2}, \cdots, \sum_{j} p_j c_{j,M} \right],
\end{align*}
where $\hat{p}_i = \sum_{j} p_j c_{j,i}$, and $\hat{p}_i$, $p_i$ represent the probabilities of occurrence for surrogate reward $\hat{R}_i$ and true reward $R_i$ respectively.

\begin{corollary}
Let $\hat{p}_i$ and $p_i$ denote the probabilities of occurrence for surrogate reward $\hat{r}(\tilde{r}=R_i)$ and true reward $R_i$. Then the surrogate reward satisfies,
\label{corollary:1}
\begin{align}
\label{eq:p_reward}
\sum_{s_{t+1} \in \mathcal{S}}\mathbb{P}_a(s_t, s_{t+1}) r(s_t, a, s_{t+1}) = \sum_{j} p_j R_j= \sum_{j} \hat{p}_j \hat{R}_j.
\end{align}
\end{corollary}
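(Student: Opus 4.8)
The plan is to prove the two equalities in turn, as they have rather different characters. The leftmost one is a bookkeeping identity that merely regroups a finite sum by reward value, while the second one is the genuine content and follows directly from the unbiasedness identity of Lemma~\ref{lemma:2}. First I would fix notation by taking $p_j$ to be the probability that the realized true reward equals $R_j$. Since we work in the deterministic-reward regime where $r$ is a function of the transition, this marginal is obtained by partitioning the next-state sum, $p_j = \sum_{s_{t+1}\,:\,r(s_t,a,s_{t+1}) = R_j}\mathbb{P}_a(s_t,s_{t+1})$. Substituting this grouping into $\sum_{s_{t+1}}\mathbb{P}_a(s_t,s_{t+1})\,r(s_t,a,s_{t+1})$ and collecting all transitions that yield a common reward level immediately gives $\sum_j p_j R_j$, establishing the first equality with no new machinery.

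For the central equality $\sum_j p_j R_j = \sum_j \hat{p}_j \hat{R}_j$, I would start from the right-hand side and substitute the marginal relation $\hat{p}_i = \sum_j p_j c_{j,i}$ recalled immediately before the corollary. Interchanging the (finite) order of summation rewrites $\sum_i \hat{p}_i \hat{R}_i$ as $\sum_j p_j\bigl(\sum_i c_{j,i}\hat{R}_i\bigr)$. The inner sum $\sum_i c_{j,i}\hat{R}_i$ is precisely the conditional expectation $\mathbb{E}_{\tilde{r}\mid r = R_j}[\hat{r}]$, which Lemma~\ref{lemma:2} guarantees to equal $R_j$ by construction of $\hat{\rmR} = \rmC^{-1}\rmR$. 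Hence the double sum collapses to $\sum_j p_j R_j$, matching the middle term and closing the chain.

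There is no serious obstacle here: the corollary is essentially a one-line consequence of Lemma~\ref{lemma:2} together with the linearity of the marginalization $\hat{p}_i = \sum_j p_j c_{j,i}$. The only points that demand care are (i) justifying that $p_j$ is the transition-induced marginal, so that the leftmost equality is a genuine regrouping rather than a hidden assumption, and (ii) keeping the row versus column indices of $\rmC$ consistent when swapping the summation order, so that the inner sum is exactly the row of $\rmC$ against which Lemma~\ref{lemma:2} certifies unbiasedness rather than its transpose.
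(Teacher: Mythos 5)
Your proposal is correct and matches the paper's own proof essentially step for step: both establish the first equality by regrouping the transition sum over $s_{t+1}$ by reward level to obtain $\sum_j p_j R_j$, and both prove the second by substituting $\hat{p}_j = \sum_k p_k c_{k,j}$, swapping the finite sums, and collapsing the inner sum $\sum_j c_{k,j}\hat{R}_j$ to $R_k$ via the unbiasedness guaranteed by Lemma~\ref{lemma:2}. Your added care about index orientation in $\rmC$ is a sound precaution but introduces nothing beyond the paper's argument.
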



\begin{proof}[Proof of Corollary \ref{corollary:1}]
From Lemma~\ref{lemma:2}, we have,
\begin{align*}
\sum_{s_{t+1} \in \mathcal{S}}\mathbb{P}_a(s_t, s_{t+1}) r(s_t, a, s_{t+1}) = \sum_{s_{t+1} \in \mathcal{S}; R_j \in \mathcal{R}}\mathbb{P}_a(s_t, s_{t+1}, R_j) R_j \\
= \sum_{R_j \in \mathcal{R}} \sum_{s_{t+1} \in \mathcal{S}} \mathbb{P}_a(s_t, s_{t+1}) R_j = \sum_{R_j \in \mathcal{R}} p_j R_j = \sum_{j} p_j R_j.
\end{align*}
Consequently, 
\begin{align*}
\sum_{j} \hat{p}_j \hat{R}_j &= \sum_{j} \sum_{k} p_k c_{k,j} \hat{R}_j = \sum_{k} p_k  \sum_{j} c_{k,j} \hat{R}_j \\
&= \sum_{k} p_k R_k = \sum_{s_{t+1} \in \mathcal{S}}\mathbb{P}_a(s_t, s_{t+1}) r(s_t, a, s_{t+1}).
\end{align*}
\end{proof}

To establish Theorem~\ref{thm:convergence}, we need an auxiliary result (Lemma~\ref{lemma:3}) from stochastic process approximation, which is widely adopted for the convergence proof for $Q$-Learning~\cite{DBLP:conf/nips/JaakkolaJS93,DBLP:journals/ml/Tsitsiklis94}.

\begin{lemma}
The random process $\{\Delta_t\}$ taking values in $\mathbb{R}^{n}$ and defined as 
$$
\Delta_{t+1}(x) = (1 - \alpha_t(x)) \Delta_t(x) + \alpha_t(x) F_t(x)
$$
converges to zero w.p.1 under the following assumptions:
\begin{itemize}
\item $0 \leq \alpha_t \leq 1$, $\sum_t \alpha_t(x) = \infty$ and $\sum_t \alpha_t(x)^{2} < \infty$;
\item $|| \mathbb{E} \left[ F_t(x) | \mathcal{F}_t \right] ||_W \leq \gamma || \Delta_t ||$, with $\gamma < 1$;
\item ${\textbf{var}} \left[ F_t(x) | \mathcal{F}_t \right] \leq C(1 + || \Delta_t||^2_W)$, for $C > 0$.
\end{itemize}
Here $\mathcal{F}_t = \{\Delta_t, \Delta_{t-1}, \cdots, F_{t-1}\, \cdots, \alpha_{t}, \cdots \}$ stands for the past at step $t$, $\alpha_t(x)$ is allowed to depend on the past insofar as the above conditions remain valid. The notation $||\cdot||_W$ refers to some weighted maximum norm.
\label{lemma:3}
\end{lemma}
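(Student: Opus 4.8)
The statement is the classical stochastic-approximation convergence result of \cite{DBLP:conf/nips/JaakkolaJS93,DBLP:journals/ml/Tsitsiklis94}, and the plan is to reduce it to two effects acting on the weighted max-norm $\|\Delta_t\|_W$: a contraction that drives $\Delta_t$ toward zero in conditional mean, and a martingale fluctuation that is washed out by the step-size conditions. First I would split the increment by writing $g_t(x) := \mathbb{E}[F_t(x)\mid\mathcal{F}_t]$ and $w_t(x) := F_t(x) - g_t(x)$, so that the recursion reads $\Delta_{t+1}(x) = (1-\alpha_t(x))\Delta_t(x) + \alpha_t(x)\,g_t(x) + \alpha_t(x)\,w_t(x)$, where by construction $\{w_t\}$ is a martingale-difference sequence with $\mathbb{E}[w_t(x)\mid\mathcal{F}_t]=0$ and, by the third hypothesis, $\mathrm{var}[w_t(x)\mid\mathcal{F}_t]\le C(1+\|\Delta_t\|_W^2)$, while the second hypothesis gives the contraction $\|g_t\|_W \le \gamma\|\Delta_t\|$ with $\gamma<1$.

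The second step is an auxiliary noise-only lemma. For each coordinate $x$ I would study $U_{t+1}(x) = (1-\alpha_t(x))U_t(x) + \alpha_t(x)\,w_t(x)$ with $U_0=0$, under the temporary assumption that $\|\Delta_t\|_W$ is bounded (so that $\mathrm{var}[w_t\mid\mathcal{F}_t]\le A$ for some constant $A$). Taking conditional second moments gives $\mathbb{E}[U_{t+1}(x)^2\mid\mathcal{F}_t] \le (1-\alpha_t(x))^2 U_t(x)^2 + \alpha_t(x)^2 A$, and a Robbins--Siegmund / supermartingale argument together with $\sum_t\alpha_t=\infty$ and $\sum_t\alpha_t^2<\infty$ yields $U_t(x)\to0$ w.p.1. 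This shows that the accumulated noise, on its own, vanishes.

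The third step is an iterated squeezing of an upper envelope. Fix $\epsilon>0$ with $\gamma+\epsilon<1$ and suppose it is known that $\|\Delta_t\|_W\le D$ for all large $t$. The contraction gives $\|g_t\|_W\le\gamma D$, so comparing $\Delta_t$ with the deterministic bounding recursion $\beta_{t+1}=(1-\alpha_t)\beta_t+\alpha_t(\gamma D)$, which converges to $\gamma D$, and absorbing the vanishing fluctuation from the second step (contributing at most $\epsilon D$ in the limit), one concludes that eventually $\|\Delta_t\|_W\le(\gamma+\epsilon)D$. Iterating this refinement produces the bounds $D,\ (\gamma+\epsilon)D,\ (\gamma+\epsilon)^2 D,\ \dots\to 0$, and therefore $\Delta_t\to0$ w.p.1.

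The hard part will be the two places where the feedback between noise and state must be controlled: first, establishing that $\{\Delta_t\}$ is bounded w.p.1 in the first place, and second, making the comparison in the squeezing step rigorous. The difficulty is that the variance bound $C(1+\|\Delta_t\|_W^2)$ lets the noise grow with $\|\Delta_t\|_W$, so a naive estimate could let $\Delta_t$ amplify itself; one must verify that the strict contraction $\gamma<1$ dominates this loop. I would handle boundedness by a rescaling/stopping-time argument, normalizing $\Delta_t$ by its running maximum and showing the rescaled process cannot escape a fixed ball, which is exactly the technical core supplied by the cited stochastic-approximation lemma.
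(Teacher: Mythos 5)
Your proposal is correct and is precisely the argument the paper relies on: the paper's ``proof'' of this lemma is nothing more than a pointer to the stochastic-approximation literature (Jaakkola--Jordan--Singh and Tsitsiklis), and your sketch faithfully reconstructs that classical proof --- the decomposition $F_t = \mathbb{E}[F_t\mid\mathcal{F}_t] + w_t$ into a $\gamma$-contraction plus a martingale-difference term, the auxiliary lemma that the noise-only recursion $U_{t+1}=(1-\alpha_t)U_t+\alpha_t w_t$ vanishes under $\sum_t\alpha_t=\infty$, $\sum_t\alpha_t^2<\infty$, and the iterated squeeze $D\to(\gamma+\epsilon)D\to(\gamma+\epsilon)^2D\to\cdots$, with boundedness handled by the rescaling argument. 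You also correctly identify the two genuine technical cruxes (a.s.\ boundedness despite the state-dependent variance $C(1+\|\Delta_t\|_W^2)$, and restarting the noise lemma at the random times where each squeeze takes effect), so your outline matches the cited proof in both structure and detail.
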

\begin{proof}[Proof of Lemma \ref{lemma:3}]
See previous literature~\cite{DBLP:conf/nips/JaakkolaJS93,DBLP:journals/ml/Tsitsiklis94}.
\end{proof}


\begin{proof}[Proof of Theorem \ref{thm:convergence}]
For simplicity, we abbreviate $s_{t}$, $s_{t+1}$, $Q_{t}$, $Q_{t+1}$, $r_t$, $\hat{r}_{t}$ and $\alpha_t$ as $s$, $s'$, $Q$, $Q'$, $r$, $\hat{r}$, and $\alpha$, respectively. 

Subtracting from both sides the quantity $Q^\ast(s, a)$ in~Eqn.~(\ref{eq:rule}):
\begin{align*}
Q'(s, a) - Q^\ast(s, a) =& (1 - \alpha) \left(Q(s, a) - Q^\ast(s, a)\right) + \alpha \left[\hat{r} + \gamma \max_{b \in \mathcal{A}} Q(s', b) - Q^\ast(s, a)\right].
\end{align*}
Let $\Delta_{t}(s, a) = Q(s, a) - Q^\ast(s, a)$ and $F_t(s, a) = \hat{r} + \gamma \max_{b \in \mathcal{A}} Q(s', b) - Q^\ast(s, a)$.
\begin{equation*}
\Delta_{t+1}(s', a) = (1 - \alpha) \Delta_{t}(s, a) + \alpha F_t(s, a) .
\end{equation*}
In consequence,

\begin{align*}
\mathbb{E} \left[ F_t(x) | \mathcal{F}_t \right] &= \sum_{s' \in \mathcal{S}; \hat{r} \in \mathcal{R}} \mathbb{P}_a(s, s', \hat{r})\left[\hat{r} + \gamma \max_{b \in \mathcal{A}} Q(s', b) \right] - Q^\ast(s, a)\\
&= \sum_{s' \in \mathcal{S}; \hat{r} \in \mathcal{R}} \mathbb{P}_a(s, s', \hat{r}) \hat{r} + \sum_{s' \in \mathcal{S}} \mathbb{P}_a(s, s') \left[ \gamma \max_{b \in \mathcal{A}} Q(s', b) - r - \gamma \max_{b \in \mathcal{A}}Q^\ast(s', b) \right] \\
&= \sum_{s' \in \mathcal{S}; \hat{r} \in \mathcal{R}} \mathbb{P}_a(s, s', \hat{r}) \hat{r} - \sum_{s' \in \mathcal{S}} \mathbb{P}_a(s, s') r + \sum_{s' \in \mathcal{S}} \mathbb{P}_a(s, s') \gamma \left[\max_{b \in \mathcal{A}} Q(s', b) - \max_{b \in \mathcal{A}}Q^\ast(s', b) \right] \\
&= \sum_{j} \hat{p}_j \hat{r}_j - \sum_{s' \in \mathcal{S}} \mathbb{P}_a(s, s') r + \sum_{s' \in \mathcal{S}} \mathbb{P}_a(s, s') \gamma \left[  \max_{b \in \mathcal{A}} Q(s', b) - \max_{b \in \mathcal{A}}Q^\ast(s', b)\right] \\
&= \sum_{s' \in \mathcal{S}} \mathbb{P}_a(s, s') \gamma \left[ \max_{b \in \mathcal{A}} Q(s', b) - \max_{b \in \mathcal{A}}Q^\ast(s', b)\right] \\
&\leq \gamma \sum_{s' \in \mathcal{S}} \mathbb{P}_a(s, s') \max_{b \in \mathcal{A}, s' \in \mathcal{S}} \left\lvert Q(s', b) - Q^\ast(s', b) \right\rvert \\
&= \gamma \sum_{s' \in \mathcal{S}} \mathbb{P}_a(s, s') || Q - Q^\ast ||_{\infty} = \gamma || Q - Q^\ast ||_{\infty} = \gamma || \Delta_t ||_{\infty}.
\end{align*}

Finally, 
\begin{align*}
{\mathbf{Var}} \left[ F_t(x) | \mathcal{F}_t \right] 
&= \mathbb{E} \Bigg[\bigg(\hat{r} + \gamma \max_{b \in \mathcal{A}} Q(s', b) - \sum_{s' \in \mathcal{S}; \hat{r} \in \mathcal{R}} \mathbb{P}'(s, s', \hat{r})\left[\hat{r} + \gamma \max_{b \in \mathcal{A}} Q(s', b) \right] \bigg)^2\Bigg] \\
&= {\mathbf{Var}} \left[\hat{r} + \gamma \max_{b \in \mathcal{A}} Q(s', b) | \mathcal{F}_t \right]\\.
\end{align*}
Because $\hat{r}$ is bounded, it can be clearly verified that
\begin{equation*}
{\mathbf{Var}} \left[ F_t(x) | \mathcal{F}_t \right] \leq C(1 + || \Delta_t||^2_W)
\end{equation*}
for some constant $C$.
Then, due to the Lemma~\ref{lemma:3}, $\Delta_t$ converges to zero w.p.1, \textit{i.e.}, $Q'(s,a)$ converges to $Q^\ast(s, a)$.
\end{proof}

The procedure of \textit{Phased $Q$-Learning} is described as Algorithm~\ref{alg:phased_qlearn}:

\begin{algorithm}
\caption{Phased $Q$-Learning}\label{alg:phased_qlearn}
\begin{algorithmic}[]
\STATE {\bfseries Input:} $G(\mathcal{M})$: generative model of $\mathcal{M} = (\mathcal{S}, \mathcal{A}, \mathcal{R}, \mathcal{P}, \gamma)$, $T$: number of iterations.
\STATE {\bfseries Output:} $\hat{V}(s)$: value function, $\hat \pi (s, t)$: policy function.
\STATE Set $\hat{V}_T(s) = 0$ 
\FOR{$t = T - 1, \cdots, 0$}
\STATE Calling $G(\mathcal{M})$ $m$ times for each state-action pair.
    $$
    \hat{\mathbb{P}}_a (s_t, s_{t+1}) = \frac{\# [(s_{t}, a_{t}) \to s_{t+1}]}{m}
    $$
    \STATE Set 
    \begin{align*}
    \hat{V}(s_t) &= \max_{a \in \mathcal{A}}\sum_{s_{t+1} \in \mathcal{S}} \hat{\mathbb{P}}_a(s_t, s_{t+1})\left[ r_t + \gamma \hat{V}(s_{t+1})\right]\\
    \hat \pi (s, t) &= \argmax_{a \in \mathcal{A}} \hat{V}(s_t)
    \end{align*}
\ENDFOR
\STATE {\bfseries return} $\hat{V}(s)$ and $\hat \pi (s, t)$  
\end{algorithmic}
\end{algorithm}

Note that $\hat{\mathbb{P}}$ here is the estimated transition probability, which is different from $\mathbb{P}$ in Eqn.~(\ref{eq:p_reward}).


To obtain the sample complexity results, the range of our surrogate reward needs to be known. Assuming reward $r$ is bounded in $[0, R_{\max}]$, Lemma~\ref{lemma:upper_bound_reward} below states that the surrogate reward is also bounded, when the confusion matrices are invertible:
\begin{lemma}
\label{lemma:upper_bound_reward}
Let $r \in [0, R_{\max}]$ be bounded, where $R_{\max}$ is a constant; suppose $\rmC_{M\times M}$, the confusion matrix, is invertible with its determinant denoting as $\mathrm{det}(\rmC)$. Then the surrogate reward satisfies
\begin{align}
    0 \leq \left| \hat{r} \right| \leq \frac{M}{\mathrm{det}(\rmC)} R_{\max}.
\end{align}
\end{lemma}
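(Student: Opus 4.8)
The plan is to start from the closed form for the surrogate reward established in Lemma~\ref{lemma:2}, namely $\hat{\rmR} = \rmC^{-1}\rmR$, so that componentwise $\hat{R}_i = \sum_{j}(\rmC^{-1})_{i,j}R_j$. Since $0\le R_j\le R_{\max}$ for every $j$, the triangle inequality immediately gives
\begin{align*}
\bigl|\hat{R}_i\bigr| \le \sum_{j}\bigl|(\rmC^{-1})_{i,j}\bigr|\,\lvert R_j\rvert \le R_{\max}\sum_{j}\bigl|(\rmC^{-1})_{i,j}\bigr|,
\end{align*}
so the whole problem reduces to bounding the $\ell_1$ norm of a row of $\rmC^{-1}$ by $M/\mathrm{det}(\rmC)$. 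To expose the determinant, I would write the inverse through the adjugate, $\rmC^{-1} = \mathrm{adj}(\rmC)/\mathrm{det}(\rmC)$, so that each entry $(\rmC^{-1})_{i,j}$ equals, up to sign, the determinant of the $(M-1)\times(M-1)$ submatrix of $\rmC$ obtained by deleting one row and one column, divided by $\mathrm{det}(\rmC)$. It then suffices to show that every such submatrix determinant is at most $1$ in absolute value, since summing over the $M$ values of $j$ would give $\sum_j\bigl|(\rmC^{-1})_{i,j}\bigr| \le M/\mathrm{det}(\rmC)$.

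The key step is therefore bounding the minors of the confusion matrix, and here I would exploit that $\rmC$ is row-stochastic: its entries are nonnegative and every row sums to $1$. Any submatrix obtained by deleting one column inherits nonnegative entries, and each of its rows sums to at most $1$, because we discard only a single nonnegative entry from a row that originally summed to $1$; hence each relevant submatrix is substochastic. For a substochastic matrix, Gershgorin's circle theorem places every eigenvalue $\lambda$ in a disk centred at some diagonal entry $a_{kk}$ with radius $\sum_{l\ne k}a_{kl}$, whence $\lvert\lambda\rvert \le a_{kk}+\sum_{l\ne k}a_{kl} = \sum_l a_{kl} \le 1$. Since the determinant is the product of the eigenvalues, this gives $\lvert\det(\text{submatrix})\rvert = \prod_k\lvert\lambda_k\rvert \le 1$, as required.

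Putting the two pieces together yields $\sum_j\bigl|(\rmC^{-1})_{i,j}\bigr| \le M/\mathrm{det}(\rmC)$ and hence $\lvert\hat{R}_i\rvert \le \frac{M}{\mathrm{det}(\rmC)}R_{\max}$ for every outcome $i$, while the lower bound $0\le\lvert\hat{r}\rvert$ is trivial. I expect the cofactor bound to be the main obstacle: a crude estimate such as Hadamard's inequality would only give growth of order $(M-1)^{(M-1)/2}$ for the minors, which is far too weak, so the argument genuinely needs the substochastic structure (row sums bounded by $1$), not merely that the entries lie in $[0,1]$. Two minor points to treat carefully are the sign/orientation bookkeeping in the adjugate formula and the tacit assumption $\mathrm{det}(\rmC)>0$ (in general the bound should carry $\lvert\mathrm{det}(\rmC)\rvert$).
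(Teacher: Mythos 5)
Your proof is correct, and its skeleton coincides with the paper's: both pass through $\hat{\rmR} = \rmC^{-1}\rmR = \frac{\mathrm{adj}(\rmC)}{\mathrm{det}(\rmC)}\,\rmR$, reduce via the triangle inequality to showing that every cofactor of $\rmC$ has absolute value at most $1$, and then sum over the $M$ columns. Where you genuinely diverge is in how that minor bound is established. The paper argues combinatorially: expanding the minor by the Leibniz formula and taking absolute values, the sum over permutations of products of nonnegative entries is dominated by the product of the row sums, which is at most $1$ since $\rmC$ is row-stochastic (a permanent-type bound). You argue spectrally: the submatrix is substochastic, Gershgorin confines every (possibly complex) eigenvalue to modulus at most the maximal row sum, hence at most $1$, and the determinant, as the product of the eigenvalues, has modulus at most $1$. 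Both are sound; the paper's route is more elementary, while yours makes the role of the substochastic structure conceptually transparent, amounting to $\lvert\det(A)\rvert \le \lVert A\rVert_\infty^{M-1} \le 1$. One quibble with your aside: Hadamard's inequality is not actually too weak here, since combined with row sums at most $1$ it gives $\lVert \text{row}\rVert_2 \le \lVert \text{row}\rVert_1 \le 1$ and hence $\lvert\det\rvert \le 1$ directly; it only fails if one uses nothing beyond entries in $[0,1]$, which is the narrower point you were making. Finally, your remark that the bound should carry $\lvert\mathrm{det}(\rmC)\rvert$ is a genuine correction the paper glosses over: it writes $\mathrm{det}(\rmC)$ with no absolute value, yet stochastic matrices (e.g., the $2\times 2$ anti-identity) can have negative determinant; nothing downstream breaks because the main theorems use $\mathrm{det}(\rmC)^2$, but the lemma as stated tacitly assumes $\mathrm{det}(\rmC)>0$.
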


\begin{proof}[Proof of Lemma \ref{lemma:upper_bound_reward}]
From Eqn.~(\ref{eq:multivariate}), we have,
\begin{align*}
    \hat{\rmR} = \rmC^{-1} \cdot \rmR = \frac{\mathrm{adj}(\rmC)}{\mathrm{det}( \rmC )} \cdot \rmR,
\end{align*}
where $\mathrm{adj}(\rmC)$ is the adjugate matrix of $\rmC$; $\mathrm{det}(\rmC)$ is the determinant of $\rmC$.
It is known from linear algebra that,
\begin{align*}
    \mathrm{adj}(\rmC)_{ij} = (-1)^{i+j} \cdot \rmM_{ji},
\end{align*}
where $\rmM_{ji}$ is the determinant of the $(M - 1) \times (M - 1)$ matrix that results from deleting row $j$ and column $i$ of $\rmC$.
Therefore, $\rmM_{ji}$ is also bounded:
\begin{align*}
\rmM_{ji} \leq \sum_{\sigma \in S_{n}} \left(\left|\mathrm{sgn}(\sigma)\right| \prod\limits_{m = 1} c'_{m, \sigma_n} \right) \leq \prod
\limits_{m=0}^{M-1}\left(\sum\limits_{n=0}^{M-1} c_{m, n}\right) = 1^M = 1,
\end{align*}
where the sum is computed over all permutations $\sigma$ of the set $\{0, 1, \cdots , M-2\}$; $c'$ is the element of $\rmM_{ji}$; $\mathrm{sgn}(\sigma)$ returns a value that is $+1$ whenever the reordering given by $\sigma$ can be achieved by successively interchanging two entries an even number of times, and $-1$ whenever it can not.

Consequently,
\begin{align*}
    \left|\hat{R}_i\right|= \frac{\sum_j \left|\mathrm{adj}(\rmC)_{ij}\right| \cdot \left|R_j\right|}{\mathrm{det}(\rmC)}  \leq \frac{M}{\mathrm{det}(\rmC)} \cdot R_{\max}.
\end{align*}

\end{proof}

\begin{proof}[Proof of Theorem \ref{thm:upper_bound}]
From Hoeffding's inequality, we obtain:
\begin{align*}
    P\left(\left| \sum_{s_{t+1} \in \mathcal{S}} \mathbb{P}_a(s_t, s_{t+1}) V_{t+1}^\ast(s_{t+1}) - \sum_{s_{t+1} \in \mathcal{S}} \hat{\mathbb{P}}_a(s_t, s_{t+1}) V_{t+1}^\ast(s_{t+1}) \right| \geq \epsilon \right) \leq 2\exp{\left(\frac{-2 m\epsilon^2 (1 - \gamma)^2}{R_{\max}^2}\right)},
\end{align*}
because $V_{t}(s_t)$ is bounded within $\frac{R_{\max}}{1 - \gamma}$. 
In the same way, $\hat{r}_t$ is bounded by $\frac{M}{\mathrm{det}(\rmC)} \cdot R_{\max}$ from Lemma~\ref{lemma:upper_bound_reward}. We then have,
\begin{align*}
    P\left(\left| \sum_{\substack{s_{t+1} \in \mathcal{S}\\\hat{r}_t \in \mathcal{\hat{R}}}} \mathbb{P}_a(s_t, s_{t+1}, \hat{r}_t) \hat{r}_t - \sum_{\substack{s_{t+1} \in \mathcal{S}\\\hat{r}_t \in \mathcal{\hat{R}}}} \hat{\mathbb{P}}_a(s_t, s_{t+1}, \hat{r}_t) \hat{r}_t \right| \geq \epsilon\right) \leq 2\exp{\left(\frac{-2 m\epsilon^2 \mathrm{det}(\rmC)^2}{M^2 R_{\max}^2} \right)}.
\end{align*}

Further, due to the unbiasedness of surrogate rewards, we have 
\begin{align*}
    \sum_{s_{t+1} \in \mathcal{S}} \mathbb{P}_a(s_t, s_{t+1}) r_t = \sum_{s_{t+1} \in \mathcal{S}; \hat{r}_t \in \mathcal{\hat{R}}} \mathbb{P}_a(s_t, s_{t+1}, \hat{r}_t) \hat{r}_t.
\end{align*}

As a result,
\begin{align*}
\left|V_{t}^\ast(s) - \hat{V}_{t}(s) \right| &= \max_{a \in \mathcal{A}} \sum_{s_{t+1} \in \mathcal{S}} \mathbb{P}_a(s_t, s_{t+1})\left[ r_t + \gamma V_{t+1}^\ast(s_{t+1}) \right] - \max_{a \in \mathcal{A}} \sum_{s_{t+1} \in \mathcal{S}} \hat{\mathbb{P}}_a(s_{t}, s_{t+1})\left[ \hat{r}_t + \gamma V_{t+1}^\ast(s_{t+1}) \right] \\
& \leq 
\epsilon_1 + \gamma \max_{a \in \mathcal{A}} \left| \sum_{s_{t+1} \in \mathcal{S}} \mathbb{P}_a(s_t, s_{t+1}) V_{t+1}^\ast(s_{t+1}) - \sum_{s_{t+1} \in \mathcal{S}} \hat{\mathbb{P}}_a(s_t, s_{t+1}) V_{t+1}^\ast(s_{t+1}) \right| \\ &\quad +  \max_{a \in \mathcal{A}} \left| \sum_{s_{t+1} \in \mathcal{S}} \mathbb{P}_a(s_t, s_{t+1}) r_t - \sum_{s_{t+1} \in \mathcal{S}; \hat{r}_t \in \mathcal{\hat{R}}} \mathbb{P}_a(s_t, s_{t+1}, \hat{r}_t) \hat{r}_t \right|\\
& \leq \gamma \max_{s \in \mathcal{S}} \left|V_{t+1}^\ast(s) - \hat{V}_{t+1}(s) \right| + \epsilon_1 + \gamma \epsilon_2
\end{align*}
In the same way,
\begin{align*}
\left|V_{t}(s) - \hat{V}_{t}(s) \right| \leq \gamma \max_{s \in \mathcal{S}} \left|V_{t+1}^\ast(s) - \hat{V}_{t+1}(s) \right| + \epsilon_1 + \gamma \epsilon_2
\end{align*}

Recursing the two equations in two directions ($0 \to T$), we get
\begin{align*}
\max_{s \in \mathcal{S}} \left|V^{\ast}(s) - \hat{V}(s) \right| & \leq (\epsilon_1 + \gamma \epsilon_2) + \gamma (\epsilon_1 + \gamma \epsilon_2) + \cdots + \gamma^{T - 1} (\epsilon_1 + \gamma \epsilon_2) \\
    &=  \frac{(\epsilon_1 + \gamma \epsilon_2)(1 - \gamma^{T})}{1 - \gamma} \\
\max_{s \in \mathcal{S}} \left|V(s) - \hat{V}(s) \right| & \leq \frac{(\epsilon_1 + \gamma \epsilon_2)(1 - \gamma^{T})}{1 - \gamma}
\end{align*}

Combining these two inequalities above we have:
$$
\max_{s \in \mathcal{S}} \left|V^{\ast}(s) - V(s) \right| \leq 2 \frac{(\epsilon_1 + \gamma \epsilon_2)(1 - \gamma^{T})}{1 - \gamma} \leq 2 \frac{(\epsilon_1 + \gamma \epsilon_2)}{1 - \gamma}.
$$

Let $\epsilon_1 = \epsilon_2$, so
 $\max_{s \in \mathcal{S}} \left|V^{\ast}(s) - V(s) \right| \leq \epsilon$ as long as
$$\epsilon_1 = \epsilon_2 \leq \frac{(1 - \gamma)\epsilon}{2(1 + \gamma)}.$$

For arbitrarily small $\epsilon$, by choosing $m$ appropriately, there always exists $\epsilon_1 = \epsilon_2 = \frac{(1 - \gamma)\epsilon}{2(1 + \gamma)}$ such that the policy error is bounded within $\epsilon$. That is to say, the \textit{Phased Q-Learning} algorithm can converge to the near optimal policy within finite steps using our proposed surrogate rewards.

Finally, there are $|\mathcal{S}||\mathcal{A}|T$ transitions under which these conditions must hold, where $|\cdot|$ represent the number of elements in a specific set. Using a union bound, the probability of failure in any condition is smaller than 
$$2|\mathcal{S}||\mathcal{A}|T \cdot \exp \left({- m \frac{\epsilon^2(1 - \gamma)^2}{2(1 + \gamma)^2}} \cdot \min\{(1 - \gamma)^2, \frac{\mathrm{det}(\rmC)^2}{M^2}\} \right).$$ 

We set the error rate less than $\delta$, and $m$ should satisfy that
$$
m = O\left(\frac{1}{\epsilon^2(1 - \gamma)^2\mathrm{det}(\rmC)^2}\log\frac{|\mathcal{S}||\mathcal{A}|T}{\delta}\right).
$$
In consequence, after $m|\mathcal{S}||\mathcal{A}|T$ calls, which is, $O\left(\frac{|\mathcal{S}||\mathcal{A}|T}{\epsilon^2(1 - \gamma)^2\mathrm{det}(\rmC)^2}\log\frac{|\mathcal{S}||\mathcal{A}|T}{\delta}\right)$, the value function converges to the optimal one for every state $s$, with probability greater than $1 - \delta$.
\end{proof}
The above bound is for discounted MDP setting with $0 \leq \gamma < 1$. For undiscounted setting $\gamma = 1$, since the total error (for entire trajectory of $T$ time-steps) has to be bounded by $\epsilon$, therefore, the error for each time step has to be bounded by $\frac{\epsilon}{T}$. Repeating our anayslis, we obtain the following upper bound:
$$
O\left(\frac{|\mathcal{S}||\mathcal{A}|T^3}{\epsilon^2\mathrm{det}(\rmC)^2}\log\frac{|\mathcal{S}||\mathcal{A}|T}{\delta}\right).
$$

\begin{proof}[Proof of Theorem \ref{thm:variance}]
\begin{align*}
\mathbf{Var}(\hat{r}) - \mathbf{Var}(r) &= \mathbb{E}\left[(\hat{r} - \mathbb{E}[\hat{r}])^2\right] - \mathbb{E}\left[\left({r} - \mathbb{E}[{r}] \right)^2\right] \\
&= \mathbb{E}[\hat{r}^2] - \mathbb{E}[\hat{r}]^2 + \mathbb{E}[{r}^2] - \mathbb{E}[{r}]^2 \\
&= \sum_{j}\hat{p}_{j}\hat{R_j}^2 - \left(\sum_j \hat{p}_j\hat{R}_j\right)^2 - \left[ \sum_{j}{p}_{j}{R_j}^2 - \left(\sum_j {p}_j{R}_j\right)^2 \right]\\
&= \sum_{j}\hat{p}_{j}\hat{R_j}^2 - \sum_{j}{p}_{j}{R_j}^2 \\
&= \sum_j\sum_i p_i c_{i,j}\hat{R_j}^2 - \sum_j p_j \left(\sum_i c_{j,i} \hat{R_i}\right)^2 \\
&= \sum_j p_j \left( \sum_i c_{j,i} \hat{R_i}^2 - \left(\sum_i c_{j,i}\hat{R_i}\right)^2\right).
\end{align*}
Using the Cauchy–Schwarz inequality,
\begin{align*}
\sum_i c_{j,i} \hat{R_i}^2 = \sum_i \sqrt{c_{j,i}}^2 \cdot \sum_i \left(\sqrt{c_{j,i}}\hat{R_i}\right)^2  \geq \left( \sum_ic_{j,i}\hat{R_i}\right)^2.
\end{align*}
So we get,
\begin{align*}
\mathbf{Var}(\hat{r}) - \mathbf{Var}(r) \geq 0.    
\end{align*}
In addition, 
\begin{align*}
    \mathbf{Var}(\hat{r}) &= \sum_{j}\hat{p}_{j}\hat{R_j}^2 - \left(\sum_j \hat{p}_j\hat{R}_j\right)^2 \leq \sum_{j}\hat{p}_{j}\hat{R_j}^2 \\
    & \leq \sum_{j}\hat{p}_{j} \frac{M^2}{\mathrm{det}(\rmC)^2} \cdot R_{\max}^2 = \frac{M^2}{\mathrm{det}(\rmC)^2} \cdot R_{\max}^2.
\end{align*}

\end{proof}

\section{Experimental Setup}
\label{appendix:exp_setup}

We set up our experiments within the popular OpenAI baselines~\cite{baselines} and keras-rl~\cite{plappert2016kerasrl} framework. Specifically, we integrate the algorithms and interact with OpenAI Gym~\cite{gym} environments (Table~\ref{tab:alg}).

\begin{table}
\centering
\caption{RL algorithms utilized in the robustness evaluation.}
\label{tab:alg}
\begin{tabular}{@{}cc@{}}
\toprule
Environment               & RL Algorithm    \\ \midrule
\multirow{5}{*}{CartPole} & $Q$-Learning~\cite{Watkins:1989} \\
                          & CEM~\cite{cem} \\
                          & SARSA~\cite{sarsa}   \\
                          & DQN~\cite{dqn1,mnih2015human}         \\
                          & DDQN~\cite{dueling-dqn}  \\ \hline
\multirow{2}{*}{Pendulum} & DDPG~\cite{ddpg}\\
                          & NAF~\cite{naf}  \\ \hline
Atari Games               & PPO~\cite{ppo}  \\ \bottomrule
\end{tabular}
\end{table}

\subsection{RL Algorithms}
A set of state-of-the-art reinforcement learning algorithms are experimented with while training under different amounts of noise, including $Q$-Learning~\cite{Watkins:1989,Watkins92q-learning}, Cross-Entropy Method (CEM)~\cite{cem}, Deep SARSA~\cite{sarsa}, Deep $Q$-Network (DQN)~\cite{dqn1,mnih2015human,double-dqn}, Dueling DQN (DDQN)~\cite{dueling-dqn}, Deep Deterministic Policy Gradient (DDPG)~\cite{ddpg}, Continuous DQN (NAF)~\cite{naf} and Proximal Policy Optimization (PPO)~\cite{ppo} algorithms. For each game and algorithm, three policies are trained based on different random initialization to decrease the variance in experiments.

\subsection{Post-Processing Rewards}
\label{sec:post-processing}
We explore both symmetric and asymmetric noise of different noise levels. For symmetric noise, the confusion matrices are symmetric, which means the probabilities of corruption for each reward choice are equivalent. For instance, a confusion matrix $$\mathbf{C} = \begin{bmatrix} 0.8 & 0.2 \\ 0.2 & 0.8 \end{bmatrix}$$ says that $r_1$ could be corrupted into $r_2$ with a probability of 0.2 and so does $r_2$ (weight~=~0.2). 

As for asymmetric noise, two types of random noise are tested: 1) \textit{rand-one}, each reward level can only be perturbed into another reward; 2) \textit{rand-all}, each reward could be perturbed to any other reward. To measure the amount of noise \textit{w.r.t} confusion matrices, we define the weight of noise as follows:
$$
\rmC = (1 - \omega) \cdot \rmI + \omega \cdot \rmN,~\omega \in [0,1],
$$
where $\omega$ controls the weight of noise; $\rmI$ and $\rmN$ denote the identity and noise matrix respectively. Suppose there are $M$ outcomes for true rewards, $\rmN$ writes as:
$$
\rmN = 
\begin{bmatrix} 
n_{0,0} & n_{0,1} & \cdots & n_{0,M-1} \\
\cdots & \cdots & \cdots & \cdots \\
n_{M-1,0} & n_{M-1,1} & \cdots & n_{M-1,M-1} \\
\end{bmatrix},
$$
where for each row $i$, 1) rand-one: randomly choose $j$, \textit{s.t} $n_{i,j} = 1$ and $n_{i,k} \neq 0$ if $k \neq j$; 2) rand-all: generate $M$ random numbers that sum to 1, \textit{i.e.}, $\sum_{j}n_{i,j} = 1$.
For the simplicity, for symmetric noise, we choose $\rmN$ as an anti-identity matrix. As a result, $c_{i,j} = 0$, if $i \neq j$ or $ i + j \neq M$.

\subsection{Perturbed-Reward MDP Example}
To obtain an intuitive view of the reward perturbation model, where the observed rewards are generated based on a reward confusion matrix, and meanwhile evaluate our estimation algorithm's robustness to time-variant noise, we constructed a simple MDP and evaluated the performance of \textit{robust reward $Q$-Learning} (Algorithm~\ref{alg:robust_qlearn_short}) on different noise ratios (both symmetric and asymmetric). The finite MDP is formulated as Figure~\ref{fig:mdp-example}: when the agent reaches state 5, it gets an instant reward of $r_{+} = 1$, otherwise a zero reward $r_{-} = 0$. During the explorations, the rewards are perturbed according to the confusion matrix $\rmC_{2\times2} = \left[1 - e_{-}, e_{-} ; e_{+}, 1 - e_{+} \right]$.

\begin{figure}[H]
\centering
\begin{subfigure}[b]{.30\textwidth}
    \centering
    \includegraphics[width=\textwidth]{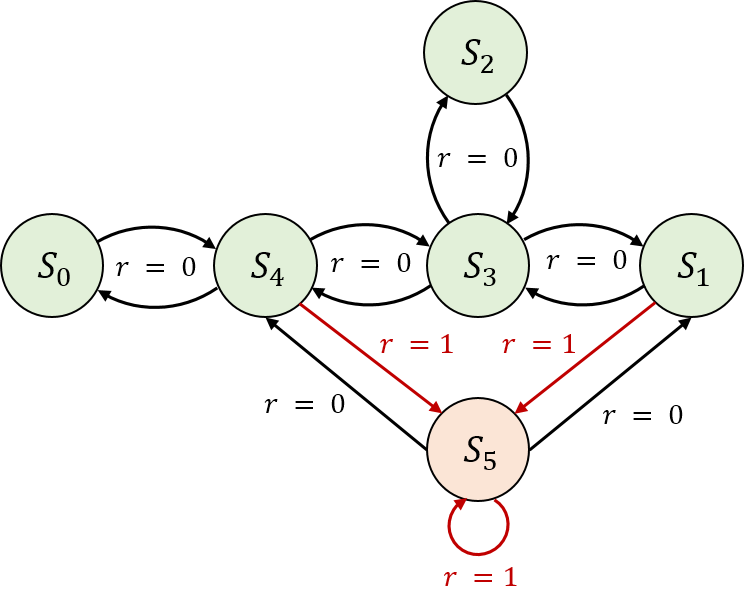}
    \caption{Finite MDP (six-state)}
    \label{fig:mdp-example}
\end{subfigure}
\hspace{0.7cm}
\begin{subfigure}[b]{.35\textwidth}
    \centering
    \includegraphics[width=\textwidth]{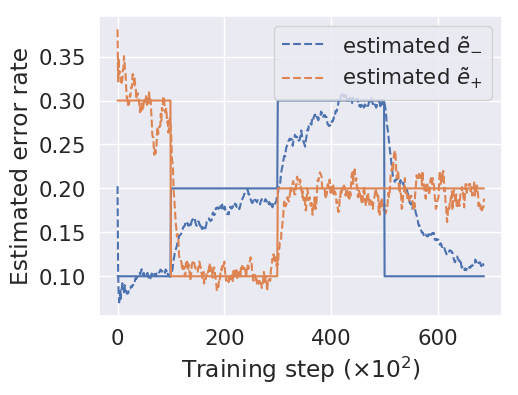}
    \caption{Estimation process in time-variant noise}
    \label{fig:mdp-time-variant}
\end{subfigure}
\caption{Perturbed-Reward MDP Example}
\end{figure}

For time-variant noise, we generated varying amount of noise at different training stages: 1) $e_{-} = 0.1, e_{+} = 0.3$ ($0$ to $1e^4$ steps); 2) $e_{-} = 0.2, e_{+} = 0.1$ ($1e^4$ to $3e^4$ steps); 3) $e_{-} = 0.3, e_{+} = 0.2$ ($3e^4$ to $5e^4$ steps); 4) $e_{-} = 0.1, e_{+} = 0.2$ ($5e^4$ to $7e^4$ steps). In Figure~\ref{fig:mdp-time-variant}, we show that Algorithm~\ref{alg:robust_qlearn_short} is robust against time-variant noise, which dynamically adjusts the estimated $\tilde{\rmC}$ after the noise distribution changes. Note that we set a maximum memory size for collected noisy rewards to let the agents only learn with recent observations.

\subsection{Training Details}
\paragraph{CartPole and Pendulum}
The policies use the default network from keras-rl framework. which is a five-layer fully connected network\footnote{\url{https://github.com/keras-rl/keras-rl/examples}}. There are three hidden layers, each of which has 16 units and followed by a rectified nonlinearity. The last output layer is activated by the linear function. For CartPole, We trained the models using Adam optimizer with the learning rate of $1e^{-3}$ for 10,000 steps. The exploration strategy is Boltzmann policy. For DQN and Dueling-DQN, the update rate of target model and the memory size are $1e^{-2}$ and $50,000$. For Pendulum, We trained DDPG and NAF using Adam optimizer with the learning rate of $5e^{-4}$ for $150,000$ steps. the update rate of target model and the memory size are $1e^{-3}$ and $100,000$. 

\paragraph{Atari Games}
We adopt the pre-processing steps as well as the network architecture from ~\cite{mnih2015human}. Specifically, the input to the network is $84 \times 84 \times 4$, which is a concatenation of the last 4 frames and converted into $84 \times 84$ gray-scale. The network comprises three convolutional layers and two fully connected layers\footnote{\url{https://github.com/openai/baselines/tree/master/baselines/common}}. The kernel size of three convolutional layer are $8 \times 8$ with stride 4 (32 filters), $4 \times 4$ with stride 2 (64 filters) and $3 \times 3$ with stride 1 (64 filters), respectively. Each hidden layer is followed by a rectified nonlinearity. Except for Pong where we train the policies for $3e^7$ steps, all the games are trained for $5e^{7}$ steps with the learning rate of $3e^{-4}$. Note that the rewards in the Atari games are discrete and clipped into $\{-1, 0, 1\}$. Except for Pong game, in which $r = -1$ means missing the ball hit by the adversary, the agents in other games attempt to get higher scores in the episode with binary rewards $0$ and $1$.

\subsection{Discretization for Continuous States}
\label{sec:discretization}
To apply proposed estimation algorithm to continuous-state MDPs, we adopt a discretization procedure similar to the pre-processing of continuous rewards
As stated before, there is a also trade-off between the quantization error as well as the estimation complexity. However, in practice, we found that the estimation step is highly robust to the quantization level. 

For \textit{Cartpole}, the observations (states) are speed and velocity of the cart, and we discretized them into $8~\text{(speeds)} \times 10~\text{(velocity)} = 80$ independent states for collecting noisy rewards for each state-action pair. In inverted \textit{Pendulum} swingup problem, the states ($\cos\theta \in [-1.0, 1.0]; \sin\theta \in [-1.0, 1.0]; d\theta/dt \in [-8.0, +8.0]$, $\theta$ denotes the rotation degree of pendulum) are discretized into $20~(\cos\theta) \times 20~(sin\theta) \times 40~(d\theta/dt) = 16,000$ states. When the state-space is high-dimensional (e.g., the image inputs for \textit{Atari} games), we propose a batch-based adjacency embedding policy. In particular, we embedded a batch (32) of adjacent image observations as one single state. For the consideration of time dependency and efficiency, we set a ``state queue'' which only records the noisy rewards for the latest 1,000 states. The confusion matrices are re-estimated based on current collections of observed noisy rewards every 100 steps.

\section{Estimation of Confusion Matrices}
\label{appendix:est_c}
\subsection{Reward Robust RL Algorithms} 
\label{sec:robust}
As stated in proposed reward robust RL framework,
the confusion matrix can be estimated dynamically based on the aggregated answers, similar to previous literature in supervised learning~\cite{DBLP:journals/corr/abs-1712-04577}. To get a concrete view, we take $Q$-Learning for an example, and the algorithm is called \textit{Reward Robust $Q$-Learning} (Algorithm~\ref{alg:robust_qlearn}). Note that is can be extended to other RL algorithms by plugging confusion matrix estimation steps and the computed surrogate rewards, as shown in the experiments (Figure~\ref{fig:cartpole_est}).

\begin{algorithm*}
\caption{Reward Robust $Q$-Learning}\label{alg:robust_qlearn}
\begin{algorithmic}[]
\STATE {\bfseries Input:} \\
$\mathcal{\tilde{M}} = (\mathcal{S}, \mathcal{A}, \mathcal{\tilde{R}}, \mathcal{P}, \gamma)$: MDP with corrupted reward channel \\
$T$: transition function $T: \mathcal{S} \times \mathcal{A} \rightarrow \mathcal{S}$ \\
$D_{\min} \in \mathbb{N}$: lower bound of collected noisy rewards (to collect enough noisy copies)\\
$\alpha \in (0, 1)$: learning rate in the update rule \\
$\eta \in (0,1)$: weight of unbiased surrogate reward \\
$\tilde{R}(s,a)$: set of observed rewards with a maximum size $D_{\max}$ when the state-action pair is $(s, a)$. 
\STATE {\bfseries Output:} $Q(s, a)$: value function; $\pi (s)$: policy function
\STATE Initialize $Q$: $\mathcal{S} \times \mathcal{A} \rightarrow \mathbb{R}$ arbitrarily
\STATE Set confusion matrix $\tilde{\rmC}$ as identity matrix $\rmI$
\WHILE{$Q$ is not converged}
\STATE Start in state $s \in \mathcal{S}$
    \WHILE{$s$ is not terminal}
\STATE  Calculate $\pi$ according to $Q$ and exploration strategy
\STATE  $a \leftarrow \pi(s)$; $s' \leftarrow T(s, a)$ 
\STATE Observe noisy reward $\tilde{r}(s, a)$ and add it to $\tilde{R}(s, a)$
\IF{$\sum_{(s,a)}|\tilde{R}(s, a)| \geq D_{\min} $}
\STATE Get predicted true reward $\bar{r}(s, a)$ using majority voting in every $\tilde{R}(s, a)$ (using Eqn.~\ref{eq:majority_voting})
\STATE Re-estimate confusion matrix $\tilde{\rmC}$ based on $\tilde{r}(s, a)$ and $\bar{r}(s, a)$ (using Eqn.~\ref{eq:estimate_c})
\ENDIF
\STATE Obtain surrogate reward $\dot{r}(s, a)$ using $\rmR_{proxy} = (1 - \eta) \cdot {\rmR} + \eta 
\cdot \rmC^{-1} {\rmR}$
\STATE $Q(s, a) \leftarrow (1 - \alpha) \cdot Q(s, a) + \alpha \cdot \left(\hat{r}(s, a) + \gamma \cdot \max_{a'} Q(s', a') \right)$ 
\STATE  $s \leftarrow s'$
    \ENDWHILE
\ENDWHILE
\STATE {\bfseries return} $Q(s, a)$ and $\pi(s)$
\end{algorithmic}
\end{algorithm*}

\subsection{State-Dependent Perturbed Reward}
\label{sec:state-dependent}
In previous sections, to let our presentation stay focused, we consider the state-independent perturbed reward environments, which share the same confusion matrix for all states. In other words, the noise for different states is generated within the same distribution. More generally, the generation of $\tilde{r}$ follows a certain function $C : \mathcal{S} \times \mathcal{R} \to \tilde{R}$, where different states may correspond to varied noise distributions (also varied confusion matrices). However, our algorithm is still applicable. One intuitive solution is to maintain different confusion matrices $\rmC_s$ for different states.
It is worthy to notice that Theorem~\ref{thm:convergence} holds because the surrogate rewards produce an unbiased estimation of true rewards for each state, \textit{i.e.}, 
\begin{align*}
\mathbb E_{\tilde{r}|r, s_t}[\hat{r}(s_t,a_t,s_{t+1})] = r(s_t,a_t,s_{t+1}).
\label{eq:state_dependent}
\end{align*}
Then we have, 
\begin{align*}
\mathbb E_{\tilde{r}|r}[\hat{r}(s_t,a_t,s_{t+1})] &= \sum_{s \in \mathcal{S}}\mathbb P_a(s_t, s_{t+1}) r(s_t,a_t,s_{t+1}) = r(s_t,a_t,s_{t+1})
\end{align*}

Furthermore, Theorem~\ref{thm:upper_bound} and~\ref{thm:variance} can be revised as:

\begin{theorem}
\label{thm:upper_bound__s_depend}
(Upper bound) Let $r \in [0, R_{\max}]$ be bounded reward, $\rmC_s$ be invertible reward confusion matrices with $\mathrm{det}(\rmC_s)$ denoting its determinant. For an appropriate choice of $m$, the Phased $Q$-Learning algorithm calls the generative model $G(\mathcal{\hat{M}})$
$$
O\left(\frac{|\mathcal{S}||\mathcal{A}|T}{\epsilon^2(1 - \gamma)^2\min_{s \in \mathcal{S}}\{\mathrm{det}(\rmC_s)\}^2}\log\frac{|\mathcal{S}||\mathcal{A}|T}{\delta}\right)
$$
times in $T$ epochs, and returns a policy such that for all state $s \in \mathcal{S}$, 
$
\left|V_{\pi}(s) - V^{\ast}(s) \right| \leq \epsilon, \epsilon > 0,$ w.p. $\geq 1 - \delta,~0<\delta<1$.
\end{theorem}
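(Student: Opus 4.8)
The plan is to adapt the proof of Theorem~\ref{thm:upper_bound} almost verbatim, making one structural change: the range of the surrogate reward now varies across states. First I would establish a state-dependent analogue of Lemma~\ref{lemma:upper_bound_reward}. Since at each state $s$ the surrogate reward is computed via $\hat{\rmR}_s = \rmC_s^{-1} \rmR$, the same adjugate-over-determinant argument gives $|\hat{r}(s, \cdot)| \leq \frac{M}{\mathrm{det}(\rmC_s)} R_{\max}$ for every $s$. Taking the worst case over states produces the uniform bound $|\hat{r}| \leq \frac{M}{\min_{s \in \mathcal{S}} \mathrm{det}(\rmC_s)} R_{\max}$, which is exactly what the concentration step needs.

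Next I would invoke the state-wise unbiasedness established immediately above the theorem, namely $\mathbb{E}_{\tilde{r}|r, s_t}[\hat{r}] = r$, so that the identity $\sum_{s_{t+1}} \mathbb{P}_a(s_t, s_{t+1}) r_t = \sum_{s_{t+1}, \hat{r}_t} \mathbb{P}_a(s_t, s_{t+1}, \hat{r}_t) \hat{r}_t$ continues to hold on a per-state basis. With unbiasedness and the uniform range bound in hand, I would repeat the two Hoeffding inequalities from the proof of Theorem~\ref{thm:upper_bound}: one controlling the value-function term, which is unchanged since $V_t$ is still bounded by $R_{\max}/(1-\gamma)$, and one controlling the surrogate-reward term, where the range $\frac{M}{\mathrm{det}(\rmC)} R_{\max}$ is simply replaced by $\frac{M}{\min_s \mathrm{det}(\rmC_s)} R_{\max}$. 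The recursion over the $T$ phases and the telescoping of $|V_t^\ast - \hat{V}_t|$ are then identical, yielding the same $\frac{(\epsilon_1 + \gamma \epsilon_2)(1-\gamma^T)}{1-\gamma}$ bound, after which the union bound over the $|\mathcal{S}||\mathcal{A}|T$ transitions and solving for $m$ proceed exactly as before, with $\mathrm{det}(\rmC)^2$ everywhere replaced by $\min_{s}\{\mathrm{det}(\rmC_s)\}^2$.

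The main obstacle is a modeling subtlety rather than a technical one: because each state carries its own confusion matrix, the surrogate reward has a different range at each state, yet Hoeffding's inequality and the subsequent union bound demand a single range valid everywhere. The crux is therefore to justify that replacing the per-state ranges by the worst-case value $\frac{M}{\min_s \mathrm{det}(\rmC_s)} R_{\max}$ is both legitimate, since it dominates every per-state range, and tight enough, since the state with the smallest determinant governs the overall sample complexity. I would also verify that the generative model $G(\mathcal{\hat{M}})$ returns surrogate rewards drawn from the correct state-specific distribution, so that the per-state conditional expectation I rely on is indeed the right one; this is the only place where the state-independent argument could silently break.
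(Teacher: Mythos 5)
Your proposal matches the paper's treatment exactly: the paper gives no separate proof of this theorem, merely noting the state-wise unbiasedness $\mathbb{E}_{\tilde{r}|r,s_t}[\hat{r}(s_t,a_t,s_{t+1})]=r(s_t,a_t,s_{t+1})$ and asserting that Theorem~\ref{thm:upper_bound} ``can be revised,'' and your plan --- a state-dependent analogue of Lemma~\ref{lemma:upper_bound_reward} giving the uniform bound $|\hat{r}| \leq \frac{M}{\min_{s \in \mathcal{S}}\mathrm{det}(\rmC_s)}R_{\max}$, followed by the same two Hoeffding inequalities, the identical recursion and telescoping over the $T$ phases, and the union bound over $|\mathcal{S}||\mathcal{A}|T$ transitions with $\mathrm{det}(\rmC)$ replaced by $\min_s\mathrm{det}(\rmC_s)$ --- is precisely that intended adaptation. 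Your added care in justifying the worst-case range (it dominates every per-state range and governs the complexity through the state with smallest determinant) and in checking that $G(\hat{\mathcal{M}})$ returns surrogate rewards from the correct state-specific distribution is sound and, if anything, more explicit than the paper.
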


\begin{theorem}
\label{thm:variance_s_depend}
Let $r \in [0, R_{\max}]$ be bounded reward and all confusion matrices $\rmC_s$ are invertible. Then, the variance of surrogate reward $\hat{r}$ is bounded as follows:
$$
    \mathbf{Var}(r) \leq \mathbf{Var}(\hat{r}) \leq \frac{M^2}{\min_{s \in \mathcal{S}}\{\mathrm{det}(\rmC_s)\}^2} \cdot R_{\max}^2 .
$$
\end{theorem}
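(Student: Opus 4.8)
The plan is to reduce the state-dependent claim to the already-proved state-independent Theorem~\ref{thm:variance} by conditioning on the state and invoking the law of total variance. First I would observe that, for each fixed state $s$, the surrogate reward is constructed from that state's own confusion matrix via $\hat{\rmR}_s = \rmC_s^{-1}\rmR$, so conditionally on $s$ we are exactly in the setting of Theorem~\ref{thm:variance} with $\rmC$ replaced by $\rmC_s$. This immediately yields the two conditional inequalities $\mathbf{Var}(r\mid s) \leq \mathbf{Var}(\hat{r}\mid s)$ and $\mathbf{Var}(\hat{r}\mid s) \leq \frac{M^2}{\mathrm{det}(\rmC_s)^2}R_{\max}^2$ for every $s \in \mathcal{S}$, with no new calculation beyond reusing the Cauchy--Schwarz step of Theorem~\ref{thm:variance} and the adjugate bound of Lemma~\ref{lemma:upper_bound_reward}.

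Next I would assemble these conditional facts into unconditional bounds. The key structural ingredient is the conditional unbiasedness established in the state-dependent appendix, $\mathbb{E}_{\tilde{r}\mid r, s}[\hat{r}] = r$, which after averaging over the true reward given $s$ gives $\mathbb{E}[\hat{r}\mid s] = \mathbb{E}[r\mid s]$. Applying the law of total variance to both $r$ and $\hat{r}$,
\begin{align*}
\mathbf{Var}(\hat{r}) &= \mathbb{E}_s\!\left[\mathbf{Var}(\hat{r}\mid s)\right] + \mathbf{Var}_s\!\left(\mathbb{E}[\hat{r}\mid s]\right), \\
\mathbf{Var}(r) &= \mathbb{E}_s\!\left[\mathbf{Var}(r\mid s)\right] + \mathbf{Var}_s\!\left(\mathbb{E}[r\mid s]\right),
\end{align*}
the two between-state terms coincide because $\mathbb{E}[\hat{r}\mid s] = \mathbb{E}[r\mid s]$. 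Subtracting then gives $\mathbf{Var}(\hat{r}) - \mathbf{Var}(r) = \mathbb{E}_s[\mathbf{Var}(\hat{r}\mid s) - \mathbf{Var}(r\mid s)] \geq 0$, establishing the lower bound.

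For the upper bound I would bound the second moment directly: $\mathbf{Var}(\hat{r}) \leq \mathbb{E}[\hat{r}^2] = \mathbb{E}_s[\mathbb{E}[\hat{r}^2\mid s]]$. Each inner term is at most $\frac{M^2}{\mathrm{det}(\rmC_s)^2}R_{\max}^2$ by Lemma~\ref{lemma:upper_bound_reward} applied with $\rmC_s$ (using $\sum_j \hat{p}_{s,j} = 1$), and replacing every $\mathrm{det}(\rmC_s)$ by its worst case $\min_{s \in \mathcal{S}}\mathrm{det}(\rmC_s)$ pulls a state-independent constant outside the expectation over $s$, yielding the claimed bound $\frac{M^2}{\min_{s}\{\mathrm{det}(\rmC_s)\}^2}R_{\max}^2$.

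The main obstacle is conceptual rather than computational: one must be careful that $\mathbf{Var}$ here is taken over the joint law of the state and the reward, since the surrogate-reward \emph{values} themselves now vary with $s$. The decomposition works cleanly only because conditional unbiasedness forces the between-state component of the variance to be identical for $r$ and $\hat{r}$; without that observation the cross terms in the difference would not obviously cancel, and the lower bound could fail. Everything else reduces, state by state, to results already in hand, so the only remaining check is that $\min_{s}\mathrm{det}(\rmC_s) > 0$, which holds because each $\rmC_s$ is invertible and $\mathcal{S}$ is finite.
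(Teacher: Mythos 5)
Your proof is correct, and it is worth noting that the paper never actually writes out a proof of this theorem: it appears in the state-dependent appendix only as a ``revision'' of Theorem~\ref{thm:variance}, the implied argument being to rerun that theorem's Cauchy--Schwarz computation verbatim with $\rmC$ replaced by $\rmC_s$ and to bound each $\mathrm{det}(\rmC_s)$ by its worst case over states. Your assembly is genuinely different in how it stitches the per-state facts together. The paper's implicit route only makes sense if $\mathbf{Var}$ is read conditionally on the current state, so that a single confusion matrix is in force and the Theorem~\ref{thm:variance} computation applies literally; your law-of-total-variance decomposition instead handles the variance over the joint law of state and reward, and it isolates the exact structural ingredient --- conditional unbiasedness $\mathbb{E}[\hat{r}\mid s]=\mathbb{E}[r\mid s]$, which forces the between-state components $\mathbf{Var}_s(\mathbb{E}[\hat{r}\mid s])$ and $\mathbf{Var}_s(\mathbb{E}[r\mid s])$ to coincide --- without which the per-state inequalities alone would not yield the unconditional lower bound. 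What each buys: the paper's (implicit) route is shorter and matches its convention elsewhere; yours is robust to the interpretation of $\mathbf{Var}$ and makes the theorem true under the more demanding joint-law reading, which is the honest one once the surrogate values $\rmC_s^{-1}\rmR$ vary with $s$.

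One small slip in your closing remark: invertibility of each $\rmC_s$ plus finiteness of $\mathcal{S}$ gives $\min_s \left|\mathrm{det}(\rmC_s)\right| > 0$, not $\min_s \mathrm{det}(\rmC_s) > 0$. Determinants of confusion matrices can be negative --- the paper's own symmetric noise $\rmC = (1-\omega)\rmI + \omega\rmN$ with anti-identity $\rmN$ has $\mathrm{det}(\rmC) = 1 - 2\omega < 0$ in the binary case for $\omega > 1/2$, a regime the experiments use --- and then $\min_s\{\mathrm{det}(\rmC_s)\}^2$ need not equal $\min_s\{\mathrm{det}(\rmC_s)^2\}$. The repair is to state the bound with $\min_s\left|\mathrm{det}(\rmC_s)\right|$ throughout; this blemish is inherited from the statement of Lemma~\ref{lemma:upper_bound_reward} itself, so it does not detract from the substance of your argument.
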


Let $\tilde{c}_{i,j|s}$ represents the entry of confusion matrix $\rmC_s$, indicating the flipping probability for generating a perturbed outcome for state $s$, \textit{i.e.}, $\tilde{c}_{i,j|s} = \mathbb{P}(\tilde{r}_t = R_k | r_t = R_j, s)$. Then the estimation step (see Eqn~(\ref{eq:estimate_c})) should be replaced by

\begin{align*}
\tilde{c}_{i,j|s} = \frac{\sum_{a \in \mathcal{A}} \#\left[\tilde{r}(s, a) = R_j | \bar{r}(s, a) = R_i\right]}{\sum_{a \in \mathcal{A}} \#[\bar{r}(s, a) = R_i]}.
\end{align*}

\subsection{Experimental Results}
To validate the effectiveness of \textit{robust reward} algorithms (like Algorithm~\ref{alg:robust_qlearn}), where the noise rates are unknown to the agents, we conduct extensive experiments in \textit{CartPole}. It is worthwhile to notice that the noisy rates are unknown in the explorations of RL agents. Besides, we discretize the observation (velocity, angle, etc.) to construct a set of states and implement like Algorithm~\ref{alg:robust_qlearn}. The $\eta$ is set $1.0$ in the experiments.

Figure~\ref{fig:cartpole_est} provides learning curves from five algorithms with different kinds of rewards. 
The proposed estimation algorithms successfully obtain the approximate confusion matrices, and are robust in the unknown noise environments. From Figure~\ref{fig:cartpole_err}, we can observe that the estimation of confusion matrices converges very fast. The results are inspiring because we don't assume any additional knowledge about noise or true reward distribution in the implementation.

\begin{figure*}[htbp!]
\centering
\makebox[7pt]{\raisebox{40pt}{\rotatebox[origin=c]{90}{\scriptsize{$\omega=0.1$}}}}%
\begin{subfigure}[b]{.19\textwidth}
    \centering
    \includegraphics[width=\textwidth]{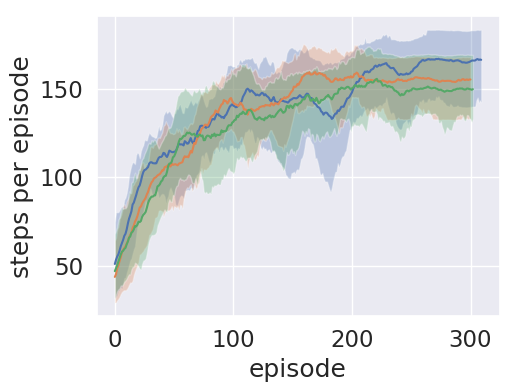}
\end{subfigure}
\begin{subfigure}[b]{.19\textwidth}
    \centering
    \includegraphics[width=\textwidth]{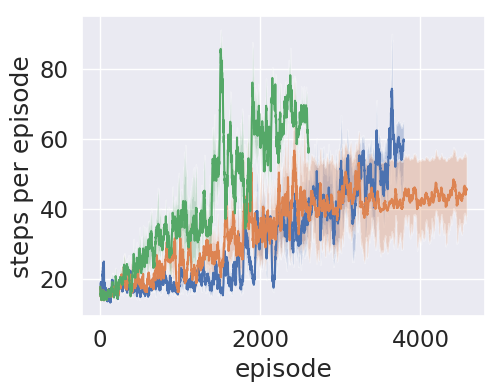}
\end{subfigure}
\begin{subfigure}[b]{.19\textwidth}
    \centering
    \includegraphics[width=\textwidth]{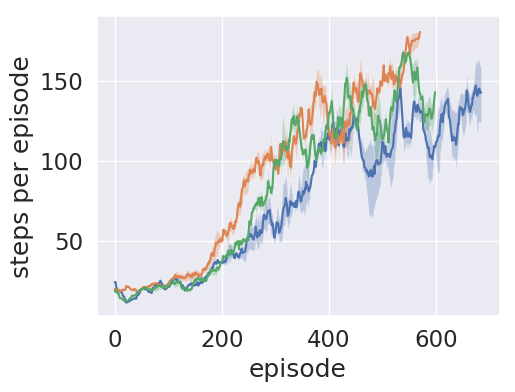}
\end{subfigure}
\begin{subfigure}[b]{.19\textwidth}
    \centering
    \includegraphics[width=\textwidth]{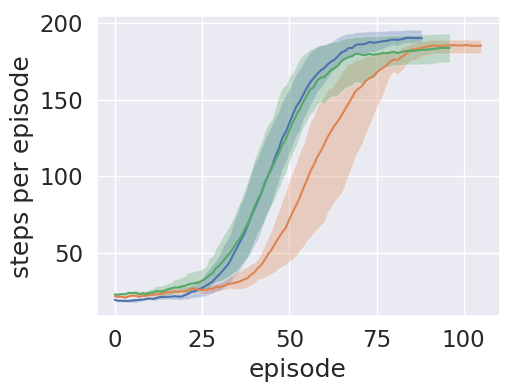}
\end{subfigure}
\begin{subfigure}[b]{.19\textwidth}
    \centering
    \includegraphics[width=\textwidth]{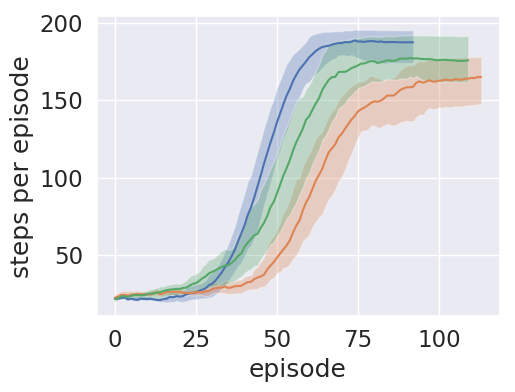}
\end{subfigure}

\makebox[7pt]{\raisebox{40pt}{\rotatebox[origin=c]{90}{\scriptsize{$\omega=0.3$}}}}%
\begin{subfigure}[b]{.19\textwidth}
    \centering
    \includegraphics[width=\textwidth]{cartpole_est2/qlearn-cartpole-0-3.png}
\end{subfigure}
\begin{subfigure}[b]{.19\textwidth}
    \centering
    \includegraphics[width=\textwidth]{cartpole_est2/cem-cartpole-0-3.png}
\end{subfigure}
\begin{subfigure}[b]{.19\textwidth}
    \centering
    \includegraphics[width=\textwidth]{cartpole_est2/sarsa-cartpole-0-3.png}
\end{subfigure}
\begin{subfigure}[b]{.19\textwidth}
    \centering
    \includegraphics[width=\textwidth]{cartpole_est2/dqn-cartpole-0-3.png}
\end{subfigure}
\begin{subfigure}[b]{.19\textwidth}
    \centering
    \includegraphics[width=\textwidth]{cartpole_est2/duel-dqn-cartpole-0-3.png}
\end{subfigure}

\makebox[7pt]{\raisebox{40pt}{\rotatebox[origin=c]{90}{\scriptsize{$\omega=0.7$}}}}%
\begin{subfigure}[b]{.19\textwidth}
    \centering
    \includegraphics[width=\textwidth]{cartpole_est2/qlearn-cartpole-0-7.png}
\end{subfigure}
\begin{subfigure}[b]{.19\textwidth}
    \centering
    \includegraphics[width=\textwidth]{cartpole_est2/cem-cartpole-0-7.png}
\end{subfigure}
\begin{subfigure}[b]{.19\textwidth}
    \centering
    \includegraphics[width=\textwidth]{cartpole_est2/sarsa-cartpole-0-7.png}
\end{subfigure}
\begin{subfigure}[b]{.19\textwidth}
    \centering
    \includegraphics[width=\textwidth]{cartpole_est2/dqn-cartpole-0-7.png}
\end{subfigure}
\begin{subfigure}[b]{.19\textwidth}
    \centering
    \includegraphics[width=\textwidth]{cartpole_est2/duel-dqn-cartpole-0-7.png}
\end{subfigure}

\makebox[7pt]{\raisebox{55pt}{\rotatebox[origin=c]{90}{\scriptsize{$\omega=0.9$}}}}%
\centering
\begin{subfigure}[b]{.19\textwidth}
    \centering
    \includegraphics[width=\textwidth]{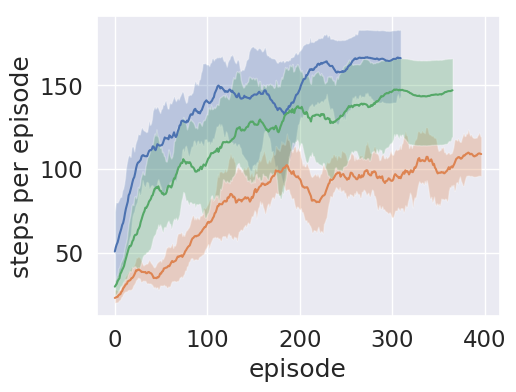}
    \caption{$Q$-Learning}
    \label{fig:cartpole_est_qlearn}
\end{subfigure}
\begin{subfigure}[b]{.19\textwidth}
    \centering
    \includegraphics[width=\textwidth]{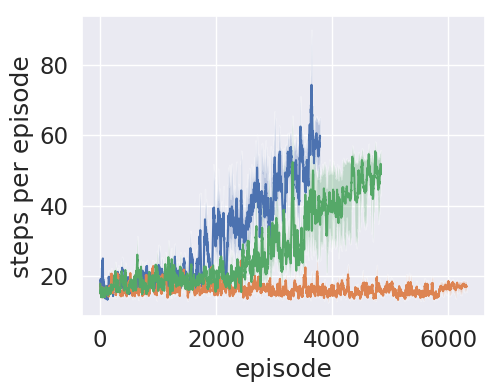}
    \caption{CEM}
    \label{fig:cartpole_est_cem}
\end{subfigure}
\begin{subfigure}[b]{.19\textwidth}
    \centering
    \includegraphics[width=\textwidth]{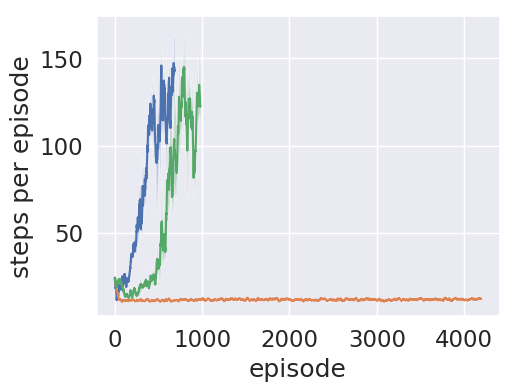}
    \caption{SARSA}
    \label{fig:cartpole_est_sarsa}
\end{subfigure}
\begin{subfigure}[b]{.19\textwidth}
    \centering
    \includegraphics[width=\textwidth]{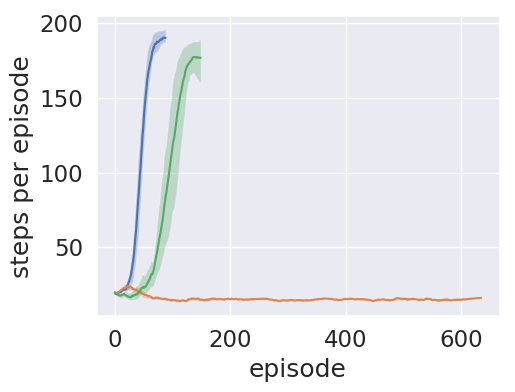}
    \caption{DQN}
    \label{fig:cartpole_est_dqn}
\end{subfigure}
\begin{subfigure}[b]{.19\textwidth}
    \centering
    \includegraphics[width=\textwidth]{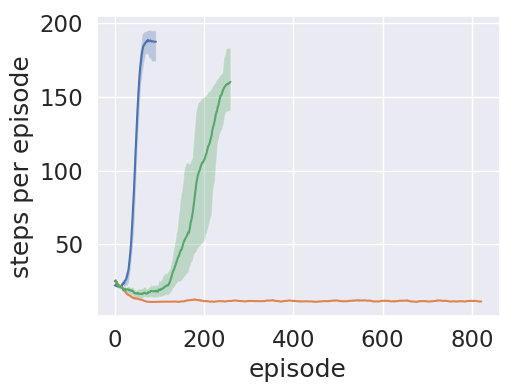}
    \caption{Dueling-DQN}
\end{subfigure}
\caption{Complete learning curves from five \textit{reward robust} RL algorithms (see Algorithm~\ref{alg:robust_qlearn}) on CartPole game with true rewards ($r$)~\crule[blue]{0.30cm}{0.30cm}, noisy rewards ($\tilde{r}$) ($\eta=1$)~\crule[orange]{0.30cm}{0.30cm} and estimated surrogate rewards ($\dot{r}$)~\crule[green]{0.30cm}{0.30cm}. Note that confusion matrices $\rmC$ are unknown to the agents here. From top to the bottom, the noise rates are 0.1, 0.3, 0.7 and 0.9. Here we repeated each experiment 10 times with different random seeds and plotted 10\% to 90\% percentile area with its mean highlighted.}
\label{fig:cartpole_est}
\end{figure*}

\begin{figure*}[htbp!]
\centering
\begin{subfigure}[b]{.19\textwidth}
    \centering
    \includegraphics[width=\textwidth]{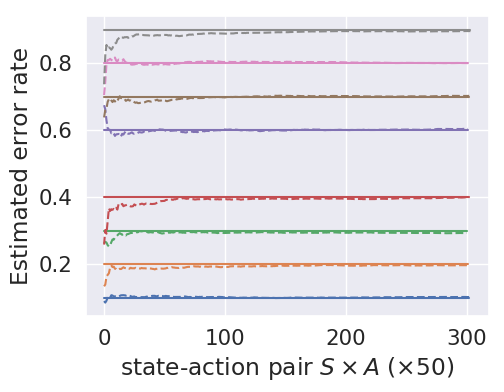}
\end{subfigure}
\begin{subfigure}[b]{.19\textwidth}
    \centering
    \includegraphics[width=\textwidth]{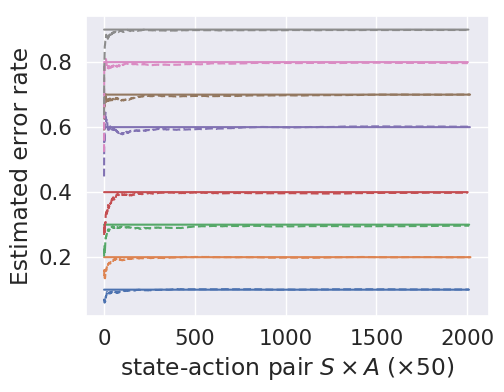}
\end{subfigure}
\begin{subfigure}[b]{.186\textwidth}
    \centering
    \includegraphics[width=\textwidth]{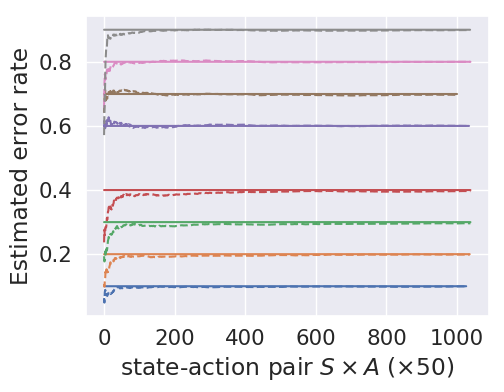}
\end{subfigure}
\begin{subfigure}[b]{.19\textwidth}
    \centering
    \includegraphics[width=\textwidth]{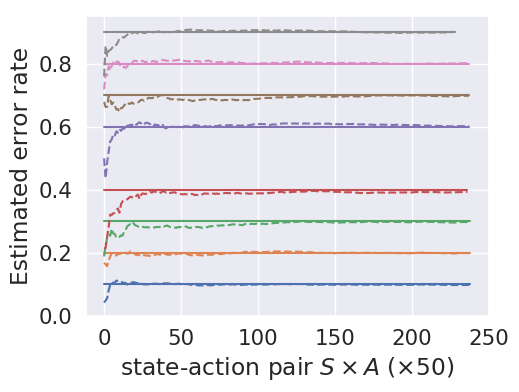}
\end{subfigure}
\begin{subfigure}[b]{.19\textwidth}
    \centering
    \includegraphics[width=\textwidth]{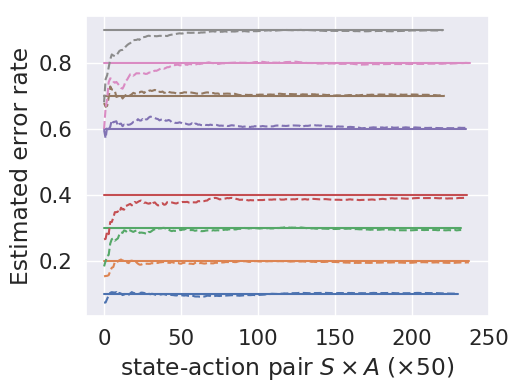}
\end{subfigure}

\centering
\begin{subfigure}[b]{.19\textwidth}
    \centering
    \includegraphics[width=\textwidth]{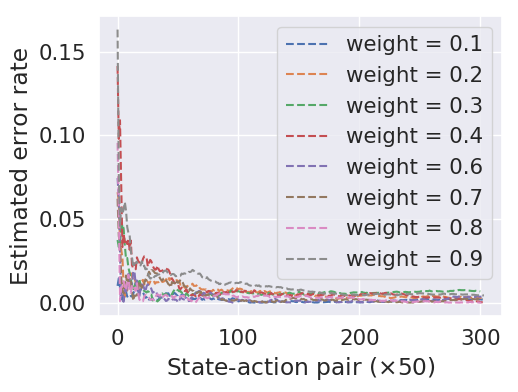}
    \caption{$Q$-Learning}
\end{subfigure}
\begin{subfigure}[b]{.19\textwidth}
    \centering
    \includegraphics[width=\textwidth]{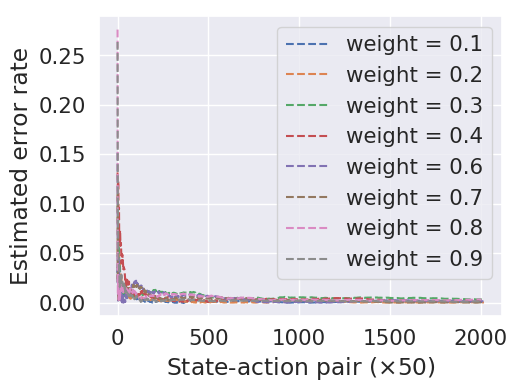}
    \caption{CEM}
\end{subfigure}
\begin{subfigure}[b]{.186\textwidth}
    \centering
    \includegraphics[width=\textwidth]{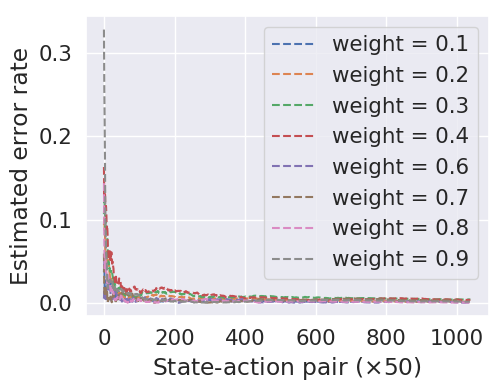}
    \caption{SARSA}
\end{subfigure}
\begin{subfigure}[b]{.19\textwidth}
    \centering
    \includegraphics[width=\textwidth]{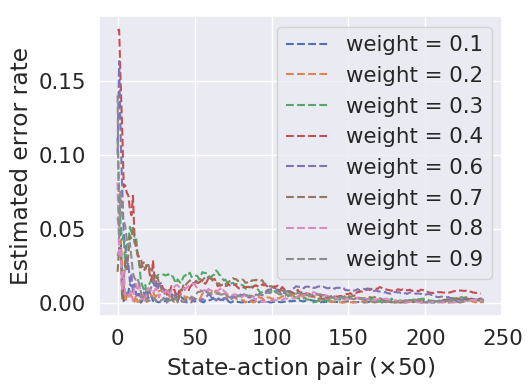}
    \caption{DQN}
\end{subfigure}
\begin{subfigure}[b]{.19\textwidth}
    \centering
    \includegraphics[width=\textwidth]{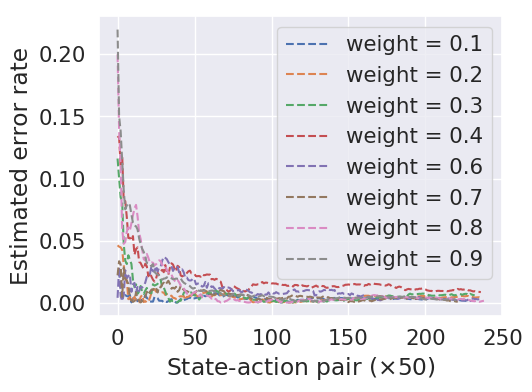}
    \caption{Dueling-DQN}
\end{subfigure}
\caption{Estimation analysis from five \textit{reward robust} RL algorithms (see Algorithm~\ref{alg:robust_qlearn}) on CartPole game. The upper figures are the convergence curves of estimated error rates (from 0.1 to 0.9), where the solid and dashed lines are ground truth and estimation, respectively; The lower figures are the absolute difference between the estimation and ground truth of confusion matrix $\rmC$ (normalized matrix norm).}
\label{fig:cartpole_err}
\end{figure*}

\begin{figure*}[htbp!]

\makebox[7pt]{\raisebox{40pt}{\rotatebox[origin=c]{90}{\scriptsize{$\omega=0.1$}}}}%
\begin{subfigure}[b]{.19\textwidth}
    \centering
    \includegraphics[width=\textwidth]{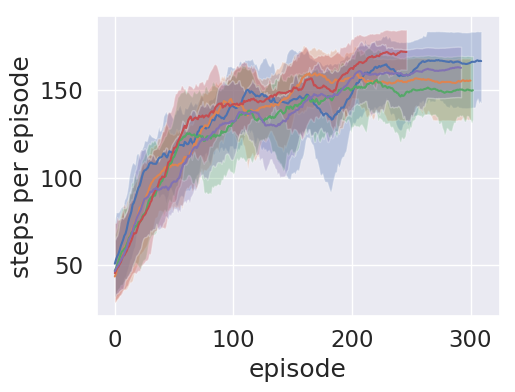}
\end{subfigure}
\begin{subfigure}[b]{.19\textwidth}
    \centering
    \includegraphics[width=\textwidth]{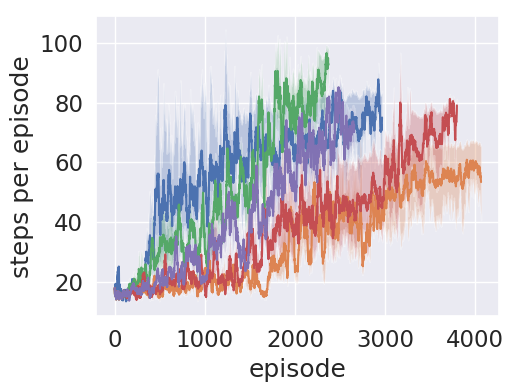}
\end{subfigure}
\begin{subfigure}[b]{.19\textwidth}
    \centering
    \includegraphics[width=\textwidth]{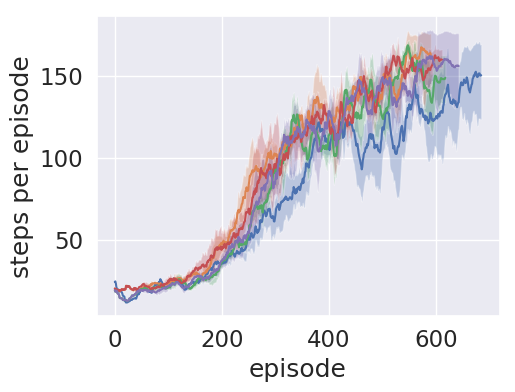}
\end{subfigure}
\begin{subfigure}[b]{.19\textwidth}
    \centering
    \includegraphics[width=\textwidth]{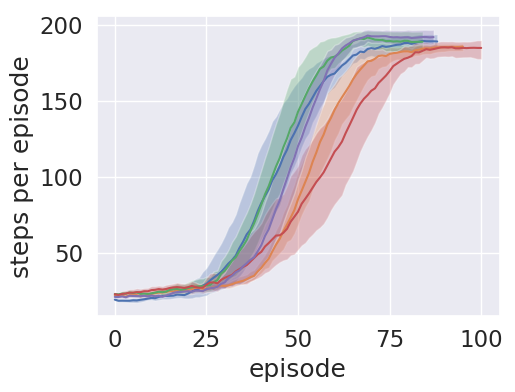}
\end{subfigure}
\begin{subfigure}[b]{.19\textwidth}
    \centering
    \includegraphics[width=\textwidth]{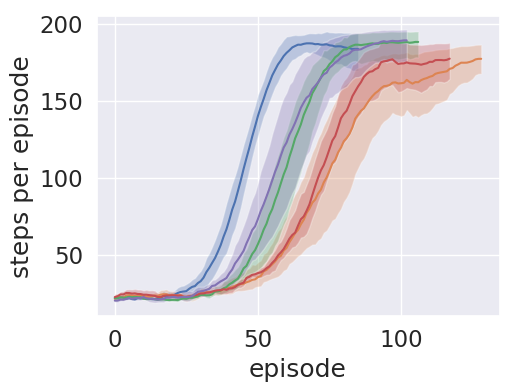}
\end{subfigure}

\makebox[7pt]{\raisebox{40pt}{\rotatebox[origin=c]{90}{\scriptsize{$\omega=0.3$}}}}%
\begin{subfigure}[b]{.19\textwidth}
    \centering
    \includegraphics[width=\textwidth]{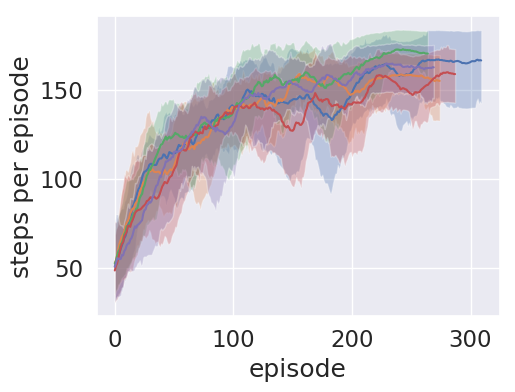}
\end{subfigure}
\begin{subfigure}[b]{.19\textwidth}
    \centering
    \includegraphics[width=\textwidth]{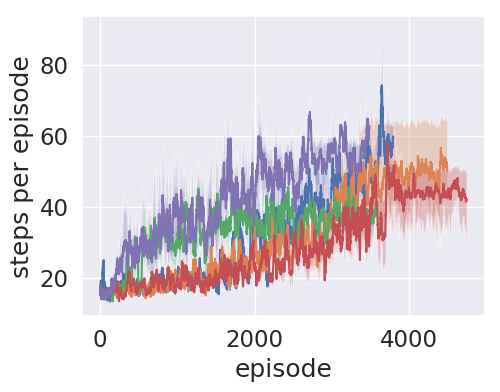}
\end{subfigure}
\begin{subfigure}[b]{.19\textwidth}
    \centering
    \includegraphics[width=\textwidth]{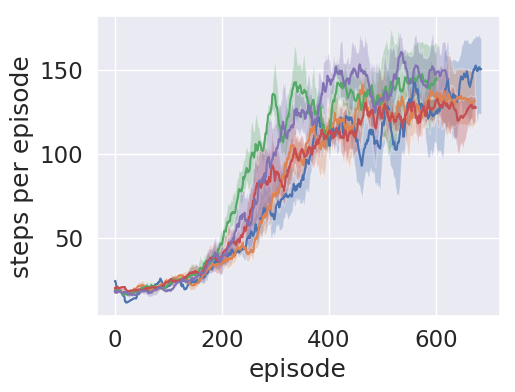}
\end{subfigure}
\begin{subfigure}[b]{.19\textwidth}
    \centering
    \includegraphics[width=\textwidth]{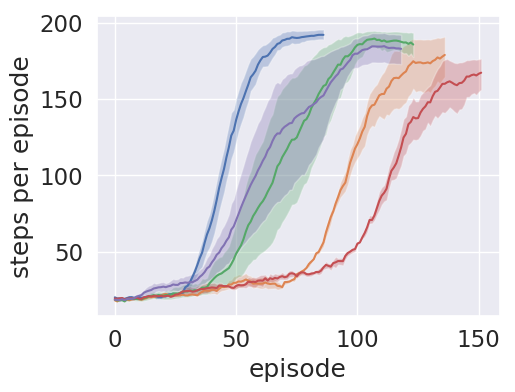}
\end{subfigure}
\begin{subfigure}[b]{.19\textwidth}
    \centering
    \includegraphics[width=\textwidth]{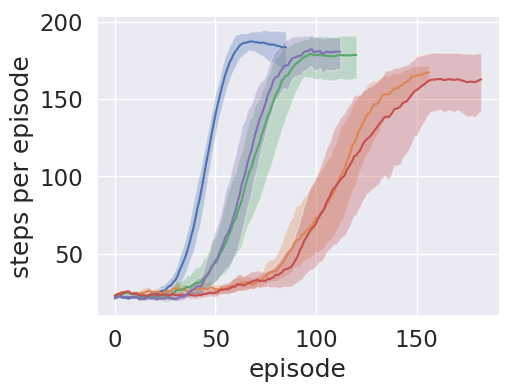}
\end{subfigure}

\makebox[7pt]{\raisebox{40pt}{\rotatebox[origin=c]{90}{\scriptsize{$\omega=0.7$}}}}%
\begin{subfigure}[b]{.19\textwidth}
    \centering
    \includegraphics[width=\textwidth]{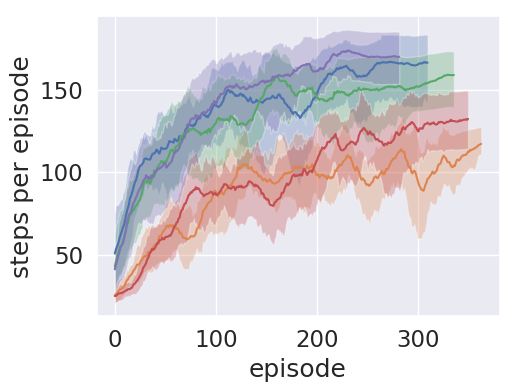}
\end{subfigure}
\begin{subfigure}[b]{.19\textwidth}
    \centering
    \includegraphics[width=\textwidth]{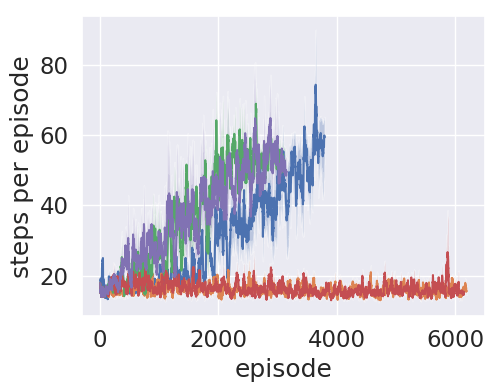}
\end{subfigure}
\begin{subfigure}[b]{.19\textwidth}
    \centering
    \includegraphics[width=\textwidth]{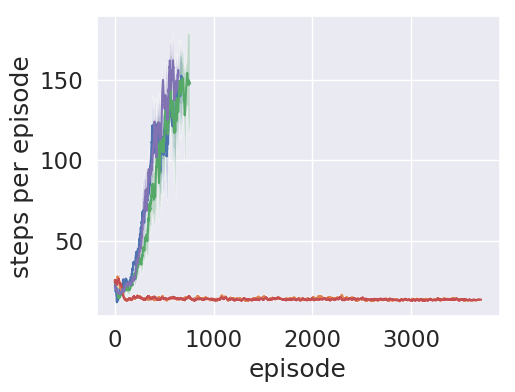}
\end{subfigure}
\begin{subfigure}[b]{.19\textwidth}
    \centering
    \includegraphics[width=\textwidth]{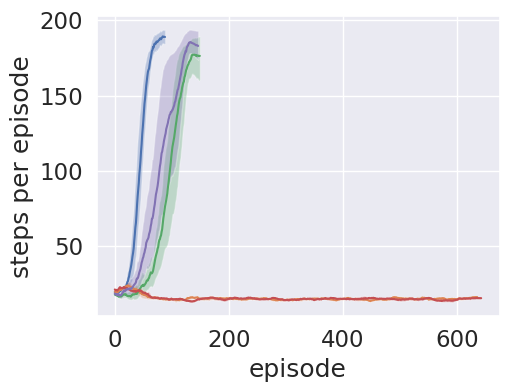}
\end{subfigure}
\begin{subfigure}[b]{.19\textwidth}
    \centering
    \includegraphics[width=\textwidth]{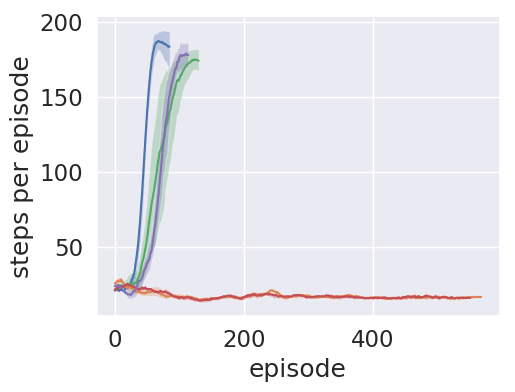}
\end{subfigure}

\makebox[7pt]{\raisebox{55pt}{\rotatebox[origin=c]{90}{\scriptsize{$\omega=0.9$}}}}%
\begin{subfigure}[b]{.19\textwidth}
    \centering
    \includegraphics[width=\textwidth]{cartpole_var/qlearn-cartpole-0-9.png}
    \caption{$Q$-Learning}
    \label{fig:cartpole_qlearn_var}
\end{subfigure}
\begin{subfigure}[b]{.19\textwidth}
    \centering
    \includegraphics[width=\textwidth]{cartpole_var/cem-cartpole-0-9.png}
    \caption{CEM}
    \label{fig:cartpole_cem_var}
\end{subfigure}
\begin{subfigure}[b]{.19\textwidth}
    \centering
    \includegraphics[width=\textwidth]{cartpole_var/sarsa-cartpole-0-9.png}
    \caption{SARSA}
    \label{fig:cartpole_sarsa_var}
\end{subfigure}
\begin{subfigure}[b]{.19\textwidth}
    \centering
    \includegraphics[width=\textwidth]{cartpole_var/dqn-cartpole-0-9.png}
    \caption{DQN}    
\end{subfigure}
\begin{subfigure}[b]{.19\textwidth}
    \centering
    \includegraphics[width=\textwidth]{cartpole_var/duel-dqn-cartpole-0-9.png}
    \caption{DDQN}
\end{subfigure}

\caption{Learning curves from five \textit{reward robust} RL algorithms (see Algorithm~\ref{alg:robust_qlearn}) on CartPole game with true rewards ($r$)~\crule[blue]{0.30cm}{0.30cm}, noisy rewards ($\tilde{r}$) ($\eta=1$)~\crule[orange]{0.30cm}{0.30cm}, sample-mean noisy rewards~($\eta=1$)~\crule[red]{0.30cm}{0.30cm}, estimated surrogate rewards ($\dot{r}$)~\crule[green]{0.30cm}{0.30cm} and sample-mean estimated surrogate rewards~\crule[blue!40!white]{0.30cm}{0.30cm}. Note that confusion matrices $\rmC$ are unknown to the agents here. From top to the bottom, the noise rates are 0.1, 0.3, 0.7 and 0.9. Here we repeated each experiment 10 times with different random seeds and plotted 10\% to 90\% percentile area with its mean highlighted. 
}
\label{fig:cartpole_var}
\end{figure*}

\begin{table*}[htbp!]
\centering
\caption{Average scores of various RL algorithms on CartPole with sample-mean reward using variance reduction technique (VRT), surrogate rewards (ours) and the combination of them (ours + VRT). Note that the reward confusion matrices are unknown to the agents and each experiment is repeated three times with different random seeds.}
\label{tab:scores-control-var}
\vskip 0.1in
\begin{tabular}{@{}c|c|ccccc@{}}
\toprule
\ Noise Rate\   & Reward & $Q$-Learn & CEM & SARSA & DQN & DDQN $\ $ \\ \midrule
\multicolumn{1}{c|}{\multirow{3}{*}{$\omega = 0.1$}} & VRT & 173.5 & \textbf{99.7}  & 167.3  & 181.9 & \textbf{187.4} \\ 
\multicolumn{1}{c|}{}                     &  ours ($\dot{r}$)  & 181.9 & 99.3  & 171.5 & \textbf{200.0} &  185.6             \\ 
\multicolumn{1}{c|}{}                     &  \cellcolor{Gray} ours + VRT   & \cellcolor{Gray}\textbf{184.5} & \cellcolor{Gray}98.2  & \cellcolor{Gray}\textbf{174.2} & \cellcolor{Gray}199.3 &  \cellcolor{Gray}186.5  \\ \midrule
\multicolumn{1}{c|}{\multirow{3}{*}{$\omega = 0.3$}} & VRT & 140.4 & 43.9 & 149.8 & 182.7 & 177.6   \\ 
\multicolumn{1}{c|}{}                     & ours ($\dot{r}$) & 161.1 & 81.8 &  159.6  & 186.7 & \textbf{200.0}   \\ 
\multicolumn{1}{c|}{}                     &  \cellcolor{Gray}ours + VRT   & \cellcolor{Gray}\textbf{161.6} & \cellcolor{Gray}\textbf{82.2}  & \cellcolor{Gray}\textbf{159.8} & \cellcolor{Gray}\textbf{188.4} & \cellcolor{Gray}198.2    \\ \midrule
\multicolumn{1}{c|}{\multirow{3}{*}{$\omega = 0.7$}} & VRT & 71.1 & 16.1 & 13.2 & 15.6 & 14.7 \\ 
\multicolumn{1}{c|}{}                     & ours ($\dot{r}$) & 172.1 & \textbf{83.0} & 174.4  & 189.3 & 191.3  \\ 
\multicolumn{1}{c|}{}                     &  \cellcolor{Gray}ours + VRT  & \cellcolor{Gray}\textbf{182.3} & \cellcolor{Gray}79.5  & \cellcolor{Gray}\textbf{178.9} & \cellcolor{Gray}\textbf{195.9} & \cellcolor{Gray}\textbf{194.2}    \\
 \bottomrule
\end{tabular}
\end{table*}

\section{Supplementary Experimental Results}
\label{appendix:exp_results}

\subsection{Visualizations on Control Games}
\label{appendix:control_figures}

\begin{figure*}[htbp!]
\centering
\makebox[7pt]{\raisebox{46pt}{\rotatebox[origin=c]{90}{\scriptsize{$\omega=0.1$}}}}%
\begin{subfigure}[b]{.24\textwidth}
    \centering
    \includegraphics[width=\textwidth]{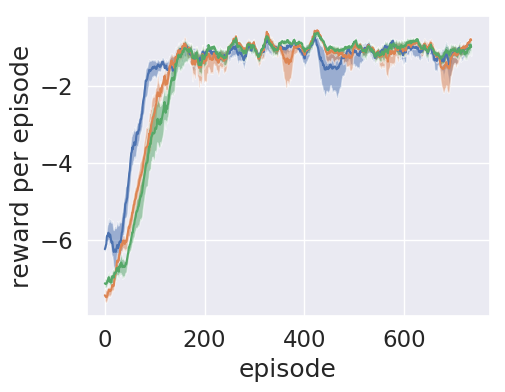}
\end{subfigure}
\begin{subfigure}[b]{.24\textwidth}
    \centering
    \includegraphics[width=\textwidth]{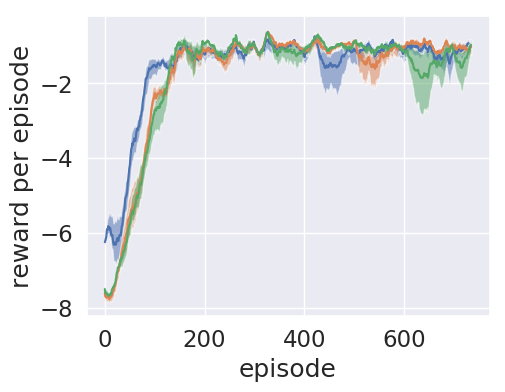}
\end{subfigure}
\begin{subfigure}[b]{.24\textwidth}
    \centering
    \includegraphics[width=\textwidth]{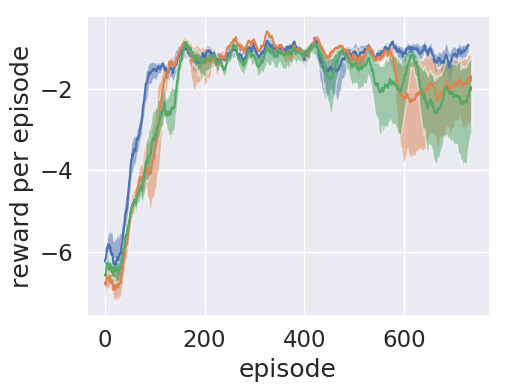}
\end{subfigure}
\begin{subfigure}[b]{.24\textwidth}
    \centering
    \includegraphics[width=\textwidth]{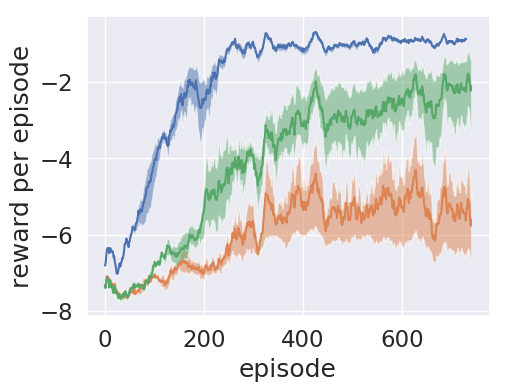}
\end{subfigure}
\makebox[7pt]{\raisebox{46pt}{\rotatebox[origin=c]{90}{\scriptsize{$\omega=0.3$}}}}%
\begin{subfigure}[b]{.24\textwidth}
    \centering
    \includegraphics[width=\textwidth]{pendulum3/ddpg-pendulum-0-3-anti_iden.png}
\end{subfigure}
\begin{subfigure}[b]{.24\textwidth}
    \centering
    \includegraphics[width=\textwidth]{pendulum3/ddpg-pendulum-0-3-norm_one.png}
\end{subfigure}
\begin{subfigure}[b]{.24\textwidth}
    \centering
    \includegraphics[width=\textwidth]{pendulum3/ddpg-pendulum-0-3-norm_all.png}
\end{subfigure}
\begin{subfigure}[b]{.24\textwidth}
    \centering
    \includegraphics[width=\textwidth]{pendulum3/naf-pendulum-0-3-norm_all.png}
\end{subfigure}
\makebox[7pt]{\raisebox{46pt}{\rotatebox[origin=c]{90}{\scriptsize{$\omega=0.7$}}}}%
\begin{subfigure}[b]{.24\textwidth}
    \centering
    \includegraphics[width=\textwidth]{pendulum3/ddpg-pendulum-0-7-anti_iden.png}
\end{subfigure}
\begin{subfigure}[b]{.24\textwidth}
    \centering
    \includegraphics[width=\textwidth]{pendulum3/ddpg-pendulum-0-7-norm_one.png}
\end{subfigure}
\begin{subfigure}[b]{.24\textwidth}
    \centering
    \includegraphics[width=\textwidth]{pendulum3/ddpg-pendulum-0-7-norm_all.png}
\end{subfigure}
\begin{subfigure}[b]{.24\textwidth}
    \centering
    \includegraphics[width=\textwidth]{pendulum3/naf-pendulum-0-7-norm_all.png}
\end{subfigure}
\makebox[7pt]{\raisebox{62pt}{\rotatebox[origin=c]{90}{\scriptsize{$\omega=0.9$}}}}%
\begin{subfigure}[b]{.24\textwidth}
    \centering
    \includegraphics[width=\textwidth]{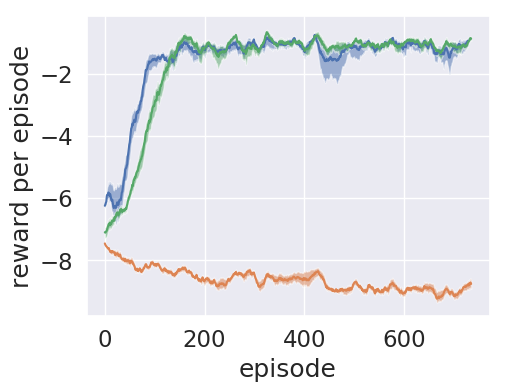}
    \caption{DDPG (symmetric)}
    \label{fig:pendulum_a_full}
\end{subfigure}
\begin{subfigure}[b]{.24\textwidth}
    \centering
    \includegraphics[width=\textwidth]{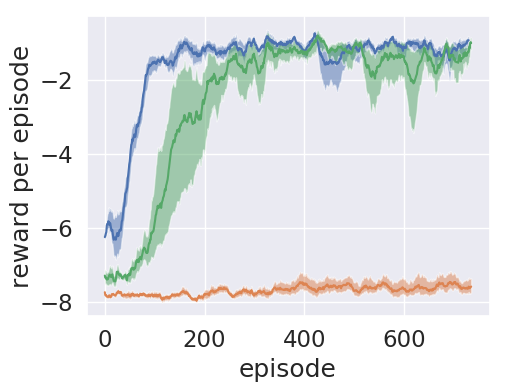}
    \caption{DDPG (rand-one)}
    \label{fig:pendulum_b_full}
\end{subfigure}
\begin{subfigure}[b]{.24\textwidth}
    \centering
    \includegraphics[width=\textwidth]{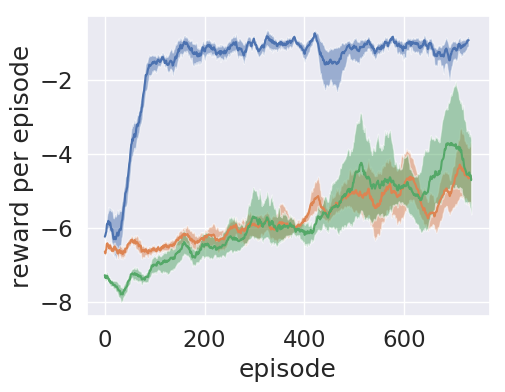}
    \caption{DDPG (rand-all)}
    \label{fig:pendulum_c_full}
\end{subfigure}
\begin{subfigure}[b]{.24\textwidth}
    \centering
    \includegraphics[width=\textwidth]{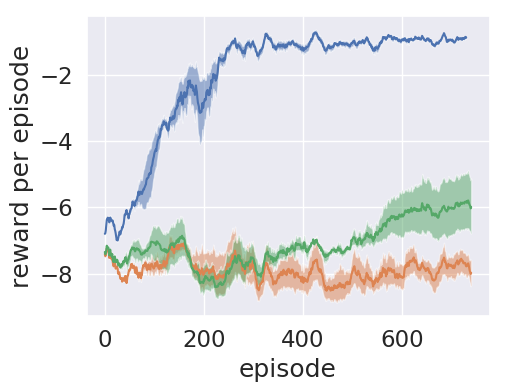}
    \caption{NAF (rand-all)}
    \label{fig:pendulum_d_full}
\end{subfigure}
\caption{Complete learning curves from DDPG and NAF on Pendulum game with true rewards ($r$) ~\crule[blue]{0.30cm}{0.30cm}, noisy rewards ($\tilde{r}$)~\crule[orange]{0.30cm}{0.30cm} and surrogate rewards ($\hat{r}$) ($\eta=1$)~\crule[green]{0.30cm}{0.30cm}. Both symmetric and asymmetric noise are conduced in the experiments. From top to the bottom, the noise rates are 0.1, 0.3, 0.7 and 0.9, respectively. Here we repeated each experiment 6 times with different random seeds and plotted 10\% to 90\% percentile area with its mean highlighted.}
\label{fig:pendulum_full}
\end{figure*}

\subsection{Visualizations on Atari Games\protect\footnote{For the clarity purpose, we remove the learning curves (blue ones in previous figures) with true rewards except for Pong-v4 game.}}
\label{appendix:atari_results}
\begin{figure*}[htbp!]
\centering
\begin{subfigure}[b]{0.98\textwidth}
    \includegraphics[width=\textwidth]{imgs/Pong_anti_iden_.png}
    \caption{Pong (\textit{sysmetric})}    
\end{subfigure}
\begin{subfigure}[b]{0.98\textwidth}
    \includegraphics[width=\textwidth]{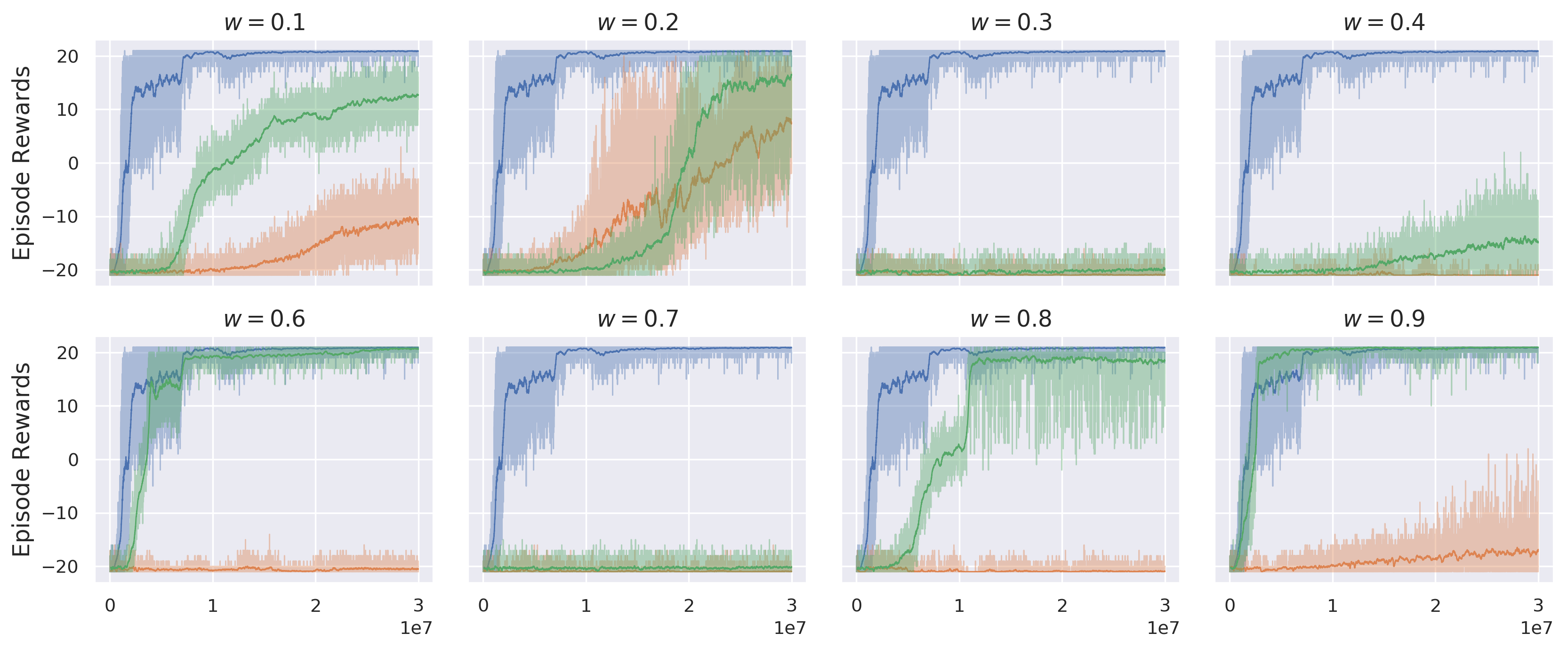}
    \caption{Pong (\textit{rand-one})}    
\end{subfigure}
\begin{subfigure}[b]{0.98\textwidth}
    \includegraphics[width=\textwidth]{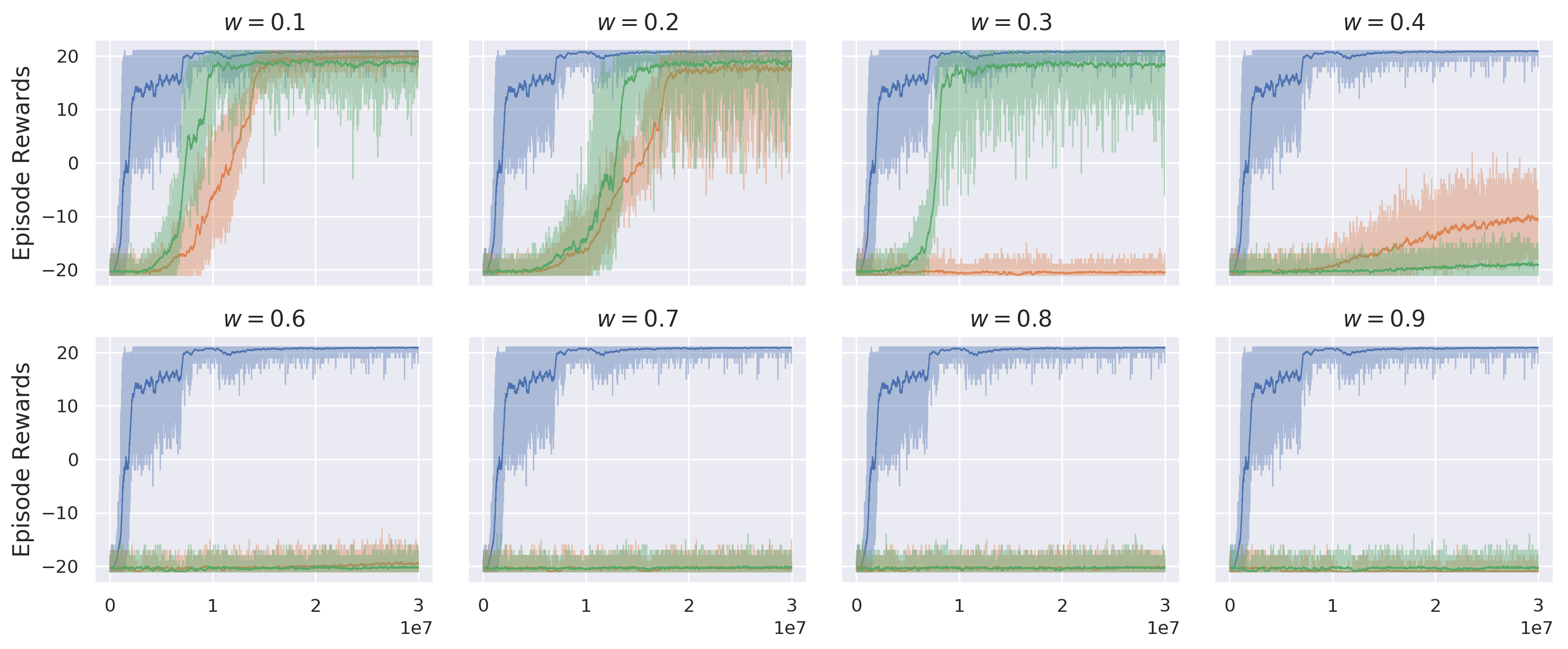}
    \caption{Pong (\textit{rand-all})}  
\end{subfigure}
\end{figure*}

\begin{figure*}[htbp!]\ContinuedFloat
\centering
\begin{subfigure}[b]{0.98\textwidth}
    \includegraphics[width=\textwidth]{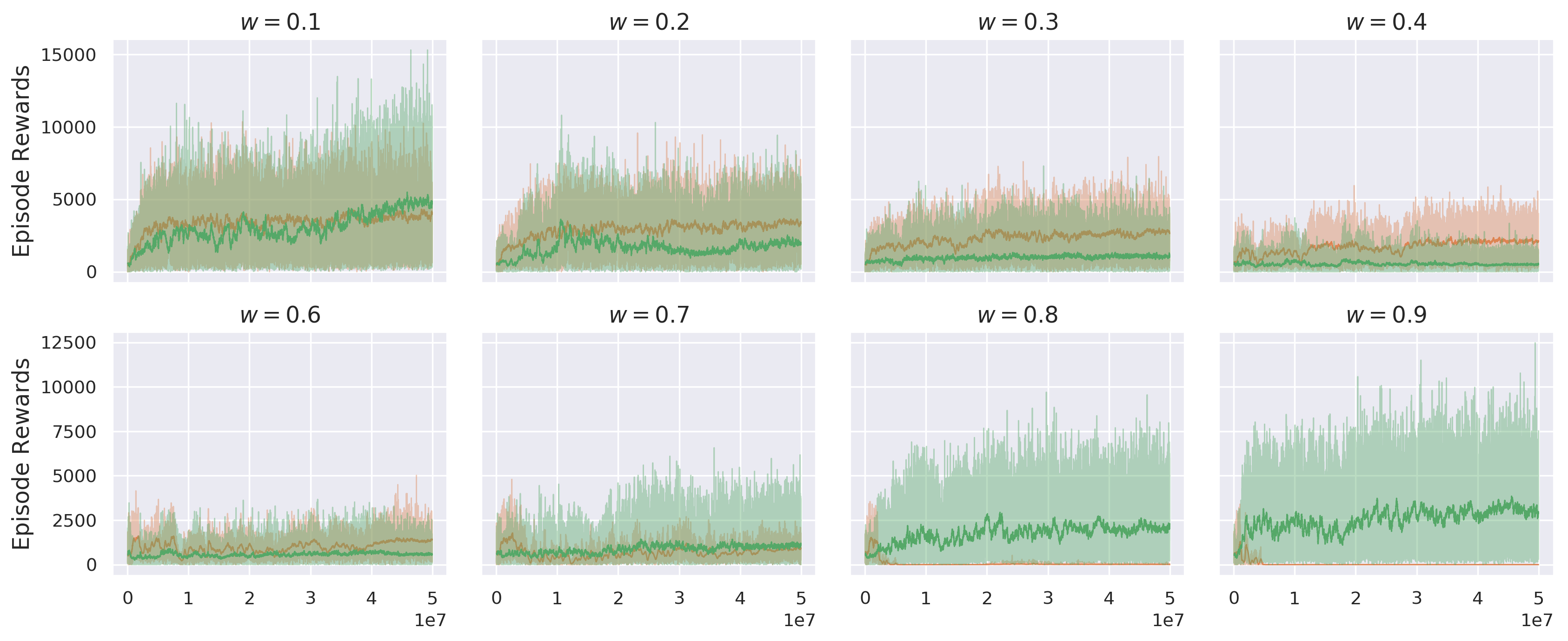}
    \caption{AirRaid (sysmetric)}  
\end{subfigure}
\begin{subfigure}[b]{0.98\textwidth}
    \includegraphics[width=\textwidth]{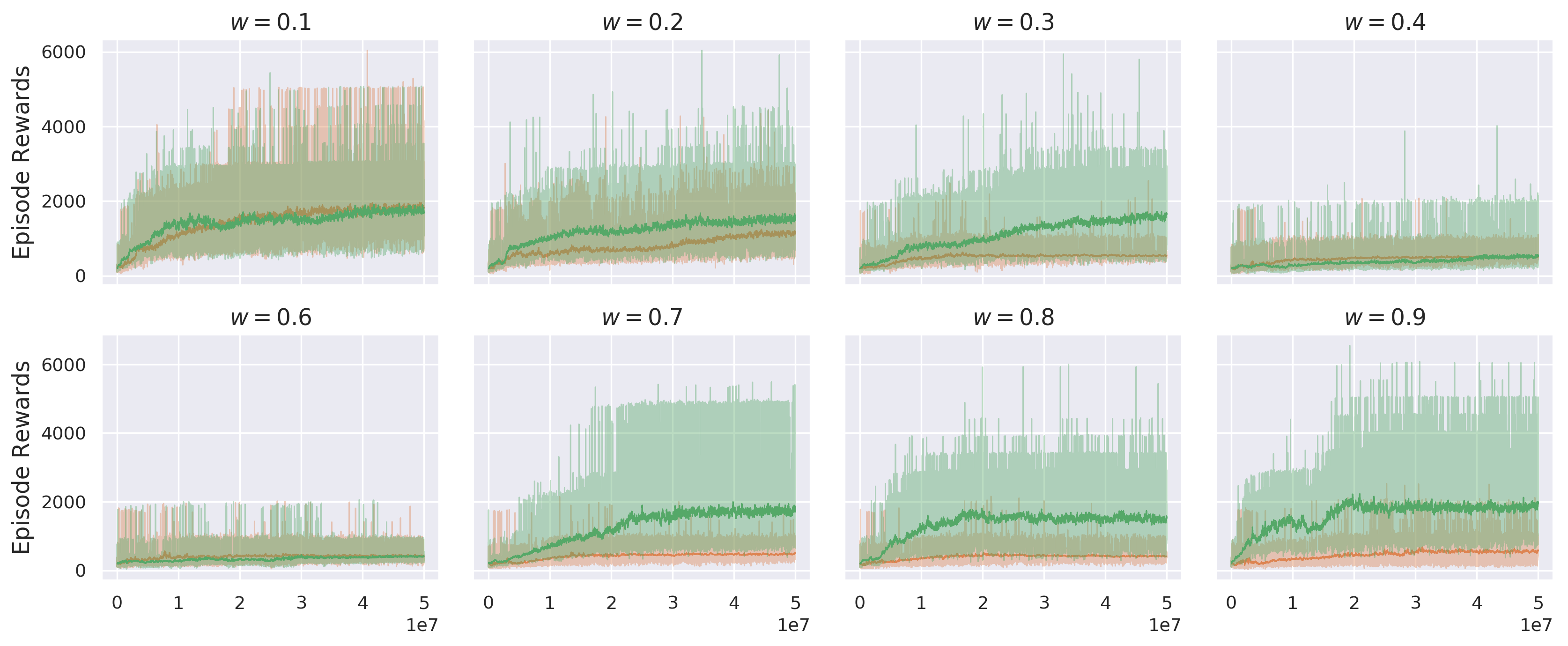}
    \caption{Alien (sysmetric)}  
\end{subfigure}
\begin{subfigure}[b]{0.98\textwidth}
    \includegraphics[width=\textwidth]{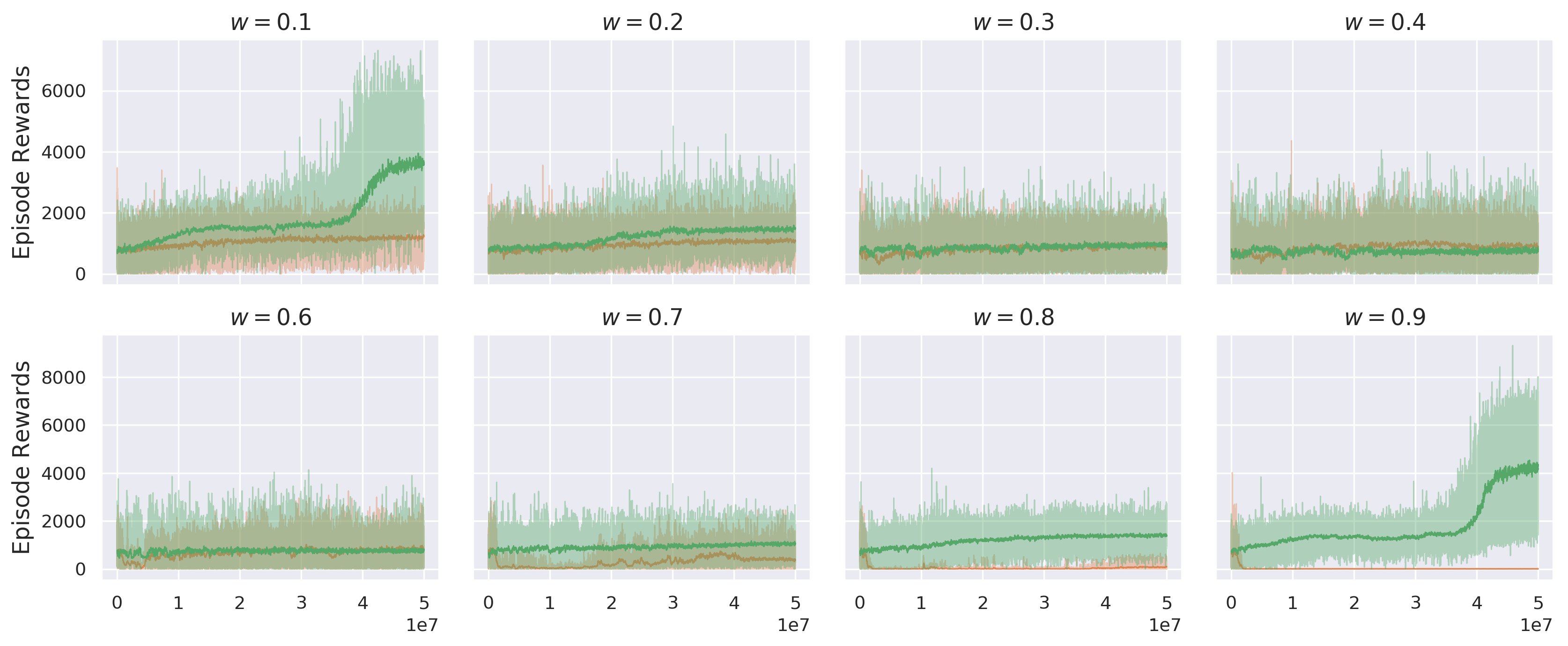}
    \caption{Carnival (sysmetric)}  
\end{subfigure}
\end{figure*}

\begin{figure*}[htb]\ContinuedFloat
\centering
\begin{subfigure}[b]{0.98\textwidth}
    \includegraphics[width=\textwidth]{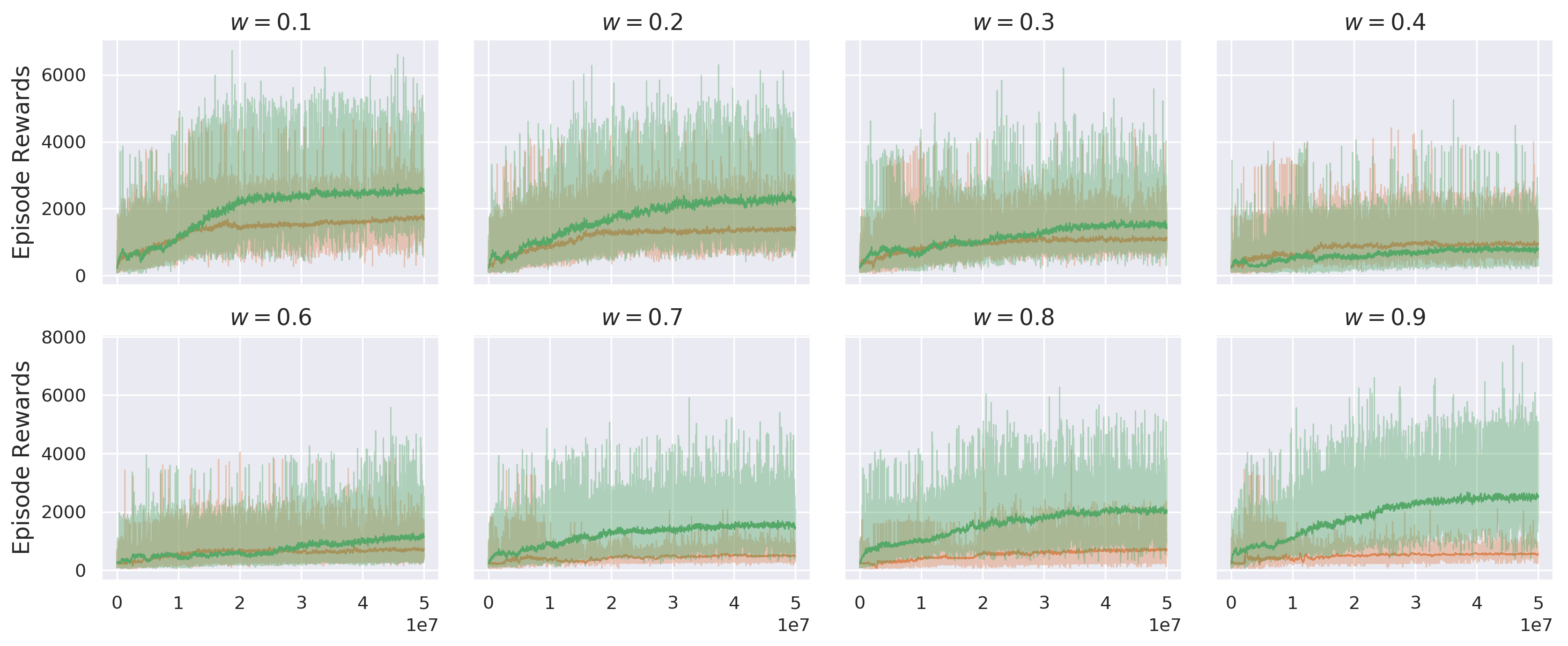}
    \caption{MsPacman (sysmetric)}  
\end{subfigure}
\begin{subfigure}[b]{0.98\textwidth}
    \includegraphics[width=\textwidth]{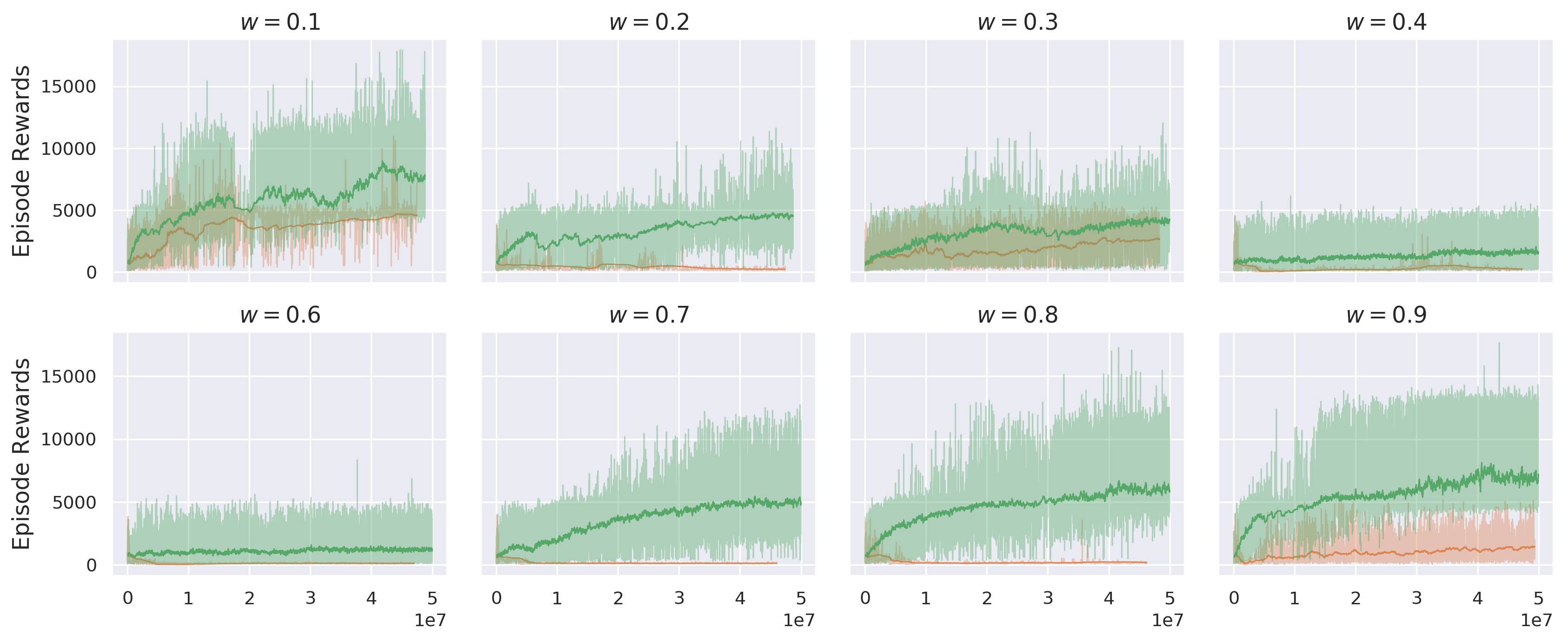}
    \caption{Phoenix (sysmetric)}  
\end{subfigure}
\begin{subfigure}[b]{0.98\textwidth}
    \includegraphics[width=\textwidth]{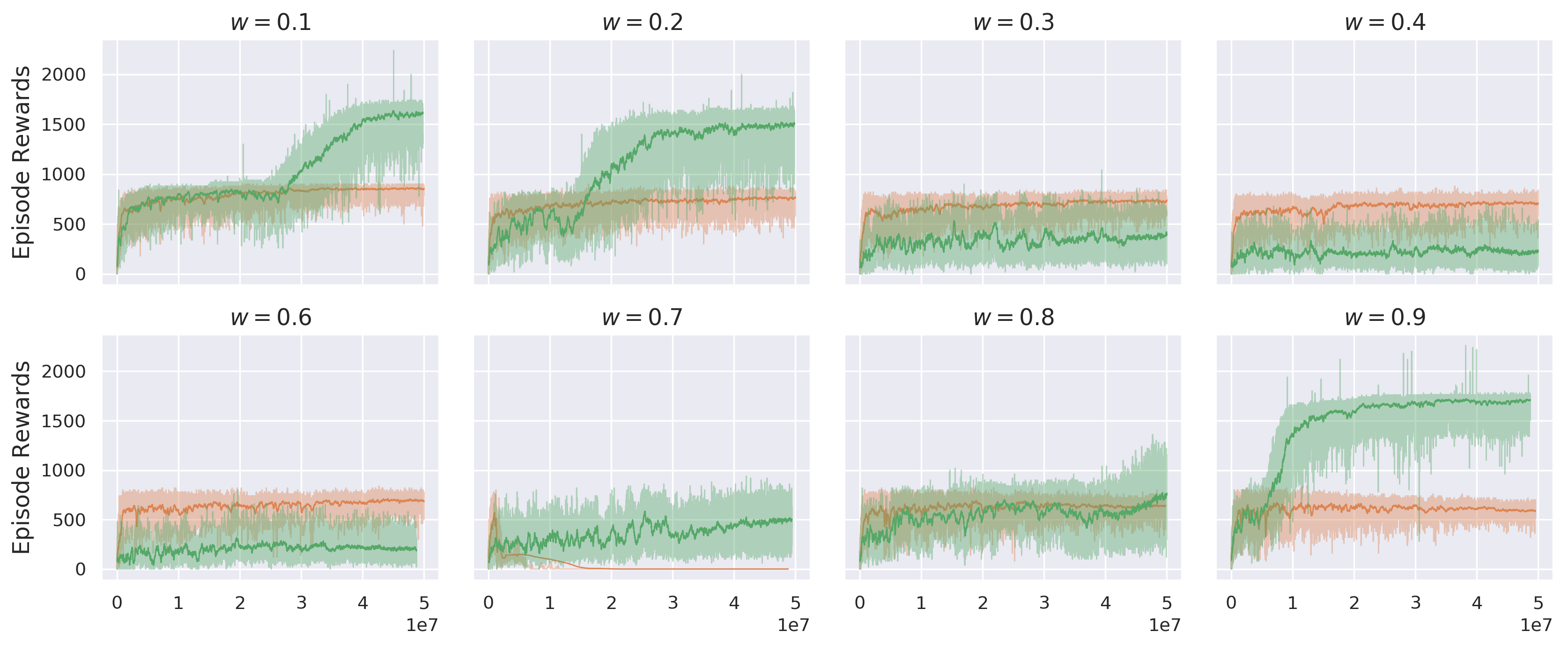}
    \caption{Seaquest (sysmetric)}  
\end{subfigure}
\caption{Complete learning curves from PPO on seven Atari game with true rewards ($r$)~\crule[blue]{0.30cm}{0.30cm}, noisy rewards ($\tilde{r}$)~\crule[orange]{0.30cm}{0.30cm} and surrogate rewards~($\eta = 1$) ($\hat{r}$)~\crule[green]{0.30cm}{0.30cm}. The noise rates increase from 0.1 to 0.9, with a step of 0.1.}
\label{fig:atari_full}
\end{figure*}

\end{document}